%% file: arxiv_version.tex
\documentclass[11pt]{article} 

\usepackage{bbm}
\usepackage{amsmath}
\usepackage{amssymb}
\usepackage[framemethod=TikZ]{mdframed}
\usepackage{mathtools}
\usepackage{amsthm}
\usepackage{natbib}
\usepackage{letltxmacro}
\usepackage{authblk}
\usepackage{centernot}
\usepackage{thmtools}
\usepackage{xcolor}
\usepackage{subfig}

\usepackage[letterpaper,top=2cm,bottom=2cm,left=3cm,right=3cm,marginparwidth=1.75cm]{geometry}

\newcount\Comments
\usepackage{color}
\definecolor{darkgreen}{rgb}{0,0.5,0}
\definecolor{purple}{rgb}{1,0,1}
\newcommand{\kibitz}[2]{\ifnum\Comments=1\textcolor{#1}{#2}\fi}

\usepackage{enumitem}
\setlist[itemize]{noitemsep, topsep=2pt}
\setlist[enumerate]{noitemsep, topsep=2pt}

\title{On Generation in Metric Spaces}
\usepackage{times}

\author{Jiaxun Li, Vinod Raman, Ambuj Tewari}
\affil{Department of Statistics, University of Michigan}
\affil{\texttt{\{jasonli, vkraman, tewaria\}@umich.edu}}
\date{\today}

\usepackage{algorithm}
\usepackage{dsfont}

\PassOptionsToPackage{algo2e,ruled, linesnumbered}{algorithm2e}
\RequirePackage{algorithm2e}

\newcommand{\Gcal}{\mathcal{G}}
\newcommand{\Hcal}{\mathcal{H}}
\newcommand{\Ical}{\mathcal{I}}

\newcommand{\Ocal}{\mathcal{O}}

\newcommand{\Xcal}{\mathcal{X}}

\newcommand{\vep}{\varepsilon}

\DeclareMathOperator*{\argmax}{arg\,max}

\newcommand{\naturals}{\mathbb{N}}

\newtheorem{theorem}{Theorem}[section]
\newtheorem{proposition}[theorem]{Proposition}
\newtheorem{definition}{Definition}[section]
\newtheorem{lemma}[theorem]{Lemma}

\newtheorem{corollary}[theorem]{Corollary}
\newtheorem{example}[theorem]{Example}

\newtheorem{remark}{Remark}[section]
\newtheorem*{theorem*}{Theorem}

\begin{document}

\maketitle

\begin{abstract}%

We study generation in separable metric instance spaces. We extend the language generation framework from \cite{kleinberg2024language} beyond countable domains by defining novelty through metric separation and allowing asymmetric novelty parameters for the adversary and the generator. We introduce the $(\varepsilon,\varepsilon')$-closure dimension, a scale-sensitive analogue of closure dimension, which yields characterizations of uniform and non-uniform generatability and a sufficient condition for generation in the limit. Along the way, we identify a sharp geometric contrast. Namely, in doubling spaces, including all finite-dimensional normed spaces, generatability is stable across novelty scales and invariant under equivalent metrics. In general metric spaces, however, generatability can be highly scale-sensitive and metric-dependent; even in the natural infinite-dimensional Hilbert space $\ell^2$, all notions of generation may fail abruptly as the novelty parameters vary.

\end{abstract}

\input{colt2026/introduction_colt}

\input{colt2026/prelim_colt}

\input{colt2026/characterization_colt}

\section{Properties of Generatability}

\label{sec:diffeps}


Beyond the structure of the hypothesis class itself, generatability depends on two additional parameters: the novelty thresholds with respect to the adversary and the generator, denoted by $\varepsilon$ and $\varepsilon'$, respectively. In this section, we study how generatability varies as these novelty parameters change. We also investigate generatability under different metrics defined on the same underlying space.

\subsection{Generatability is Well Behaved in Doubling Spaces}
\label{sec:doubling}

We first establish a robustness result for generatability in doubling spaces. A doubling space is a metric space in which balls at one scale can be covered by finitely many balls at any smaller scale. In such spaces, we show that, both uniform and non-uniform generatability are invariant across all choices of scale parameters $\vep$ and $\vep'$, and are preserved under any change to an equivalent metric.


\begin{definition}
\label{def:doublingspace}
A metric space $\Xcal$ with metric $\rho$ is said to be \emph{doubling} if there exists a constant $M>0$ such that for any $x\in\Xcal$ and any $r>0$, the ball $ B(x,r)$
can be covered by at most $M$ balls of radius $r/2$.
\end{definition}

We first show a stronger result than Lemma~\ref{lem:uusdiffeps}: in doubling spaces, the $r$-UUS property is invariant to the choice of the scale parameter $r$. The proof is deferred to Appendix~\ref{sec:appendix-other} for completeness.
\begin{theorem}
    \label{thm:doubling-UUS}
    Let $\Xcal$ be a doubling metric space with metric $\rho$. Suppose $\Hcal \subseteq \{0,1\}^\Xcal$ is a class satisfying $r$-UUS property for some $r > 0$. Then it also satisfies the $r'$-UUS property for any $r' > 0$. In this case, we simply say that $\Hcal$ has the UUS property.
\end{theorem}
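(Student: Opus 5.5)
The plan is to read the $r$-UUS property as saying that the support of every $h\in\Hcal$ contains an infinite $r$-separated set, i.e.\ infinitely many points with pairwise distances at least $r$. Lemma~\ref{lem:uusdiffeps} already gives monotonicity in one direction: an $r$-separated set is also $r'$-separated for every $r'\le r$. So $r$-UUS implies $r'$-UUS for all $r'\le r$, and it remains to treat $r'>r$. Here doubling is what we need: it will force an infinite $r$-separated set to be unbounded, and an unbounded set contains separated subsets at every scale.

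\textbf{Step 1 (an infinite separated set in a doubling space is unbounded).} Fix $h\in\Hcal$ and an infinite $r$-separated set $S\subseteq\{x: h(x)=1\}$. Suppose, for contradiction, that $S\subseteq B(x_0,R)$ for some $x_0$ and $R$. Iterating Definition~\ref{def:doublingspace} $k$ times covers $B(x_0,R)$ by at most $M^k$ balls of radius $R/2^k$. Choose $k$ with $R/2^k<r/2$. Each small ball then has diameter strictly less than $r$, so it contains at most one point of $S$. This gives $|S|\le M^k<\infty$, a contradiction. Hence $S$ is unbounded.

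\textbf{Step 2 (greedy extraction at scale $r'$).} Build $s_1,s_2,\dots\in S$ inductively. Having chosen $s_1,\dots,s_n$, the set $\bigcup_{i\le n}B(s_i,r')$ is bounded. Since $S$ is unbounded, some $s_{n+1}\in S$ lies outside this union, so $\rho(s_{n+1},s_i)\ge r'$ for all $i\le n$. The resulting infinite sequence is $r'$-separated and lies in the support of $h$. If the UUS definition carries any additional requirement, for instance about the intersection of supports over a version space or a finite set of hypotheses, the same extraction applies verbatim to that set, since the argument uses only that the set contains an infinite $r$-separated subset.

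The main obstacle is the strict-inequality bookkeeping in Step 1. Depending on whether the paper's balls are open or closed, a ball of radius exactly $r/2$ could contain two points at distance exactly $r$. This is avoided by choosing $k$ so that $R/2^k<r/2$ strictly. Also, the iterated doubling constant is $M^k$; it is not uniform in $R$, but it is finite, and finiteness is all that is needed. Combining both directions gives $r'$-UUS for every $r'>0$, which justifies writing simply ``UUS''.
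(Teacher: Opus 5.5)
Your core idea is sound: iterating the doubling property shows that a support with infinite covering number must be unbounded. However, the last inference does not follow as written. You read $r$-UUS as ``the support contains an infinite $r$-separated set'', and in Step 2 you conclude $r'$-UUS only because you have produced an infinite $r'$-separated subset of $\operatorname{supp}(h)$. That reading is only a one-way implication. From $N(r;A,\rho)=\infty$ a greedy choice does give infinitely many points of $A$ at pairwise distance $>r$. The converse fails, because a closed ball of radius $r$ can contain points at mutual distance up to $2r$. For example, in $\ell^2$ the points $r e_k$ are pairwise $\sqrt{2}\,r$ apart, yet they all lie in $B(\mathbf 0,r)$, so their $r$-covering number is $1$. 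An infinite $r'$-separated subset therefore does not by itself certify $N(r';\operatorname{supp}(h),\rho)=\infty$. Your separated-set justification for the case $r'\le r$ has the same flaw, but there you can simply cite Lemma~\ref{lem:uusdiffeps}, which is proved directly with covering numbers.

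The repair is one line, and it makes Step 2 unnecessary. Step 1 shows that $\operatorname{supp}(h)\supseteq S$ is unbounded. Any finite union $B(\{\theta_i\}_{i=1}^m,r')$ is bounded, so no finite $r'$-cover exists and $N(r';\operatorname{supp}(h),\rho)=\infty$ for every $r'>0$. Alternatively, you can extract points with separation strictly greater than $2r'$ in Step 2, so that each closed $r'$-ball contains at most one of them. With this fix your proof is valid. It differs somewhat from the paper's proof, which only asserts that in a doubling space covering numbers are infinite at one scale iff they are infinite at every scale. Concretely, for $r'>r$, covering each $r'$-ball by $M^k$ balls of radius $r'/2^k\le r$ gives $N(r;A,\rho)\le M^k N(r';A,\rho)$. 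Your route through unboundedness gives a sharper structural statement: in a doubling space, $r$-UUS for some (equivalently, every) $r>0$ is the same as every support being unbounded.
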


We are now ready to present robustness results for uniform and non-uniform generatability in doubling spaces.
\begin{theorem}\label{thm:doubling-eps-uniformgen}
Let $\Xcal$ be a doubling metric space with metric $\rho$. 
Let $\vep,\vep',\delta,\delta'>0$ satisfy $0<\delta<\vep$ and $0<\delta'<\vep'$. 
Suppose $\Hcal\subseteq \{0,1\}^\Xcal$ satisfies the UUS property. 
Then $\Hcal$ is $(\vep,\vep')$-uniformly (resp.\ non-uniformly) generatable if and only if it is $(\delta,\delta')$-uniformly (resp.\ non-uniformly) generatable.
\end{theorem}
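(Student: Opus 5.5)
The plan is to go through the $(\vep,\vep')$-closure dimension characterization from Section~\ref{sec:characterization}. I take that characterization in the following form: under the UUS property, $\Hcal$ is $(\vep,\vep')$-uniformly generatable if and only if its $(\vep,\vep')$-closure dimension is finite. Likewise, it is $(\vep,\vep')$-non-uniformly generatable if and only if $\Hcal$ is a countable union of subclasses, each of finite $(\vep,\vep')$-closure dimension. It then suffices to prove a comparison lemma. For any two pairs of scales, finiteness of the closure dimension at one pair implies finiteness at the other, with the bound depending only on the doubling constant $M$ and the ratios of the scales. The same lemma, applied subclass by subclass, handles the non-uniform case, because a countable union decomposition with finite dimensions at one scale yields the same decomposition with finite dimensions at the other.

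I would split the comparison into the two parameters separately, since monotonicity handles two of the four directions. Enlarging $\vep$ makes the adversary's examples more separated, so fewer sequences qualify. Shrinking $\vep'$ makes it easier for the generator to be novel. So $(\vep,\vep')$-generatability implies $(\tilde\vep,\tilde\vep')$-generatability whenever $\tilde\vep\ge\vep$ and $\tilde\vep'\le\vep'$. This can be checked directly from the definitions, or via Lemma~\ref{lem:uusdiffeps} and its analogue for the closure dimension. The nontrivial directions are two: shrinking the adversary scale from $\vep$ to $\delta$, and enlarging the generator scale from $\delta'$ to $\vep'$.

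For the adversary scale, I would argue via packing. Take a $\delta$-separated set $x_1,\dots,x_n$ whose version space has a small $(\delta,\cdot)$-closure. I want to extract an $\vep$-separated subset of size at least $n/C$, where $C$ depends only on $M$ and $\vep/\delta$. Greedy extraction alone is not enough, because arbitrarily many $\delta$-separated points could cluster inside one $\vep$-ball in a general space. In a doubling space, however, a ball of radius $\vep$ contains at most $M^{\lceil\log_2(2\vep/\delta)\rceil}$ points that are $\delta$-separated. Greedy selection therefore keeps a $1/C$ fraction. The subset has a larger version space, so its closure is contained in that of the original set, and the closure property transfers. Consequently a dimension bound $d$ at $(\vep,\cdot)$ gives a bound of order $Cd$ at $(\delta,\cdot)$.

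The main obstacle is the generator scale. The generator must output points at distance at least $\vep'$ from the observed data instead of only $\delta'$. Here I would use the UUS property at scale $\vep'$: every $h\in\Hcal$ has infinitely many $\vep'$-separated positive points. Suppose the $\delta'$-closure (the intersection of supports, minus the $\delta'$-neighborhood of the data) is infinite. I need to show that the $\vep'$-closure is also nonempty, possibly after feeding more data. The first approach I would try is a covering argument. By doubling, the $\vep'$-neighborhood of a finite sample is covered by finitely many $\delta'/2$-balls. An infinite $\delta'$-separated subset of the closure therefore cannot lie entirely inside it, which yields a point at distance at least $\vep'$ from the data, provided the closure contains infinitely many $\delta'$-separated points. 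If this fails, I would adjust the closure dimension so that it measures the size of the closure through packing numbers, and then invoke Theorem~\ref{thm:doubling-UUS} to move between scales. This is the step where I would consult the precise definition in Section~\ref{sec:characterization}.
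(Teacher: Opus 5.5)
Your proposal is correct and is essentially the paper's argument. You reduce, via Theorems~\ref{thm:unifgen} and~\ref{thm:nonunifgen}, to showing $\operatorname{C}_\vep^{\vep'}(\Hcal)<\infty \iff \operatorname{C}_\delta^{\delta'}(\Hcal)<\infty$, and then use doubling to get $N(\vep;A,\rho)\le N(\delta;A,\rho)\le C\,N(\vep;A,\rho)$ for the revealed sample and $N(\vep';A,\rho)=\infty \iff N(\delta';A,\rho)=\infty$ for the closure. Your packing extraction reaches the first fact more laboriously, and you can keep the same sequence instead of passing to a subset. Your generator-scale covering argument, applied to an arbitrary finite set of centers rather than only the data, already proves the second fact, because the closure-dimension condition is exactly $N(\cdot\,;\langle x_1,\dots,x_t\rangle_{\Hcal},\rho)<\infty$. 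The appeal to the UUS property and the fallback of redefining the dimension are therefore unnecessary.
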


\begin{proof}
     By Theorem \ref{thm:unifgen} and \ref{thm:nonunifgen}, we only need to show that $\operatorname{C}_{\vep}^{\vep'}(\Hcal)<\infty$ if and only if $\operatorname{C}_{\delta'}^{\delta'}(\Hcal)<\infty$. Since $\Xcal$ is a doubling space, there exists $C$ such that for any $x\in\Xcal$, $B(x,\vep)$ can be covered by the union of at most $C$ balls of radius $\delta$ and $B(x,\vep')$ can be covered by the union of at most $C$ balls of radius $\delta'$. Then under our assumption, for any subset $A\subseteq \Xcal$,
     \[
    N(\vep;A,\rho)=\infty\iff N(\delta;A,\rho)=\infty.
    \]
    And if $N(\vep;A,\rho)<\infty$, we have 
    \[
    N(\vep;A,\rho)\le N(\delta;A,\rho)\le CN(\vep;A,\rho).
    \]
    Comparing with the definition of the closure dimension completes the proof.
\end{proof}

It is well known that Euclidean space $\mathbb{R}^d$ is a doubling metric space. As a result, we have the following corollary:
\begin{corollary}
In any finite-dimensional normed vector space $\Xcal$, if $\Hcal\subseteq \{0,1\}^\Xcal$ satisfies the UUS property, then for any $\vep,\vep',\delta,\delta'>0$, $\Hcal$ is $(\vep,\vep')$-uniformly (resp.\ non-uniformly) generatable if and only if it is $(\delta,\delta')$-uniformly (resp.\ non-uniformly) generatable.
\end{corollary}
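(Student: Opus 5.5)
The corollary follows directly from Theorem~\ref{thm:doubling-eps-uniformgen}. The only work is to show that every finite-dimensional normed vector space $\Xcal$, with the metric $\rho(x,y)=\|x-y\|$, is a doubling space in the sense of Definition~\ref{def:doublingspace}. Once that is done, applying the theorem to $\Xcal$ gives the equivalence for all $\vep,\vep',\delta,\delta'>0$. The constraints $\delta<\vep$ and $\delta'<\vep'$ in the theorem are not needed in the corollary. Given two arbitrary pairs $(\vep,\vep')$ and $(\delta,\delta')$, choose $\eta<\min(\vep,\delta)$ and $\eta'<\min(\vep',\delta')$, and apply the theorem to compare each pair with $(\eta,\eta')$. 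The UUS property is scale-free by Theorem~\ref{thm:doubling-UUS}, so the hypothesis is well defined.

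To prove doubling, first reduce to $\mathbb{R}^d$ with the Euclidean norm. Let $d=\dim\Xcal$ and fix a linear isomorphism $T:\mathbb{R}^d\to\Xcal$. All norms on a finite-dimensional space are equivalent, so there are constants $0<a\le b$ with $a|v|\le \|Tv\|\le b|v|$ for all $v$, where $|\cdot|$ is the Euclidean norm. A ball $B_{\Xcal}(x,r)$ is then contained in $T(B_{\mathbb{R}^d}(T^{-1}x, r/a))$, and $T(B_{\mathbb{R}^d}(y,s))\subseteq B_\Xcal(Ty, bs)$.

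It then suffices to show that any Euclidean ball of radius $R$ can be covered by at most $K(d,\kappa)$ Euclidean balls of radius $\kappa R$, for a constant depending only on $d$ and the ratio $\kappa$. Here one takes $R=r/a$ and $\kappa R = r/(2b)$, so $\kappa = a/(2b)$.

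For the Euclidean covering bound, use a volume argument. Take a maximal $\kappa R$-separated subset of $B(y,R)$. Balls of radius $\kappa R/2$ around its points are disjoint and lie in $B(y,R+\kappa R/2)$, so the subset has at most $\left((2+\kappa)/\kappa\right)^d$ points. By maximality, the balls of radius $\kappa R$ around these points cover $B(y,R)$. This count is independent of $y$ and $R$.

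Mapping the cover back through $T$ gives at most $M=\left((2+\kappa)/\kappa\right)^d$ balls of radius $r/2$ in $\Xcal$ covering $B_\Xcal(x,r)$, which is exactly the doubling property. The only step with real content is this scale-invariant covering constant, and the volume argument handles it. Everything else is bookkeeping with the norm-equivalence constants.
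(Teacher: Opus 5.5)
Your proof is correct and follows the same route as the paper, which cites the well-known fact that $\mathbb{R}^d$ is doubling and then applies Theorem~\ref{thm:doubling-eps-uniformgen}. You also supply details the paper leaves implicit: the transfer from the Euclidean norm to an arbitrary norm via norm equivalence and the volume-packing bound, and the intermediate pair $(\eta,\eta')$ that removes the ordering constraints $\delta<\vep$, $\delta'<\vep'$, which the theorem imposes but the corollary does not.
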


We next study how the generatability behave under changes of the underlying metric. The proof is similar to that of Theorem~\ref{thm:doubling-eps-uniformgen} so we defer it to Appendix~\ref{sec:appendix-other}.

\begin{theorem}
    \label{thm:doubling-metric}
    Let $\Xcal$ be a doubling metric space with metrics $\rho_1$ and $\rho_2$ such that $\rho_1$ is equivalent with $\rho_2$. Suppose $\Hcal\subseteq\{0,1\}^\Xcal$ is a hypothesis class. Then, it satisfies the UUS property with respect to $\rho_1$ if and only if it satisfies the UUS property with respect to $\rho_2$. Moreover, $\Hcal$ is uniformly  (resp.\ non-uniformly) generatable with respect to $\rho_1$ if and only if it is uniformly (resp.\ non-uniformly) generatable with respect to $\rho_2$. 
\end{theorem}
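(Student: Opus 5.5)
We would follow the template of Theorem~\ref{thm:doubling-eps-uniformgen}, reducing everything to comparisons of covering numbers $N(r;A,\rho_1)$ and $N(r;A,\rho_2)$ for arbitrary $A\subseteq\Xcal$. Here we read ``equivalent'' as strongly (bi-Lipschitz) equivalent: there are constants $0<a\le b$ with $a\rho_1(x,y)\le\rho_2(x,y)\le b\rho_1(x,y)$ for all $x,y$. We also assume $\Xcal$ is doubling with respect to both metrics. This is automatic, since bi-Lipschitz equivalence preserves the doubling property up to a change of constant.

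\textbf{Step 1: balls and covering numbers.} The inequalities give inclusions $B_{\rho_1}(x,r/b)\subseteq B_{\rho_2}(x,r)\subseteq B_{\rho_1}(x,r/a)$, and symmetrically with the roles of the metrics exchanged. Hence a cover of $A$ by $\rho_1$-balls of radius $r/b$ is a cover by $\rho_2$-balls of radius $r$, which gives
\[
N(r;A,\rho_2)\le N(r/b;A,\rho_1),\qquad N(r;A,\rho_1)\le N(ar;A,\rho_2).
\]
By the doubling property, each ball of radius $r/a$ is covered by at most $C$ balls of radius $r/b$, where $C$ depends only on $b/a$ and the doubling constant. Combining, for every $A$ and every scale $r$ we get two facts:
\begin{itemize}
\item $N(r;A,\rho_1)$ is finite if and only if $N(r;A,\rho_2)$ is finite;
\item when these are finite, they are comparable up to the multiplicative constant $C$, after rescaling $r$ by a factor in $[a,b]$.
\end{itemize}
By the argument already used in the proof of Theorem~\ref{thm:doubling-eps-uniformgen}, changing the scale inside a single doubling metric also changes covering numbers only by a bounded factor and preserves finiteness.

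\textbf{Step 2: the UUS property.} The $r$-UUS property is defined through metric separation or covering of $\bigcap$/$\bigcup$-type supports of hypotheses at scale $r$. Suppose $\Hcal$ is $r$-UUS for $\rho_1$. Step~1 transfers the relevant finiteness or infiniteness condition to scale $ar$ or $br$ under $\rho_2$, giving $\rho_2$-UUS at some scale. Theorem~\ref{thm:doubling-UUS}, applied in the doubling space $(\Xcal,\rho_2)$, then upgrades this to all scales, and the argument is symmetric in the two metrics.

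\textbf{Step 3: generatability.} By Theorems~\ref{thm:unifgen} and~\ref{thm:nonunifgen}, uniform and non-uniform generatability are characterized by finiteness of the closure dimension $\operatorname{C}_{\vep}^{\vep'}(\Hcal)$ and its non-uniform analogue. These are defined through whether certain version-space intersections have infinite $\vep'$-covering number, together with the size of witnessing sets that are $\vep$-separated. First, $\vep$-separation in $\rho_1$ implies $a\vep$-separation in $\rho_2$. Second, infinitude of covering numbers transfers by Step~1. Together these show
\[
\operatorname{C}_{\vep}^{\vep'}(\Hcal;\rho_1)\le \operatorname{C}_{a\vep}^{\tilde\vep'}(\Hcal;\rho_2)
\]
for suitable rescaled parameters. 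Theorem~\ref{thm:doubling-eps-uniformgen} then removes the dependence on the scales in $\rho_2$, and symmetry finishes the argument. We read ``uniformly generatable'' without parameters as meaningful precisely because Theorem~\ref{thm:doubling-eps-uniformgen} makes it scale-independent.

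\textbf{Main obstacle.} The delicate point is making sure every quantity in the definitions of the UUS property and the closure dimension is monotone in the scale in the right direction. Only then can the one-sided inclusions of Step~1 be chained with the scale invariance of Theorems~\ref{thm:doubling-UUS} and~\ref{thm:doubling-eps-uniformgen}. If ``equivalent'' meant only topologically equivalent, the bi-Lipschitz comparison would fail and the argument would need another route.
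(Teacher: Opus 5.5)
Your proposal is correct and is essentially the paper's argument. Like the paper, you reduce both claims via Theorems~\ref{thm:unifgen} and~\ref{thm:nonunifgen} to comparing covering numbers under $\rho_1$ and $\rho_2$. The paper compares them at the same scale, $C_1N(\vep;A,\rho_1)\le N(\vep;A,\rho_2)\le C_2N(\vep;A,\rho_1)$, getting $\operatorname{C}_{\vep}^{\vep'}(\Hcal)<\infty$ under $\rho_1$ iff under $\rho_2$ for each fixed $(\vep,\vep')$. You instead transfer at rescaled parameters and then invoke Theorems~\ref{thm:doubling-UUS} and~\ref{thm:doubling-eps-uniformgen}. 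That isolates where doubling is used, and makes explicit that ``equivalent'' must mean something quantitative such as bi-Lipschitz; topological equivalence would not give the paper's constants $C_1,C_2$.

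One correction: the UUS property and the closure dimension are defined through covering numbers, namely $N(r;\operatorname{supp}(h),\rho)=\infty$ and $N(\vep;\{x_i\}_{i=1}^t,\rho)=d$, not through separated sets. So the transfer you need in Step~3 is $N(\vep;S,\rho_1)\le N(a\vep;S,\rho_2)$, which your Step~1 already supplies.
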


So far, we have established the desired stability properties of uniform and non-uniform generation in doubling metric spaces. What about generation in the limit? Unfortunately, pathological phenomena can occur: generatability can change abruptly when changing the scale parameters $\vep$ or $\vep'$ across a threshold. We present such an example in Example~\ref{eg:geninlim-doubling} here and leave the construction and proof to Appendix~\ref{sec:appendix-proof-diffeps}. 

\begin{example}
    \label{eg:geninlim-doubling}
    In the space of real numbers, $\mathbb R$, there is a hypothesis class $\Hcal$ that satisfies the UUS property, and is $(\vep,1)$-generatable in the limit for any $\vep\in(0,1)$, but not $(1,1)$-generatable in the limit.
\end{example}
These examples illustrate phenomena that are specific to generation in metric spaces and are examined in more detail in the next section.


\subsection{Generatability is Scale-Sensitive}
So far we have seen that generatability behaves stable in a doubling space. When moving forward to general, potentially infinite-dimensional, metric spaces, however, such robust properties do not hold any more. Generatability becomes depending on novelty parameters $\vep$ and $\vep'$, and the underlying metric of the space.

Informally, consider the generation game as follows: The adversary reveals instances from a hypothesis set $h$ that are mutually $\vep$-separated, while the generator must eventually produce instances in $h$ that are at least $\vep'$-far from all previously observed ones. Intuitively, the game becomes easier when $\vep$ is larger, since the adversary is forced to reveal more widely separated instances, allowing the generator to observe new regions of the hypothesis support. Similarly, the game also becomes easier when $\vep'$ is smaller, as this permits the generator to generate instances with less novelty relative to those already observed. 

In this section, we establish monotonicity results with respect to both $\vep$ and $\vep'$. For \emph{uniform} and \emph{non-uniform} generation, the results align with this intuition: if a hypothesis class $\Hcal$ is $(\vep,\vep')$-uniformly (resp.\ non-uniformly) generatable, then it remains so when $\vep$ is increased or $\vep'$ is decreased. We illustrate this behavior with an example in the $\ell^2$ space as Example~\ref{eg:l2-diffeps-case2}, where increasing $\vep$ or decreasing $\vep'$ makes generation easier. Notably, this example also shows a separation between all finite dimensional metric spaces, and the infinite-dimensional space $\ell^2$. Depending on the choice of scale parameters $\varepsilon$ and $\varepsilon'$, the same hypothesis class can exhibit all possible regimes of generatability, ranging from not generatable in the limit to uniformly generatable. This phenomenon cannot occur in a doubling space.

The situation for \emph{generation in the limit} is more subtle, as already illustrated by Example~\ref{eg:geninlim-doubling}.  When considering generation in the limit, monotonicity holds only when $\varepsilon$ is decreased. In contrast, increasing $\varepsilon$ can cause generatability to fail: a hypothesis class may be $(\varepsilon,\varepsilon')$-generatable in the limit for small $\varepsilon$, yet lose this property once $\varepsilon$ exceeds a threshold. Example~\ref{eg:l2-diffeps-case1} demonstrates that, in $\ell^2$, generatability in the limit can disappear when $\varepsilon$ and $\varepsilon'$ exceed such thresholds. Moreover, in contrast to Example~\ref{eg:geninlim-doubling} in the doubling-space setting, Example~\ref{eg:l2-diffeps-case1} is more general: the threshold values of $\varepsilon$ and $\varepsilon'$ at which generatability fails can be chosen arbitrarily. 
\begin{figure}[t]
\centering

\subfloat[
Illustration of Example~\ref{eg:l2-diffeps-case1}, showing generatability in the limit can disappear when $\vep$ and $\vep'$ exceed a threshold.
\label{fig:eg1}
]{%
  \includegraphics[width=0.45\linewidth]{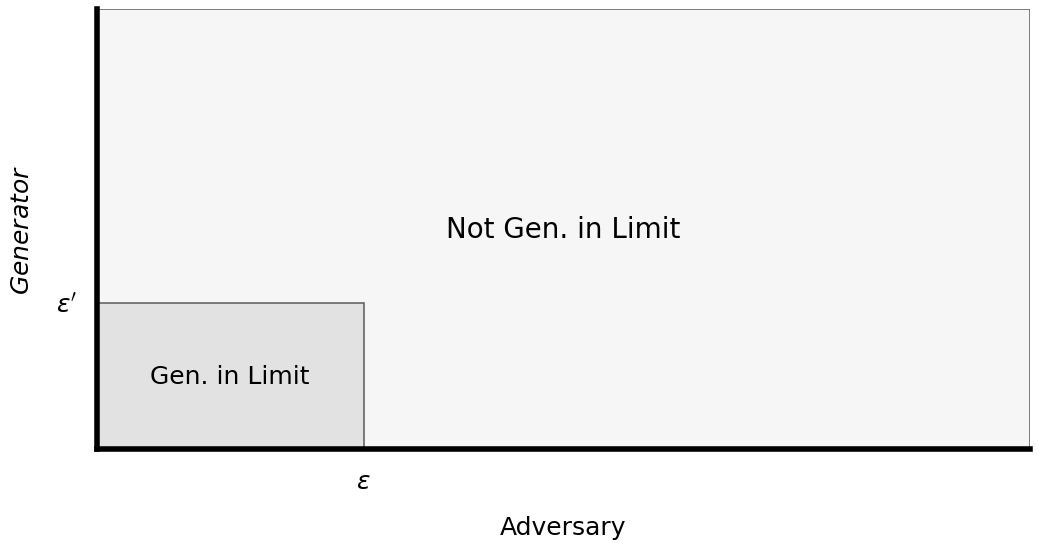}
}
\hfill
\subfloat[
Illustration of Example~\ref{eg:l2-diffeps-case2}, showing increasing $\vep$ or decreasing $\vep'$ makes generation easier.
\label{fig:eg2}
]{%
  \includegraphics[width=0.45\linewidth]{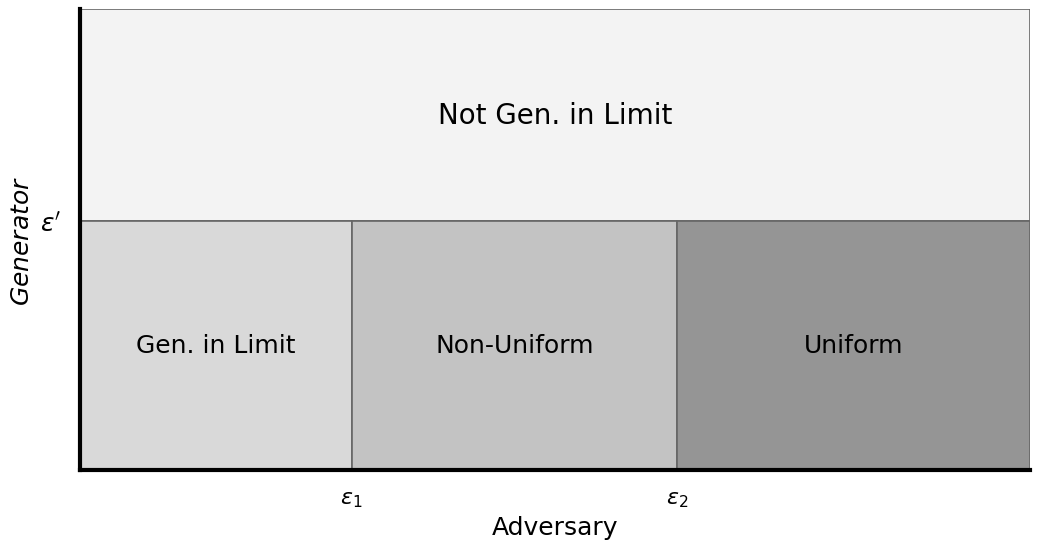}
}

\caption{Illustration of two generation regimes in a metric space. The horizontal axis represents the adversary parameter, and the vertical axis represents the generator parameter.}
\label{fig:main}
\end{figure}

\begin{theorem}\label{thm:gendiffeps}
    Let $\Hcal \subseteq \{0, 1\}^{\Xcal}$ be any hypothesis class satisfying the $r$-$\operatorname{UUS}$ property for some $r>0$ and let $0<\varepsilon,\varepsilon' \le r$. If $\Hcal$ is $(\vep,\vep')$-generatable in the limit, then for any positive $\delta\le\vep$, $\delta'\le\vep'$, $\Hcal$ is $(\delta,\delta')$-generatable in the limit. 
\end{theorem}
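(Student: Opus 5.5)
Two monotonicities need to be shown: in $\vep'$ (the generator parameter) and in $\vep$ (the adversary parameter). Composing them gives $(\vep,\vep')\Rightarrow(\vep,\delta')\Rightarrow(\delta,\delta')$. The $\vep'$ direction is immediate. A generator that always outputs points of $h$ that are $\vep'$-far from every revealed example is automatically $\delta'$-far from them when $\delta'\le\vep'$. So the same generator $\Gcal$ witnesses $(\vep,\delta')$-generation in the limit, and its time $t^\star$ is unchanged.

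The substantive step is decreasing the adversary parameter from $\vep$ to $\delta$. Here the adversary is weaker in one sense, since it only needs $\delta$-separated examples, but it can now present dense clusters that an $\vep$-generator has never been trained on. My plan is a reduction by \emph{thinning}. Given a $\delta$-valid enumeration $x_1,x_2,\dots$ of $h$ (mutually $\delta$-separated, and eventually listing a $\delta$-net of $\operatorname{supp}(h)$, or whatever the enumeration definition in the preliminaries requires), the new generator greedily extracts a subsequence $x_{i_1},x_{i_2},\dots$. It keeps $x_t$ iff $x_t$ is at distance at least $\vep$ from all previously kept points. It then feeds the kept points to $\Gcal$ and outputs $\Gcal$'s current output.

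Two things must be checked.

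\textbf{First, the kept subsequence is a valid $\vep$-enumeration of $h$.} Separation holds by construction. For coverage, every $x\in\operatorname{supp}(h)$ is within $\delta$ of some $x_t$, and each $x_t$ is either kept or within $\vep$ of a kept point. So the kept points form an $(\vep+\delta)$-cover, and I need to upgrade this to whatever the $\vep$-enumeration definition demands. I expect this to be the main obstacle. If the definition only asks for a maximal $\vep$-separated exhaustion, I would argue maximality directly. Suppose some $y\in\operatorname{supp}(h)$ were $\vep$-far from all kept points. Then the $\delta$-enumeration eventually lists points near $y$. Because it is an exhaustive listing of a $\delta$-net (or of all of $\operatorname{supp}(h)$, in the countable-style definition), we can choose an enumerated $x_t$ that is $\vep$-far from every point kept before time $t$, and it would have been kept, a contradiction. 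If a gap of size $\delta$ remains, I would use the hypothesis $\vep\le r$ and the $r$-UUS property: $\operatorname{supp}(h)$ has infinite $r$-packing, hence infinitely many $\vep$-separated points, so the thinned stream is infinite. Infinitude of the stream is exactly what $\Gcal$ needs to reach its stabilization time.

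\textbf{Second, the outputs are valid for the $\delta$-game.} $\Gcal$'s outputs after its finite time $t^\star$ lie in $h$ and are $\vep'$-far from all kept points, but they must be $\vep'$-far from \emph{all} revealed points $x_1,\dots,x_t$. Discarded points are within $\vep$ of kept points, so this only gives distance $\vep'-\vep$, which is insufficient. To fix this, I would instead feed $\Gcal$ the kept points, choose among $\Gcal$'s candidate outputs, and use the UUS property at scale $r\ge\vep'$ to select an element of $h$ that is far from the finitely many discarded points as well. Concretely, I would run $\Gcal$ on the kept stream augmented to absorb discarded points: any revealed point within $\vep'$ of a candidate is treated as forcing $\Gcal$ to treat the region as seen. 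If this gets messy, the fallback is to invoke the paper's characterization or sufficient condition for generation in the limit via the $(\vep,\vep')$-closure dimension and show monotonicity of the relevant quantity in $\vep$ directly.
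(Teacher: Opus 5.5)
Your $\vep'$-step is correct and is exactly what the paper does. The gap is in the $\vep$-step, and it comes from misreading Definition~\ref{def:geninlim}. The adversary's sequence is \emph{not} required to be mutually separated at any scale; the informal description in Section~\ref{sec:diffeps} mentions separation, but the formal definition does not. The only requirements are $\{x_i\}\subseteq\operatorname{supp}(h)$ and $\operatorname{supp}(h)\subseteq B(\{x_i\}_{i=1}^\infty,\vep)$. This coverage condition is monotone in the direction you need. If $\operatorname{supp}(h)\subseteq B(\{x_i\}_{i=1}^\infty,\delta)$ with $\delta\le\vep$, then $\operatorname{supp}(h)\subseteq B(\{x_i\}_{i=1}^\infty,\vep)$. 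Hence every stream admissible in the $\delta$-game is already admissible in the $\vep$-game. The original $\Gcal$, fed the full unthinned stream, therefore stabilizes with outputs in $\operatorname{supp}(h)\setminus B(\{x_i\}_{i=1}^s,\vep')\subseteq\operatorname{supp}(h)\setminus B(\{x_i\}_{i=1}^s,\delta')$. That two-line argument is the paper's entire proof, and no reduction is needed.

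Your thinning reduction also fails on its own terms.

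First, the kept subsequence need not be an $\vep$-cover. A point $y\in\operatorname{supp}(h)$ that is never revealed is only within $\delta$ of some revealed $x_t$. That $x_t$ may have been discarded for being within $\vep$ of a kept point, so you only get an $(\vep+\delta)$-cover. Your maximality argument breaks because the revealed point near $y$ is only $(\vep-\delta)$-far from the kept points, so it need not be kept. For example, in $\mathbb{R}$ with $\vep=1$ and $\delta=0.1$, revealing $0$ and then $0.95$ covers $1.02$ at scale $\delta$. Thinning keeps only $0$, which does not cover $1.02$ at scale $\vep$. The UUS property only shows the thinned stream is infinite, which is not the hypothesis under which $\Gcal$'s guarantee applies.

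Second, as you note yourself, $\Gcal$'s outputs are then only $\vep'$-far from kept points, not from discarded ones. Your proposed repair is not a well-defined construction. Feeding the discarded points back to $\Gcal$ just recovers the original stream, and once you see that stream is admissible for the $\vep$-game, the thinning is unnecessary.

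The closure-dimension fallback would not work either. The paper gives only a sufficient condition (Theorem~\ref{thm:geninlim}) for generation in the limit, not a characterization, so there is no quantity whose monotonicity would settle the claim.
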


\begin{theorem}\label{thm:unifdiffeps}
    Let $\Hcal \subseteq \{0, 1\}^{\Xcal}$  be any hypothesis class satisfying the $r$-$\operatorname{UUS}$ property for some $r>0$ and let $0<\varepsilon,\varepsilon' \le r$. If $\Hcal$ is $(\vep,\vep')$-uniformly (or non-uniformly) generatable in the limit, then for any $\vep\le\delta\le r$ and  $0<\delta'\le\vep'$, $\Hcal$ is $(\delta,\delta')$-uniformly (or non-uniformly) generatable in the limit. 
\end{theorem}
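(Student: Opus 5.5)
We will go through the closure-dimension characterizations (Theorem~\ref{thm:unifgen} for the uniform case and Theorem~\ref{thm:nonunifgen} for the non-uniform case). Under the $r$-UUS hypothesis and with scales at most $r$, $(\vep,\vep')$-uniform generatability is equivalent to $\operatorname{C}_{\vep}^{\vep'}(\Hcal)<\infty$. Similarly, non-uniform generatability is equivalent to the existence of a nested sequence $\Hcal_1\subseteq\Hcal_2\subseteq\cdots$ with $\bigcup_n\Hcal_n=\Hcal$ and $\operatorname{C}_{\vep}^{\vep'}(\Hcal_n)<\infty$ for every $n$. It therefore suffices to prove the monotonicity
\[
\operatorname{C}_{\delta}^{\delta'}(\mathcal{G})\le \operatorname{C}_{\vep}^{\vep'}(\mathcal{G})
\]
for every subclass $\mathcal{G}\subseteq\Hcal$, whenever $\vep\le\delta\le r$ and $0<\delta'\le\vep'$. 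Note that each subclass $\Hcal_n$ inherits the $r$-UUS property from $\Hcal$, so the characterization applies to it. Once the inequality is shown, the non-uniform case follows immediately by reusing the same sequence $(\Hcal_n)$.

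To prove the inequality, recall the structure of the $(\vep,\vep')$-closure dimension. It is the largest size $d$ of an $\vep$-separated set $x_1,\dots,x_d$ such that the intersection $\langle x_{1:d}\rangle_{\mathcal{G}}$ of all consistent hypotheses is nonempty, yet its support is ``$\vep'$-covered''. More precisely, the set of points of that intersection which are $\vep'$-far from $\{x_1,\dots,x_d\}$ is empty, or the version space is otherwise degenerate in the sense fixed by the definition.

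Take a witness for $\operatorname{C}_{\delta}^{\delta'}(\mathcal{G})\ge d$, that is, a $\delta$-separated set $x_{1:d}$ witnessing the $(\delta,\delta')$ condition. I will check two things.
\begin{enumerate}
\item Since $\delta\ge\vep$, any $\delta$-separated set is also $\vep$-separated.
\item Since $\delta'\le\vep'$, every point that is $\vep'$-far from $x_{1:d}$ is also $\delta'$-far from it. So if no common point is $\delta'$-novel, then no common point is $\vep'$-novel either.
\end{enumerate}
Together these show the same set is a witness for $\operatorname{C}_{\vep}^{\vep'}(\mathcal{G})\ge d$. Taking suprema gives the inequality, including the case where the left side is infinite.

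I expect the main obstacle to be matching the exact form of the closure-dimension definition, in particular its non-emptiness and finiteness side conditions, and confirming that the $r$-UUS hypothesis with $\delta\le r$ really is what licenses applying Theorems~\ref{thm:unifgen} and~\ref{thm:nonunifgen} at the new scales $(\delta,\delta')$. Two points need care there. First, the non-emptiness condition on the intersection does not depend on the scales, so it transfers unchanged. Second, if the definition counts $\vep'$-packing or covering numbers of the intersection instead of asking for emptiness, I would instead use the fact that covering numbers are monotone in the radius. Only the condition $\delta\le r$ actually requires the UUS restriction, which explains why the statement caps $\delta$ at $r$.
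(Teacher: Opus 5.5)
Your route differs from the paper's. The paper never uses the closure dimension. It keeps the $(\vep,\vep')$-generator $\Gcal$ and its threshold $d$ (or $d_h$), and notes two inclusions:
\begin{itemize}
\item since $\vep\le\delta$, $N(\delta;\{x_i\}_{i=1}^t,\rho)\ge d$ implies $N(\vep;\{x_i\}_{i=1}^t,\rho)\ge d$;
\item since $\delta'\le\vep'$, $\operatorname{supp}(h)\setminus B(\{x_i\}_{i=1}^s,\vep')\subseteq\operatorname{supp}(h)\setminus B(\{x_i\}_{i=1}^s,\delta')$.
\end{itemize}
So the same generator with the same threshold works at $(\delta,\delta')$, for the uniform and non-uniform cases at once.

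Your monotonicity of the closure dimension is correct, but only in the form of your final hedge. The definition is not about separated sets or an emptiness condition. A witness is a sequence with $N(\delta;\{x_i\}_{i=1}^t,\rho)=d$ and $N(\delta';\langle x_1,\dots,x_t\rangle_\Hcal,\rho)<\infty$. Then $N(\vep;\{x_i\}_{i=1}^t,\rho)=d''\ge d$, and $N(\vep';\langle x_1,\dots,x_t\rangle_\Hcal,\rho)\le N(\delta';\langle x_1,\dots,x_t\rangle_\Hcal,\rho)<\infty$, so $\operatorname{C}_\vep^{\vep'}\ge d''\ge d$.

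The cost of your route is that it needs the ``only if'' halves of Theorems~\ref{thm:unifgen} and~\ref{thm:nonunifgen}, and these fail as stated. Lemma~\ref{lem:closnecc} assumes a finite $\vep'$-cover of $\langle z_1,\dots,z_t\rangle_\Hcal$ can be centered at points of $\langle z_1,\dots,z_t\rangle_\Hcal$, so that the adversary can reveal them. But $N(\vep';\cdot,\rho)$ allows centers anywhere in $\Xcal$.

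Here is a counterexample in $\ell^2$. Take $A=\{re_k\}_{k\ge1}$, $\operatorname{supp}(h_1)=A\cup\{10re_{2k}\}_{k\ge1}$ and $\operatorname{supp}(h_2)=A\cup\{10re_{2k+1}\}_{k\ge1}$; this class satisfies $r$-UUS.
\begin{itemize}
\item Every point of either support lies within distance $r$ of at most one point of $A$. So the generator that outputs the first $re_k\notin B(\{x_i\}_{i=1}^s,r)$ succeeds from round one, and the class is $(r/2,r)$-uniformly generatable.
\item However, $\langle re_1,\dots,re_d\rangle_\Hcal=A\subseteq B(\mathbf 0,r)$, so $N(r;A,\rho)=1$, while $N(r/2;\{re_i\}_{i=1}^d,\rho)=d$. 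Hence $\operatorname{C}_{r/2}^{r}=\infty$.
\item Any nested decomposition of this two-element class eventually equals the whole class, so condition (ii) of Theorem~\ref{thm:nonunifgen} fails too.
\end{itemize}

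So your first step (generatable implies finite closure dimension) can fail, and your proof is only as sound as that characterization. The paper's direct generator-reuse argument needs none of it, which is what it buys here.
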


Theorems~\ref{thm:gendiffeps} and~\ref{thm:unifdiffeps} establish monotonicity with respect to both $\vep$ and $\vep'$ for all three notions of generation, which we prove in Appendix~\ref{sec:appendix-other}.

\begin{example}\label{eg:l2-diffeps-case1}
In $\ell^2$-space, for any $r>0$ and any $0<\varepsilon,\varepsilon',\gamma,\gamma' \le r$, there exists a hypothesis class $\Hcal \subseteq \{0, 1\}^{\ell^2}$ satisfying the $r$-$\operatorname{UUS}$ property and is $(\gamma,\gamma')$-generatable in the limit for any $\gamma<\vep,\gamma'<\vep'$. Moreover, it is not $(\gamma,\gamma')$-generatable in the limit if $\gamma\ge\vep$ or $\gamma'\ge\vep'$. 
    
\end{example}
Figure~\ref{fig:eg1} gives an illustration of this example. We now present the construction and the underlying intuition here, and defer the full proof to Appendix~\ref{sec:appendix-proof-diffeps}.

Recall that $\ell^2$ is the space that contains all sequences $(x_i)_{i=1}^\infty$ such that $\sum_i x_i^2<\infty$. Let $e_k \in \ell^2$ denote the $k$-th standard basis vector. Define
\[
u_k := 2r\cdot e_k, \qquad
a_{k} := \vep\cdot e_k, \qquad
g_k := \vep'\cdot e_k .
\]
    Let $\mathcal I$ be the set of all infinite subsets of $\naturals_+$, i.e., $\mathcal I=\{I|I\in 2^{\naturals_+}, |I|=\infty\}$. For $I\in \mathcal I$, let 
    \[
    U_I=\{u_k|k\in I\},\quad O_I=\{a_{2k}|k\in I\}\cup\{g_{2k^n+1}|k\in I,n\in \naturals\},\quad D_I=\{g_{2k+1}|k\in I\}
    \]
    and
    construct the hypothesis class $\Hcal$ such that
    \[
    \Hcal=\{h_{I_1,I_2,I_3}|I_1,I_2, I_3\in\mathcal I,\quad \operatorname{supp}(h_{I_1,I_2,I_3})=\{
    \mathbf 0\}\cup U_{I_1}\cup O_{I_2}\cup D_{I_3}\}
    \]
    Here, $u_k$ and $U_I$ are constructed to ensure that $\Hcal$ satisfies the $r$-UUS property. As will be seen in the proof, the points $a_k$ are constructed for the adversary, while the points $g_k$ are constructed for the generator. In fact, the $g_k$ are the only instances with an explicit structure, which enables the generator to repeatedly produce them. The set $O_I$ is the \emph{optimal set}: once the adversary reveals some $a_k \in O_I$, the generator can generate $g_k$ infinitely often. In contrast, $D_I$ is the \emph{disturbing set}: if the adversary continues to reveal $g_k \in D_I$, then the generator fails to generate in the limit.
     
    Intuitively, when $\gamma < \vep$ and $\gamma' < \vep'$, since the adversary must choose a sequence $\{x_i\}_{i=1}^\infty$ such that $\operatorname{supp}(h) \subseteq B(\{x_i\}_{i=1}^\infty,\gamma)$, there must exist some time $t$ at which $x_t = a_{2k}$ for some $k \in \naturals$. From that point on, the generator is able to generate $g_{2k^n+1}$ for any $n \in \naturals$. When $\gamma' \ge \vep'$, on the other hand, as long as the adversary reveals $\mathbf{0}$, the generator cannot generate any $g_k$, since each $g_k$ lies within the $\gamma'$-ball centered at $\mathbf{0}$. Finally, when $\gamma \ge \vep$, the adversary may choose to reveal only instances of the form $\mathbf{0}$, $g_k$, and $u_k$. The generator’s task is to uncover the structure of $O_I$ and generate from it. However, due to the interference caused by $D_I$, the generator cannot distinguish whether the observed instances originate from $O_I$ or from $D_I$, rendering generation in the limit impossible.

\begin{example}\label{eg:l2-diffeps-case2}
In $\ell^2$-space, for any $r>0$ and any $0<\varepsilon_1,\vep_2,\varepsilon',\gamma,\gamma'\le r$ such that $\vep_1\le\vep_2$, there exists a hypothesis class $\Hcal \subseteq \{0, 1\}^{\ell^2}$ satisfying the $r$-$\operatorname{UUS}$ property and is 
\begin{enumerate}
    \item $(\gamma,\gamma')$-uniformly generatable for any $\gamma\in[\vep_2,r],\gamma'\in(0,\vep')$.
    \item $(\gamma,\gamma')$-non uniformly generatable but not uniformly generatable for any  
    $\gamma\in[\vep_1,\vep_2)$,
    
    $\gamma'\in~(0,\vep')$.
    \item $(\gamma,\gamma')$-generatable in the limit but not non-uniformly generatable for any $\gamma\in(0,\vep_1)$,
    
    $\gamma'\in~(0,\vep')$.
    \item Not $(\gamma,\gamma')$-generatable in the limit for any $\gamma\in(0,r],\gamma'\in[\vep',r]$.
\end{enumerate}
    
\end{example}

Figure~\ref{fig:eg2} gives an illustration of this example. We also present the construction and the underlying intuition here, and defer the proof to Appendix~\ref{sec:appendix-proof-diffeps}.

    Let $e_k \in \ell^2$ denote the $k$-th standard basis vector. Define
    \[
    u_k := 2r\cdot e_k, \qquad
    a_{k,1} := \tfrac{\sqrt{2}}{2}\vep_1\cdot e_k, \qquad
    a_{k,2} := \tfrac{\sqrt{2}}{2}\vep_2\cdot e_k, \qquad
    g_k := \tfrac{\sqrt{2}}{2}\vep'\cdot e_k .
    \]
    Let $\mathcal I$ be the set of all infinite subsets of $\naturals$, i.e., $\mathcal I=\{I|I\in 2^{\naturals}, |I|=\infty\}$. Let $\mathcal I_{\mathrm{fin}}$ be the set of all finite subsets of $\naturals$, i.e., $\mathcal I_{\mathrm{fin}}=\{I|I\in 2^{\naturals}, |I|<\infty\}$. For $I\in \mathcal I$ and $J\in \mathcal I_{\mathrm{fin}}$, let 
    \[
    O_I=\{g_{k^n}|k\in I,n\in\naturals\}\cup \{u_k|k\in I\},\quad D_{I,J}=\{a_{k,1}|k\in I\}\cup \{a_{k,2}|k\in J\}
    \]
    and
    construct the hypothesis class $\Hcal$ such that
    \[
    \Hcal=\{h_{I_1,I_2,J}|I_1,I_2\in\mathcal I, J\in\mathcal I_{\mathrm{fin}},\quad \operatorname{supp}(h_{I_1,I_2,J})= O_{I_1}\cup D_{I_2,J}\}.
    \]
    Here, as in Example~\ref{eg:l2-diffeps-case1}, the points $u_k$ are constructed to ensure that the hypothesis class satisfies the $r$-UUS property. The points $a_{k,1}$ and $a_{k,2}$ are constructed for the adversary, while the points $g_k$ are constructed for the generator. The set $O_I$ is the \emph{optimal set} in the sense that once the adversary reveals any instance in this set, the generator is able to generate $g_k$ infinitely many times. In contrast, $D_{I,J}$ is the \emph{disturbing set}: if the adversary continues to reveal instances $a_{k,1}$ and $a_{k,2}$ from this set, the generator fails to generate in the limit.
    
    Intuitively, when $\gamma \ge \vep_2$ and  $\gamma' < \vep'$, since $\rho(a_{k_1,j_1}, a_{k_2,j_2}) \le \vep_2$ for any $k_1,k_2 \in \naturals$ and $j_1,j_2 \in \{1,2\}$, the adversary can reveal at most one example from the disturbing set $D_{I,J}$. After that, as long as the generator observes any point from $O_I$, it can generate infinitely many instances of $g_k$, making this case uniformly generatable. When $\gamma \in [\vep_1,\vep_2)$  and $\gamma'<\vep'$ , the adversary can confuse the generator finitely many times by revealing instances of the form $a_{k,2}$, which makes this case non-uniformly generatable. When $\gamma < \vep_1$ and $\gamma' <\vep'$ , the adversary can confuse the generator infinitely many times by revealing instances of both $a_{k,1}$ and $a_{k,2}$. However, since the game requires the adversary to reveal the entire structure of $h$, it must eventually reveal instances from the optimal set $O_I$, which makes this case generatable in the limit. Finally, when $\gamma' \ge \vep'$, as long as the adversary reveals any instance $g_k$, the generator is unable to generate any $g_k$ thereafter, making this case not generatable in the limit.

\subsection{Generatability is Metric-Dependent}

In Section~\ref{sec:doubling}, we showed that uniform generatability is invariant under changes to equivalent metrics. In contrast, in general metric spaces, generatability can depend on the underlying metric: even equivalent metrics on the same instance space may induce different generatability properties.

We begin by establishing a stability result. Theorem~\ref{thm:diffmetricgen} shows that if one metric is bounded by another, then both the UUS property and generatability in the limit transfer in a controlled manner, with the relevant parameters scaling linearly with the bounding constant. In this sense, generatability remains stable under sufficiently mild distortions of the metric. We present the proof in Appendix~\ref{sec:appendix-other}.

\begin{theorem}
\label{thm:diffmetricgen}
    Let $\Xcal$ be a metric space with metric $\rho_1$ and $\rho_2$. Assume there exists $M>0$ such that for any $x, y\in \Xcal$, $ \rho_2(x,y)\le M\rho_1(x,y)$. Let $\Hcal\subseteq \{0,1\}^\Xcal$ be any class satisfying the $r$-UUS property for an $r>0$ with respect to $\rho_2$, and $0<\vep,\vep'\le r$. Then $\Hcal$ satisfies the $r/M$-UUS property with respect to $\rho_1$. Moreover, if $\Hcal$ is $(\vep,\vep')$-generatable in the limit with respect to $\rho_2$, then $\Hcal$ is also $(\vep/M,\vep'/M)$-generatable in the limit with respect to $\rho_1$.
\end{theorem}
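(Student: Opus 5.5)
The plan is to push the definitions through the comparison $\rho_2\le M\rho_1$, which gives containment of balls: $B_{\rho_1}(x,s)\subseteq B_{\rho_2}(x,Ms)$ for every $x$ and $s>0$. Equivalently, a $\rho_1$-separation of $s$ is implied by a $\rho_2$-separation of $Ms$, and $\rho_2$-closeness within $Ms$ is implied by $\rho_1$-closeness within $s$. I will assume the $r$-UUS property with respect to $\rho$ means the following: for every finite set $S\subseteq\Xcal$ and every $h\in\Hcal$, the support $\operatorname{supp}(h)\setminus B_\rho(S,r)$ is nonempty (indeed contains infinitely many $r$-separated points, as in the paper's preliminaries).

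\emph{UUS transfer.} Take a finite $S$ and $h\in\Hcal$. By the ball containment, $B_{\rho_1}(S,r/M)\subseteq B_{\rho_2}(S,r)$. Hence $\operatorname{supp}(h)\setminus B_{\rho_1}(S,r/M)\supseteq \operatorname{supp}(h)\setminus B_{\rho_2}(S,r)$, and the right-hand side is nonempty (infinite) by hypothesis. If the definition asks for infinitely many mutually separated points, a $\rho_2$-$r$-separated family is automatically $\rho_1$-$(r/M)$-separated, since $\rho_1\ge\rho_2/M$. This gives the $r/M$-UUS property with respect to $\rho_1$.

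\emph{Generation transfer.} Let $\Gcal$ be a $(\vep,\vep')$-generator in the limit with respect to $\rho_2$. The natural plan is to reuse $\Gcal$ unchanged, or lightly filtered, against a $\rho_1$-adversary with parameters $(\vep/M,\vep'/M)$. The output side is easy. If $\Gcal$ eventually outputs $\hat x_t\in\operatorname{supp}(h)$ with $\rho_2(\hat x_t,x_i)\ge\vep'$ for all past $x_i$, then $\rho_1(\hat x_t,x_i)\ge\vep'/M$, so the $\rho_1$-novelty requirement holds.

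The obstacle is the input side. A valid $\rho_1$-adversary enumerates points that are only $\vep/M$-separated in $\rho_1$, and whose $\rho_1$-$(\vep/M)$-neighbourhoods cover $\operatorname{supp}(h)$. Such a stream need not be a valid $\rho_2$-$\vep$ stream: its points can be $\rho_2$-close, so it is not $\vep$-separated. Its coverage does transfer, however: $\operatorname{supp}(h)\subseteq B_{\rho_1}(\{x_i\},\vep/M)\subseteq B_{\rho_2}(\{x_i\},\vep)$.

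To fix separation, I will have the new generator $\Gcal'$ feed $\Gcal$ a greedily thinned subsequence. It passes $x_t$ to $\Gcal$ only if $\rho_2(x_t,x_s)\ge\vep$ for every previously passed $x_s$. Whether the resulting stream still $\vep$-covers $\operatorname{supp}(h)$ in $\rho_2$ is the delicate point. A discarded point is within $\vep$ of a kept point, so a plain triangle inequality only yields $2\vep$-coverage.

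Two fixes are available:
\begin{enumerate}
\item Use thresholds with slack, keeping points $\vep$-separated but invoking the monotonicity Theorem~\ref{thm:gendiffeps} to pass between parameters. This requires the separation and coverage scales to match the paper's game definition exactly.
\item If the game's adversary condition is that $\operatorname{supp}(h)\subseteq B(\{x_i\},\vep)$ with the $x_i$ distinct elements of the support, simply observe that the $\rho_1$ stream already satisfies this for $\rho_2$ at scale $\vep$.
\end{enumerate}
I would first check which form the definition takes. If no separation of the revealed points is required, the argument is immediate, apart from $\Gcal'$ also ignoring the points $\Gcal$ discards.

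Finally, I need to confirm that $\Gcal$ eventually succeeds on the thinned stream, using the fact that this stream is a valid $\rho_2$ enumeration of $h$. Its outputs then satisfy the $\rho_2$-$\vep'$ novelty against the kept points. They must also be checked against the discarded points, each of which lies within $\vep$ in $\rho_2$ of a kept point. I would absorb this by running $\Gcal$ at a slightly larger novelty $\vep'+\vep$ if available, via Theorem~\ref{thm:gendiffeps}. Otherwise I would note that membership in $\operatorname{supp}(h)$ plus $\rho_2$-distance at least $\vep'$ from all revealed points is what the definition requires, and that the generator's outputs can be filtered against the full history. I expect this bookkeeping between separation and coverage scales to be the main obstacle.
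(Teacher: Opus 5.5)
Your branch (2) is exactly the paper's proof, and it is the whole proof. Definition~\ref{def:geninlim} only requires $\{x_i\}\subseteq\operatorname{supp}(h)$ and $\operatorname{supp}(h)\subseteq B(\{x_i\}_{i=1}^\infty,\vep)$. It imposes no separation condition on the revealed points. So by your inclusion $B_{\rho_1}(\cdot,\vep/M)\subseteq B_{\rho_2}(\cdot,\vep)$, every valid $\rho_1$-stream at scale $\vep/M$ is already a valid $\rho_2$-stream at scale $\vep$, and $\Gcal$ is used unchanged, with no thinning. The output side then goes through as you say. Under the closed-ball convention, $\hat x\notin B_{\rho_2}(x_i,\vep')$ means $\rho_2(\hat x,x_i)>\vep'$ (strict), hence $\rho_1(\hat x,x_i)>\vep'/M$. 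Your UUS transfer matches the paper's.

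You should drop the thinning detour rather than keep it as a fallback, because the repairs you sketch would not work. Theorem~\ref{thm:gendiffeps} only lets you \emph{decrease} $\vep$ and $\vep'$. It cannot make a $(\vep,\vep')$-generator succeed on a stream that only $2\vep$-covers the support; increasing $\vep$ can destroy generation in the limit, as in Example~\ref{eg:geninlim-doubling}. Nor can it supply a generator with novelty $\vep'+\vep$. Likewise, ``filtering outputs against the full history'' is not available: the generator has no other point certified to lie in $\operatorname{supp}(h)$ to fall back on.
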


Our next example highlights a sharp contrast. Even when two metrics are equivalent in the topological sense, generatability need not be preserved. In particular, we construct an example in $\ell^2$ showing that a hypothesis class can be uniformly generatable under the standard metric, yet fail to be non-uniformly generatable under an equivalent metric. We state the example and leave the full proof in Appendix~\ref{sec:appendix-proof-diffeps}.

\begin{example}\label{eg:l2-diffmetric}
    In $\ell^2$-space, for any $\vep>0$, there exists a hypothesis class $\Hcal$ that satisfies the $\vep$-UUS property and a metric $\rho'$ that is equivalent with the ordinary metric $\rho$ such that $\Hcal$ is $(\tau,\tau)$-uniform generatable with respect to $\rho$ for any $\tau\in(0,\vep)$, but $\Hcal$ is not $(\tau,\tau)$-non-uniformly generatable with respect to $\rho'$ for any $\tau\in(0,\vep)$.
\end{example}

\input{colt2026/discussion_colt}

\bibliographystyle{plainnat}   
\bibliography{yourbibfile}

\appendix

\input{colt2026/appendix_discrete_summary}

\input{colt2026/appendix_char_colt}
\input{colt2026/appendix_diffeps_colt}
\input{colt2026/appendix_other}
\input{colt2026/appendix_algo}

\end{document}

%% file: colt2026/introduction_colt.tex
\section{Introduction}
LLMs are trained on massive amounts of data from which they learn to generate new, previously unseen, grammatically valid strings in a natural language such as English. \cite{kleinberg2024language} recently provided an elegant framework that captures the essence of generation. In their model, the learner receives a sequence of strings from an adversary drawn from a language in a known language collection. The learner has to generate novel valid strings from the true language that is initially unknown to the learner. This framework is inspired by the seminal work of \cite{gold1967language} on language identification in the limit. In \cite{gold1967language}’s model, an adversary enumerates strings from a language in a collection. The learner’s task is to identify the true language from only the positive examples shown by the adversary. \cite{kleinberg2024language} changed the identification requirement to generation: the learner has to generate novel, unseen strings from the true language.

This simple change, from identification to generation, completely changes the nature of provable guarantees. On the one hand, since the seminal characterization by \cite{angluin1979finding} of language collections identifiable in the \cite{gold1967language} model, we know that very few natural collections are identifiable in the limit. On the other hand, \cite{kleinberg2024language} showed that \emph{any} countable collection is generatable in the limit. They also connected their theoretical findings to empirical phenomena such as under-generation (mode collapse) and over-generation (hallucination). Their elegant framework and surprising results have led to a flurry of work on extending their basic framework in various interesting directions.

In particular, a recent work~\citep{li2024generationlenslearningtheory} examined characterizations of various variants of the language generation model, including generation in the limit, non-uniform generation, and uniform generation. However, like much of the work on the language generation model, this work also assumed that the example space is \emph{countable}. Because no additional structure is assumed, the only formalization of “novelty” is that the generated object be distinct from previously seen objects. There is no way to ensure that the object is “different enough.” However, generative AI is now being used to generate objects in spaces with rich structure. In such spaces, the assumption of countability becomes restrictive. For example, a neural surrogate for an SDE solver might generate random functions over a spatial domain \citep{nerualoperator}. A generative model for proteins has to not only model their 3D structure in a continuous space but also respect translational and rotational symmetries \citep{pmlr-v162-hoogeboom22a}. Inspired by applications in spaces with richer structure, in this paper we initiate the study of generation in \emph{metric spaces}. 


\subsection{Main Contributions}
This work initiates a theoretical study of \emph{generation in metric spaces}, extending prior learning-theoretic frameworks that were restricted to countable example spaces. Our main contributions are as follows:
\begin{itemize}
    \item
    We extend the language generation model to separable metric instance spaces $\Xcal$, while retaining binary hypothesis classes $\mathcal{H} \subseteq \{0,1\}^\Xcal$. We introduce a notion of novelty based on metric separation rather than mere distinctness, and define two independent novelty parameters for the adversary and the generator. This leads to a strictly richer and nontrivial generalization of the countable setting.

    \item 
    We introduce the $(\varepsilon,\varepsilon')$-closure dimension, a metric analogue of closure dimension that explicitly depends on scale. Using this notion, we obtain complete characterizations of uniform and non-uniform generatability, as well as a sufficient condition for generation in the limit. Moreover, we generalize the algorithmic framework from the countable case to the metric space setting under appropriate oracle access.

    \item
    In doubling spaces, including all finite-dimensional normed spaces, we show that generatability behaves in a well-controlled manner: uniform and non-uniform generatability is invariant to the choice of novelty scales and preserved under equivalent metrics. 
    \item 
    Beyond doubling spaces, we demonstrate sharp and sometimes non-intuitive behaviors. Generatability can be very sensitive to the novelty parameters and to choices of metric. Our constructions show that all notions of generation can fail abruptly as scales change even in one of the simplest infinite-dimensional space $\ell^2$, highlighting a qualitative gap between finite and infinite-dimensional spaces.
\end{itemize}

Extending generation to metric spaces presents several challenges. 
First, unlike in the countable setting, novelty can no longer be defined by distinctness alone and must instead reflect geometric separation. 
This necessitates introducing asymmetric novelty parameters for the adversary and the generator. 
Second, several characterization results in the countable example space setting rely explicitly on countability, for example through enumeration arguments, which do not extend to general metric spaces. 
Finally, due to the asymmetry between the adversary’s and the generator’s novelty parameters, constructing hypothesis classes and counterexamples requires careful consideration of their relative scales.

\subsection{Related Work}

Our work builds on the framework of language generation in the limit, introduced by \cite{kleinberg2024language}. Since then, subsequent work has investigated a range of questions surrounding this language generation model. Here, we discuss the most relevant prior works.
\paragraph{Uniform and Non-Uniform Generation}
\cite{li2024generationlenslearningtheory} introduced stronger notions of generation: uniform and non-uniform generation, and provided characterization for uniform and non-uniform generation. \cite{charikar2024exploringfacetslanguagegeneration} also, independently studied non-uniform generation and showed that all countable collections can be non-uniformly generated. \cite{charikar2025pareto} proposed an almost Pareto-optimal algorithm for non-uniform language generation. \cite{hanneke2025unionclosednesslanguagegeneration} studied unions of generatable classes and solved two open problems posed by \cite{li2024generationlenslearningtheory}.

\paragraph{Extensions of Language Generation}
Several extensions of language generation have been proposed. \cite{raman2025generation}, \cite{mehrotra2025language} and \cite{bai2026language} investigated language generation with contamination, in which the adversary is allowed to omit instances or generate instances outside the language.  \cite{peale2025representative} studied representative generation, in which the generator needs to output a distribution that is representative of the seen data.  \cite{anastasopoulos2026safe} introduced and investigated safe language generation, in which there exists a harmful language that the generator needs to avoid. 
\paragraph{Other Related Work}
Several follow-up studies have proposed and studied different definitions of breadth in the language generation model, and showed that requiring the generator to generate diverse instances from the target language makes generation significantly harder \citep{kalavasis2024characterizations,kalavasis2024limits,kleinberg2025density,kleinberg2025language}. 

\cite{alves2021languagelearnabilitylimitgeneral} study language identification in a metric space setting by revisiting the classical identification in the limit framework and extending it to scenarios where distances between languages are measured by a metric. In contrast, our work adopts a fundamentally different perspective: we study \emph{generation} rather than identification, and the metric structure is imposed on the \emph{instance space} rather than on the space of languages.





%% file: colt2026/prelim_colt.tex
\section{Preliminaries} \label{sec:prelim}
\subsection{Example Space and Hypothesis Class}
Throughout the paper, we let $\Xcal$, the example space, be a \emph{separable metric space}, i.e., a metric space with a countable \emph{dense} subset. Let $\Hcal \subseteq \{0, 1\}^{\Xcal}$ be a binary hypothesis class. For a hypothesis $h \in \Hcal$, let $\text{supp}(h) := \{x \in \Xcal: h(x) = 1\}$, that is, its collection of positive examples. 
For any class $\Hcal$ and a finite sequence of examples $x_1, \dots, x_n$, define the version space as:
$$\Hcal(x_1, \dots, x_n) := \{h \in \Hcal: \{x_1, \dots, x_n\} \subseteq \operatorname{supp}(h)\}.$$
For any class $\Hcal$, define its induced closure operator $\langle \cdot \rangle_{\Hcal}$ as:

$$\langle x_1,\dots,x_n \rangle_{\Hcal} := \begin{cases}
			\bigcap_{h \in \Hcal(x_{1:n})} \operatorname{supp}(h), & \text{if $|\Hcal(x_{1:n})| \geq 1$}\\
            \bot, & \text{if $|\Hcal(x_{1:n})| = 0$}
		 \end{cases}.$$

\noindent Intuitively, $\langle x_1,\dots,x_n\rangle_\Hcal$ captures the set of points that must belong to the target hypothesis given consistency with the observed examples. We will also need several other definitions about metric spaces.
\begin{definition}[Closed Ball]\label{def:closeballs}
Assume $\Xcal$ is a separable metric space equipped with metric $\rho(\cdot,\cdot)$. For a point $x\in\Xcal$ or a set $A\subseteq \Xcal$, we denote by $B(x,\vep)$ or $B(A,\vep)$ the closed $\vep$-balls centered at $x$ or $A$, respectively, defined as
\[
B(x,\vep) := \{ y \in \Xcal : \rho(x,y) \le \vep \}, 
\qquad
B(A,\vep) := \{ y \in \Xcal : \inf_{x\in A} \rho(x,y) \le \vep \}.
\]
\end{definition}

\begin{definition}[Covering Number]
    Let $(\Xcal,\rho)$ be a metric space and $\vep>0$.  
    An $\vep$-covering of a set $A\subseteq \Xcal$ is a finite set $\{\theta_1,\dots,\theta_M\}\subset \Xcal$ such that
    $A\subset B(\{\theta_i\}_{i=1}^M,\vep)$. The $\vep$-covering number $N(\vep;A,\rho)$ is defined as
    \[
        N(\vep;A,\rho)
        :=
        \min\left\{M\in\mathbb N:\exists\ \vep\text{-covering of }A \text{ with }M\text{ points}\right\}.
    \]
    If no such finite covering exists, we write $N(\vep;A,\rho)=\infty$.
\end{definition}

To prevent the adversary from exhausting all positive examples and thus making the task of generating unseen positive examples impossible, we make the following assumption for the hypothesis set. This assumption extends a similar condition used in previous work on generation in the countable example space setting (see Definition \ref{def:countable-uus}).

\begin{definition}[$r$-Uniformly Unbounded Support ($r$-UUS)] 
     For some $r>0$, a hypothesis class $\Hcal \subseteq \{0,1\}^{\Xcal}$ satisfies the 
     \emph{$r$-Uniformly Unbounded Support ($r$-UUS)} property if for every $h \in \Hcal$,
     \[
         N(r;\operatorname{supp}(h),\rho)=\infty.
     \]
\end{definition}
The $r$-UUS property satisfies the following monotonicity result, which we prove in Appendix~\ref{sec:appendix-other}.
\begin{lemma}\label{lem:uusdiffeps}
    Let $\Hcal\subseteq \{0,1\}^\Xcal$ be any class satisfying the $r$-UUS property for an $r>0$. Then for any positive $\delta<r$, $\Hcal$ satisfies the $\delta$-UUS property. 
\end{lemma}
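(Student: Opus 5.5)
The plan is to argue by contrapositive, comparing covering numbers at the two scales through a simple containment of closed balls. Fix $h\in\Hcal$ and $0<\delta<r$. Suppose, for contradiction, that $N(\delta;\operatorname{supp}(h),\rho)<\infty$. Then there is a finite set $\{\theta_1,\dots,\theta_M\}\subset\Xcal$ with
\[
\operatorname{supp}(h)\subseteq B(\{\theta_i\}_{i=1}^M,\delta)=\bigcup_{i=1}^M B(\theta_i,\delta).
\]
Note the union formula uses that the set of centers is finite, so the infimum over it is attained.

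Since $\delta<r$, each closed ball satisfies $B(\theta_i,\delta)\subseteq B(\theta_i,r)$, because $\rho(\theta_i,y)\le\delta$ implies $\rho(\theta_i,y)\le r$. Hence $\operatorname{supp}(h)\subseteq B(\{\theta_i\}_{i=1}^M,r)$. So the same finite set is an $r$-covering, and $N(r;\operatorname{supp}(h),\rho)\le M<\infty$. This contradicts the $r$-UUS property of $\Hcal$. Therefore $N(\delta;\operatorname{supp}(h),\rho)=\infty$ for every $h\in\Hcal$, which is exactly the $\delta$-UUS property.

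There is essentially no obstacle. The only point needing care is the definition of $B(A,\vep)$ via an infimum, which is harmless here since $A$ is finite. Equivalently, one records the general monotonicity $N(r;A,\rho)\le N(\delta;A,\rho)$ for $\delta\le r$ and applies it to $A=\operatorname{supp}(h)$.
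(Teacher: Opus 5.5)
Your proof is correct and is essentially the paper's argument. The paper states it directly: no finite set of centers $r$-covers $\operatorname{supp}(h)$, so none $\delta$-covers it. That is the contrapositive form of your containment $B(\theta_i,\delta)\subseteq B(\theta_i,r)$.
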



\subsection{Generatability}
Consider the following two-player game for a hypothesis set $\Hcal$. At the start, the adversary picks a hypothesis $h \in \Hcal$ and does not reveal it to the learner. The game then proceeds over rounds $t = 1, 2, \dots$. In each round $t \in \naturals$, the adversary reveals a \emph{new} example $x_t\in \operatorname{supp}(h)$. The generator, after observing enough examples of $x_1, \dots, x_t$, must output some \emph{new} example inside $\operatorname{supp}(h)$. Crucially, the generator \emph{never} observes its loss as it does not know $h$. The goal of the generator is to eventually perfectly generate new, positive examples $x \in \operatorname{supp}(h)$.

To make this formal, we first define a generator as a mapping from a finite sequence of examples to a new example.

\begin{definition}[Generator] \label{def:generator} A generator is a map $\Gcal: \Xcal^{\star} \rightarrow \Xcal$ that takes a finite sequence of examples $x_1, x_2, \dots$ and outputs a new example $x$. 
\end{definition}

The challenge of extending the framework to metric spaces lies in the definition of \emph{novelty}. In a countable example space, novelty can simply be defined as \emph{distinct instances}. In a metric space, however, distinctness is not enough: one does not expect an instance that is very close to the previous ones to be a \emph{new} instance.

As a result, we define a \emph{new} instance to be $\vep$ far away from the previous instances. To make the definition general, the $\vep$ governing novelty is allowed to be different on the adversary and the generator sides. That is, we define two novelty parameters for the adversary and the generator, denoted as $\vep$ and $\vep'$, respectively.
\begin{definition}[Generatability in the Limit] \label{def:geninlim} Let $\Hcal \subseteq \{0, 1\}^{\Xcal}$  be any hypothesis class satisfying the $r$-$\operatorname{UUS}$ property for some $r>0$ and let
$0<\varepsilon,\varepsilon' \le r$. If there exists a generator $\Gcal$ such that for every $h \in \Hcal$, and any sequence $\{x_i\}_{i=1}^\infty\subseteq \operatorname{supp}(h)$ such that $\operatorname{supp}(h)\subseteq B(\{x_i\}_{i=1}^\infty,\vep)$ (i.e., the adversary eventually reveals an $\vep$-cover of the support), there exists a  $t^{\star} \in \naturals$ such  that $\Gcal(x_{1:s}) \in \operatorname{supp}(h) \setminus  B(\{x_i\}_{i=1}^s,\vep')$ for all $s \geq t^{\star}$, then we say $\Hcal$ is \emph{($\vep,\vep'$)-generatable in the limit}.
\end{definition}

\begin{remark}
    The assumption $\operatorname{supp}(h)\subseteq B(\{x_i\}_{i=1}^\infty,\vep)$ is an analogous to the enumeration assumption in the countable example space setting (see Definition~\ref{def:countable-limit-gen}). It requires the adversary to eventually show the whole structure of the hypothesis set to the generator.
\end{remark}

For the rest of the paper, $\vep$ governs the adversary’s revelation scale, while $\vep'$ governs the generator’s novelty requirement. Our definition above is intentionally general and does not specify the relative magnitudes of $\vep$ and $\vep'$. This is because the relative magnitudes of novelty on the adversary and generator sides are application dependent. For example, consider authors preparing a paper submission to a conference.
An experienced author may read several papers that are closely related in topics (corresponding to a small $\vep$ for adversary) yet generate a paper that is substantially different, for example, by introducing a new technique or applying known methods to a new problem. In this case the generator's $\vep'$ is larger than than the adversary's $\vep$.
Alternatively, a beginning researcher may read a very diverse set of papers spanning many topics (a large $\vep$), yet generate a paper closely aligned with one of them, such as by extending an existing line of work. Here, the generator's $\vep'$ is smaller than the adversary's $\vep$. Our definition of generatability in the limit is designed to encompass both behaviors.

Next, we extend two strengthenings of the
notion of generation in the limit introduced by \cite{li2024generationlenslearningtheory}. The first notion, called uniform generation, is
the strongest. It requires the number $t^\star$ in Definition \ref{def:geninlim} (which is the number of samples required by the generator before it starts generating consistently) to be independent of \emph{both} the target hypothesis set
and the sequence of instances.

\begin{definition}[Uniform Generatability] \label{def:unifgen} Let $\Hcal \subseteq \{0, 1\}^{\Xcal}$  be any hypothesis class satisfying the $r$-$\operatorname{UUS}$ property for some $r>0$ and let
$0<\varepsilon,\varepsilon' \le r$. If there exists a generator $\Gcal$ and $d^{\star} \in \naturals$, such that for every $h \in \Hcal$ and any sequence $\{x_i\}_{i=1}^\infty\subseteq\operatorname{supp}(h)$, if there exists $t$ such that $N(\vep;\{x_i\}_{i=1}^t,\rho)\ge d^\star$, then $\Gcal(x_{1:s}) \in  \operatorname{supp}(h) \setminus  B(\{x_i\}_{i=1}^s,\vep')$ for all $s \geq t$, then we say $\Hcal$ is \emph{($\vep,\vep'$)-uniformly generatable}.
\end{definition}

The second is called non-uniform generation, which requires the number $t^\star$ in Definition~\ref{def:geninlim} to only be independent of the sequence of instances (i.e., it can depend on the target hypothesis).

\begin{definition}[Non-Uniform Generatability]\label{def:nonunifgen} 
Let $\Hcal \subseteq \{0, 1\}^{\Xcal}$ be any hypothesis class satisfying the $r$-$\operatorname{UUS}$ property for some $r>0$ and let
$0<\varepsilon,\varepsilon' \le r$. If there exists a generator $\Gcal$ such that for every $h \in \Hcal$ there exists a $d_h\in\naturals$, such that for any sequence $\{x_i\}_{i=1}^\infty\subseteq \operatorname{supp}(h)$, if there exists~$t$ such that $N(\vep;\{x_i\}_{i=1}^t,\rho)\ge d_h$, then $\Gcal(x_{1:s}) \in  \operatorname{supp}(h) \setminus  B(\{x_i\}_{i=1}^s,\vep')$ for all $s \geq t$, then we say $\Hcal$ is \emph{($\vep,\vep'$)-non-uniformly generatable}.
\end{definition}


It is clear that our framework strictly generalizes the countable example space setting considered in \cite{li2024generationlenslearningtheory}. This is achieved by equipping a countable example space with the discrete $\{0,1\}$-metric. We include a formal proposition in Appendix~\ref{sec:appendix-other} for completeness sake.


%% file: colt2026/characterization_colt.tex
\section{Characterizations of Generatability} \label{sec:chargen}

In this section, we characterize hypothesis classes that are uniformly and non-uniformly generatable, and we provide a sufficient condition for generatability in the limit. We show that, despite the additional metric structure, the resulting characterizations closely parallel those obtained in the countable example space setting (see Appendix~\ref{sec:appendix_countable} for more details). Consequently, we state the main results in this section and defer the proofs to Appendix~\ref{sec:appendix-proof-chargen}. In Appendix~\ref{sec:appendix-algo}, we also have a discussion about the computability of generation.  Although these characterizations are straightforward, generation in metric spaces exhibits several counterintuitive phenomena that do not arise in the countable example space setting. We explore these phenomena in more detail in Section~\ref{sec:diffeps}. 

We first extend the techniques from \cite{li2024generationlenslearningtheory} to account for metric space structure. As a result, we first define a metric analog of the Closure dimension originally defined for countable example space (see Definition~\ref{def:countable-closure-dim}).

\begin{definition}[$(\vep,\vep')$-Closure dimension]
\label{def:closure-dim}
The \emph{$(\vep,\vep')$-Closure dimension} of $\Hcal$, denoted
$\operatorname{C}_\vep^{\vep'}(\Hcal)$, is the largest $d \in \mathbb{N}$ such that
there exist $h \in \Hcal$ and a sequence $\{x_i\}_{i=1}^t \subseteq \operatorname{supp}(h)$
satisfying
\[
N(\vep;\{x_i\}_{i=1}^t,\rho) = d
\quad\text{and}\quad
N(\vep';\langle x_1,\dots,x_t\rangle_{\Hcal},\rho) < \infty.
\]
If the above holds for arbitrarily large $d \in \mathbb{N}$, we set
$\operatorname{C}_\vep^{\vep'}(\Hcal)=\infty$.
If no such sequence exists for $d=1$, we set
$\operatorname{C}_\vep^{\vep'}(\Hcal)=0$.
\end{definition}

Unlike the Closure dimension, the $(\vep,\vep')$-Closure dimension is \emph{scale-sensitive}. In Section~\ref{sec:diffeps} we will further explore its property for different scales of $\vep$ and $\vep'$. Our first theorem uses this dimension to provide a complete characterization of which classes are uniform generatable.

\begin{theorem}[Characterization of Uniform Generatability] \label{thm:unifgen} Let $\Hcal \subseteq \{0, 1\}^{\Xcal}$  be any hypothesis class satisfying the $r$-$\operatorname{UUS}$ property for some $r>0$ and let
$0<\varepsilon,\varepsilon' \le r$. Then, $\Hcal$ is $(\vep,\vep')$-uniformly generatable if and only if $\operatorname{C}_\vep^{\vep'}(\Hcal) < \infty$.

\end{theorem}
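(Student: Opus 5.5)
We prove the two directions separately, following the template of the countable-case characterization of \cite{li2024generationlenslearningtheory} and replacing cardinality by covering numbers.

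\textbf{Sufficiency.} Assume $d:=\operatorname{C}_\vep^{\vep'}(\Hcal)<\infty$ and set $d^\star=d+1$. Define $\Gcal$ as follows. On input $x_{1:s}$, if $\Hcal(x_{1:s})\neq\emptyset$ and $N(\vep';\langle x_{1:s}\rangle_\Hcal,\rho)=\infty$, output any point of $\langle x_{1:s}\rangle_\Hcal\setminus B(\{x_i\}_{i=1}^s,\vep')$. Otherwise output an arbitrary point.

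The output in the first case exists. The finite set $\{x_i\}_{i=1}^s$ is an $\vep'$-covering of $B(\{x_i\}_{i=1}^s,\vep')$, so the latter has finite $\vep'$-covering number. Hence it cannot contain a set whose $\vep'$-covering number is infinite.

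Now let $h\in\Hcal$ and let $\{x_i\}\subseteq\operatorname{supp}(h)$ satisfy $N(\vep;\{x_i\}_{i=1}^t,\rho)\ge d+1$. For $s\ge t$ the $\vep$-covering number is monotone under enlarging the set, so $N(\vep;\{x_i\}_{i=1}^s,\rho)=:d_s\ge d+1$. We also have $h\in\Hcal(x_{1:s})$. If $N(\vep';\langle x_{1:s}\rangle_\Hcal,\rho)<\infty$, then $h$ and $x_{1:s}$ would witness $d_s$ in Definition~\ref{def:closure-dim}, giving $\operatorname{C}_\vep^{\vep'}(\Hcal)\ge d+1$, a contradiction. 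So the first case applies. Its output lies in $\langle x_{1:s}\rangle_\Hcal\subseteq\operatorname{supp}(h)$ and is outside $B(\{x_i\}_{i=1}^s,\vep')$, as required.

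One subtlety is that the definition takes the largest $d$ with $N=d$ exactly. Since we only use the exact value $d_s$, no monotonicity of the closure condition in $d$ is needed.

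\textbf{Necessity.} Suppose $\operatorname{C}_\vep^{\vep'}(\Hcal)=\infty$ and some $\Gcal$ with a constant $d^\star$ uniformly generates. Pick $h$ and $x_{1:t}\subseteq\operatorname{supp}(h)$ with $N(\vep;\{x_i\}_{i=1}^t,\rho)=d\ge d^\star$ and $K:=\langle x_{1:t}\rangle_\Hcal$ of finite $\vep'$-covering number.

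The adversary presents $x_1,\dots,x_t$ and then continues. The goal is to find some later time $s$ and a consistent hypothesis $h'\in\Hcal(x_{1:s})$ such that $\Gcal(x_{1:s})\notin\operatorname{supp}(h')\setminus B(\{x_i\}_{i=1}^s,\vep')$. The difficulty is that $\Gcal$ might output a point of $K$ that is far from the $x_i$'s.

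The fix is for the adversary to append an $\vep'$-cover of $K$ after the $x_i$'s. Write $K\subseteq B(\{c_j\}_{j=1}^m,\vep')$ with $c_j$ arbitrary centers, possibly outside $\operatorname{supp}(h)$. For each $j$ with $B(c_j,\vep')\cap K\neq\emptyset$, choose a point $y_j\in K$ in that ball. Every point of $K$ is then within $2\vep'$ of some $y_j$, but we need distance $\vep'$.

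I expect this factor of $2$ to be the main obstacle. To avoid it, choose instead a maximal $\vep'$-separated subset $\{y_j\}$ of $K$. It is finite: an infinite $\vep'$-separated subset would force $N(\vep'/2;K)=\infty$, and I would still need to rule this out. Its $\vep'$-balls cover $K$ by maximality.

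I would first check whether finiteness of a maximal $\vep'$-separated subset follows from $N(\vep';K,\rho)<\infty$, and if not, adjust which sequence witnesses the dimension. Since the $y_j$ lie in $K\subseteq\operatorname{supp}(h)$, they are valid revelations.

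With $s=t+m$ and the sequence $x_{1:t},y_{1:m}$, we have $N(\vep;\cdot)\ge d^\star$ and $\langle x_{1:s}\rangle_\Hcal\supseteq K$. The output $z=\Gcal(x_{1:s})$ must lie in $\operatorname{supp}(h')$ for every consistent $h'$, so $z\in\langle x_{1:s}\rangle_\Hcal$. Since $\langle x_{1:s}\rangle_\Hcal=\langle x_{1:t}\rangle_\Hcal=K$ (revealing points of $K$ does not change the version space), $z\in K\subseteq B(\{x_i\}_{i=1}^s,\vep')$, a contradiction.
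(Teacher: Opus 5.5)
Your sufficiency argument is correct and is the paper's proof of Lemma~\ref{lem:clossuff}. Your necessity strategy is also the paper's route for Lemma~\ref{lem:closnecc}: append a cover of $K=\langle x_{1:t}\rangle_\Hcal$ by points of $K$, so the closure is unchanged, and then any output either lies in $K\subseteq B(\{x_i\}_{i=1}^s,\vep')$ or is refuted by some consistent $h'$. The gap is exactly the one you flagged, and it is genuine. The assumption $N(\vep';K,\rho)<\infty$ only gives centers somewhere in $\Xcal$, whereas you need centers in $K$ in order to reveal them. Your maximal-separated-set repair fails, because an infinite $\vep'$-separated set can lie inside a single closed $\vep'$-ball; for example, $\{\vep' e_k\}_{k\ge 1}\subseteq B(\mathbf 0,\vep')$ in $\ell^2$. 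The paper's proof makes the same unjustified step: it simply asserts that there exist $z_{t+1},\dots,z_p\in\langle z_1,\dots,z_t\rangle_\Hcal$ forming an $\vep'$-covering. No choice of witness sequence can repair this, because the ``only if'' direction is false as stated.

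Here is a counterexample. In $\ell^2$, take $0<\vep'\le r$ and $0<\vep<\vep'/\sqrt 2$. Let $a_k=\vep' e_{2k-1}$, $g_k=3r\,e_{2k}$, and $A=\{a_k\}_{k\ge 1}$. Let $\Hcal=\{h_1,h_2\}$ with $\operatorname{supp}(h_1)=A\cup\{g_{2k}\}_{k\ge1}$ and $\operatorname{supp}(h_2)=A\cup\{g_{2k-1}\}_{k\ge1}$.
\begin{itemize}
\item The $g$'s are pairwise $3\sqrt2 r$ apart, which gives the $r$-UUS property.
\item For every $d$, $N(\vep;\{a_i\}_{i=1}^d,\rho)=d$, since the $a_i$ are pairwise $\sqrt2\vep'>2\vep$ apart.
\item Meanwhile $\langle a_1,\dots,a_d\rangle_\Hcal=A\subseteq B(\mathbf 0,\vep')$ has $\vep'$-covering number $1$. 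Hence $\operatorname{C}_\vep^{\vep'}(\Hcal)=\infty$.
\item Yet the generator that outputs any not-yet-revealed $a_k$ is a uniform generator with $d^\star=1$. Indeed, $a_k$ lies in both supports, is at distance $\sqrt2\vep'$ from every other $a_m$, and is at distance $\sqrt{\vep'^2+9r^2}$ from every $g_m$.
\end{itemize}

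What your argument does prove is the characterization with the closure condition in Definition~\ref{def:closure-dim} replaced by ``$\langle x_{1:t}\rangle_\Hcal\subseteq B(F,\vep')$ for some finite $F\subseteq\langle x_{1:t}\rangle_\Hcal$.'' With that internal-cover condition, your necessity proof closes verbatim: reveal $F$, and the version space and closure do not change. Your sufficiency proof also still works. If the closure had a finite internal cover, then $(h,x_{1:s})$ would witness $d_s\ge d+1$, a contradiction. Otherwise, the closure cannot lie inside $B(\{x_i\}_{i=1}^s,\vep')$, because $\{x_i\}_{i=1}^s\subseteq\langle x_{1:s}\rangle_\Hcal$ would then be such a cover.
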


Similarly to the countable example space setting, for every finite hypothesis class $\mathcal H$ satisfying the $r$-UUS property with $\varepsilon \ge \varepsilon'$, we have $\operatorname{C}_{\varepsilon}^{\varepsilon'}(\mathcal H) < \infty$. In contrast, when $\varepsilon < \varepsilon'$, there exists a finite hypothesis class that is not non-uniformly generatable. Intuitively, if the generator is required to produce outputs that are more novel than those revealed by the adversary, the generation task may become impossible. Further discussion is provided in Appendix~\ref{sec:appendix-proof-chargen}.

\begin{corollary}
    [Finite Classes are Uniformly Generatable]\label{cor:genfinite}
Let $\Hcal \subseteq \{0, 1\}^{\Xcal}$  be any finite hypothesis class satisfying the $r$-$\operatorname{UUS}$ property for some $r>0$ and let
$0<\varepsilon,\varepsilon' \le r$ such that $\vep'\le \vep$. Then, $\Hcal$ is $(\vep,\vep')$-uniformly generatable. 
\end{corollary}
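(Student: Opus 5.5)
By Theorem~\ref{thm:unifgen}, it suffices to show $\operatorname{C}_\vep^{\vep'}(\Hcal)<\infty$. The natural bound is $\operatorname{C}_\vep^{\vep'}(\Hcal)\le |\Hcal|\cdot K$ for a suitable finite constant $K$. The idea is as follows. A sequence $x_{1:t}$ witnessing a large closure dimension must have a closure $\langle x_{1:t}\rangle_\Hcal$ with finite $\vep'$-covering number. Since $\Hcal$ is finite, this closure is an intersection of the supports of the finitely many consistent hypotheses.

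\textbf{Step 1: reduce to intersections of subclasses.} For each nonempty subclass $S\subseteq\Hcal$, put $A_S=\bigcap_{h\in S}\operatorname{supp}(h)$. Call $S$ \emph{bad} if $N(\vep';A_S,\rho)<\infty$. Now take any sequence $x_{1:t}$ with $\Hcal(x_{1:t})\neq\emptyset$ and $N(\vep';\langle x_{1:t}\rangle_\Hcal,\rho)<\infty$. Let $S=\Hcal(x_{1:t})$. Then $S$ is bad, and every $x_i$ lies in $A_S$, since each $x_i$ belongs to every consistent support. So $\{x_i\}\subseteq A_S$ with $N(\vep';A_S,\rho)<\infty$.

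\textbf{Step 2: compare covering numbers at scales $\vep'\le\vep$.} If $\{\theta_1,\dots,\theta_M\}$ is an $\vep'$-cover of $A_S$, then because $\vep'\le\vep$ it is also an $\vep$-cover of every subset of $A_S$. Hence
\[
N(\vep;\{x_i\}_{i=1}^t,\rho)\le N(\vep';A_S,\rho)=:M_S<\infty .
\]
This is the only place the hypothesis $\vep'\le\vep$ is used. Without it, finitely many $\vep'$-balls could contain unboundedly many $\vep$-separated points, which matches the remark that finite classes can fail when $\vep<\vep'$.

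\textbf{Step 3: take the maximum.} Every witness $d$ in Definition~\ref{def:closure-dim} therefore satisfies $d\le \max_{S\text{ bad}} M_S$. This maximum is over at most $2^{|\Hcal|}$ subclasses, so it is finite, and we get $\operatorname{C}_\vep^{\vep'}(\Hcal)<\infty$. If no subclass is bad, the dimension is $0$. Theorem~\ref{thm:unifgen} then gives uniform generatability. The $r$-UUS assumption enters only through the applicability of Theorem~\ref{thm:unifgen}.

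\textbf{Main obstacle.} The delicate step is Step 2: checking that an $\vep'$-cover of the closure really bounds the $\vep$-covering number of the observed points. This needs the cover centers to be allowed anywhere in $\Xcal$, which the paper's definition of covering number permits. It also needs $\langle x_{1:t}\rangle_\Hcal$ to depend only on the finite set $\Hcal(x_{1:t})$, so that only finitely many values $M_S$ can occur.
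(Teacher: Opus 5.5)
Your proof is correct and follows essentially the same route as the paper. Both arguments reduce to the finitely many intersections $\bigcap_{h\in S}\operatorname{supp}(h)$ over consistent subclasses, note that a witness sequence lies in such an intersection with finite $\vep'$-covering number, and use $\vep'\le\vep$ with monotonicity of covering numbers to bound $\operatorname{C}_\vep^{\vep'}(\Hcal)$ by a maximum of at most $2^{|\Hcal|}$ finite quantities. The one cosmetic difference is that the paper bounds by the $\vep$-covering numbers of these intersections and asserts equality, whereas you bound directly by their $\vep'$-covering numbers, which is slightly looser but needs no lower-bound argument.
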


We next move to characterize non-uniform generatability. We show that our characterization of uniform generatability leads to a characterization of non-uniform generatability.

\begin{theorem}[Characterization of Non-uniform Generatability] \label{thm:nonunifgen} Let $\Hcal \subseteq \{0, 1\}^{\Xcal}$  be any hypothesis class satisfying the $r$-$\operatorname{UUS}$ property for some $r>0$ and let
$0<\varepsilon,\varepsilon' \le r$. The following statements are equivalent. 

\begin{itemize}
\item[(i)] $\Hcal$ is $(\vep,\vep')$-non-uniformly generatable.
\item[(ii)] There exists a \emph{non-decreasing} sequence of classes $\Hcal_1 \subseteq \Hcal_2 \subseteq \dots$ such that $\Hcal =\bigcup_{i\in\naturals} \Hcal_i$ and $\operatorname{C}_\vep^{\vep'}(\Hcal_n)<\infty$ for every $n\in\naturals.$
\end{itemize}
\end{theorem}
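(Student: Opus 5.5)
We prove the two directions separately, following the countable-case argument of \cite{li2024generationlenslearningtheory}. Throughout, Theorem~\ref{thm:unifgen} is the tool for passing between finite closure dimension and uniform generatability.

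\textbf{(ii)$\Rightarrow$(i).} Suppose $\Hcal=\bigcup_n \Hcal_n$ with $\Hcal_n$ non-decreasing and $d_n:=\operatorname{C}_\vep^{\vep'}(\Hcal_n)<\infty$. Each $\Hcal_n$ inherits the $r$-UUS property, so Theorem~\ref{thm:unifgen} gives a uniform generator $\Gcal_n$ for $\Hcal_n$ with threshold $d_n+1$. In fact we do not use these generators directly; we use the closure sets.

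On input $x_{1:s}$, the generator $\Gcal$ computes $m_s:=N(\vep;\{x_i\}_{i\le s},\rho)$. It then looks for the largest $n$ such that:
\begin{itemize}
\item $d_n<m_s$, and
\item $\Hcal_n(x_{1:s})\neq\emptyset$.
\end{itemize}
For this $n$, it outputs a point of $\langle x_{1:s}\rangle_{\Hcal_n}\setminus B(\{x_i\}_{i\le s},\vep')$.

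Such a point exists by the following argument. Since $m_s>d_n$ and $\Hcal_n(x_{1:s})\ne\emptyset$, the definition of closure dimension forces $N(\vep';\langle x_{1:s}\rangle_{\Hcal_n},\rho)=\infty$. A set with infinite $\vep'$-covering number cannot be contained in the $\vep'$-neighbourhood of the finite set $\{x_i\}_{i\le s}$. To make ``largest $n$'' well defined, we restrict the search to $n\le m_s$ (or $n\le s$).

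Correctness: fix $h$ and let $n_h$ be least with $h\in\Hcal_{n_h}$. Set $d_h:=\max_{n\le n_h} d_n+n_h+1$. Once $m_s\ge d_h$, the index $n_h$ is admissible, so the chosen $n$ satisfies $n\ge n_h$. Because the classes are nested, $h\in\Hcal_n(x_{1:s})$, hence $\langle x_{1:s}\rangle_{\Hcal_n}\subseteq\operatorname{supp}(h)$. The output is therefore valid, and $m_s$ is non-decreasing in $s$, so validity persists for all later rounds.

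\textbf{(i)$\Rightarrow$(ii).} Let $\Gcal$ be a non-uniform generator with thresholds $d_h$, and set $\Hcal_n:=\{h\in\Hcal: d_h\le n\}$. These classes are nested and their union is $\Hcal$. We claim $\operatorname{C}_\vep^{\vep'}(\Hcal_n)\le n-1$.

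Suppose instead that some $h\in\Hcal_n$ and $x_{1:t}\subseteq\operatorname{supp}(h)$ satisfy $N(\vep;x_{1:t},\rho)\ge n$ and $N(\vep';\langle x_{1:t}\rangle_{\Hcal_n},\rho)<\infty$. For every $h'\in\Hcal_n(x_{1:t})$, extending the sequence arbitrarily within $\operatorname{supp}(h')$ shows that $\Gcal(x_{1:s})\in\operatorname{supp}(h')\setminus B(x_{1:s},\vep')$ for all $s\ge t$. Hence, for every $s\ge t$, $\Gcal(x_{1:s})$ lies in $\langle x_{1:t}\rangle_{\Hcal_n}$ and outside $B(x_{1:s},\vep')$.

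Now extend the sequence by letting the adversary reveal each generated point: $x_{s+1}:=\Gcal(x_{1:s})$. This is legal, since the new point lies in $\operatorname{supp}(h)$ and is $\vep'$-far from all earlier points. The version space over $\Hcal_n$ only shrinks, but still contains $h$, so the closure only grows while containing $\langle x_{1:t}\rangle_{\Hcal_n}$. This last point needs care, and we handle it by keeping $\langle x_{1:t}\rangle$ fixed. Every subsequent output must be $\vep'$-far from all previous outputs, which by the previous paragraph lie in $\langle x_{1:t}\rangle_{\Hcal_n}$. This produces an infinite $\vep'$-separated subset of a set with finite $\vep'$-covering number.

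\textbf{Main obstacle.} A finite $\vep'$-cover only bounds $\vep'/2$-packings, not $\vep'$-separated sets, so the contradiction above does not close immediately. I would fix this by making the adversary choose $x_{s+1}$ from the cover structure rather than from the generator's outputs. Concretely, take a finite $\vep'$-cover $\{\theta_j\}$ of the closure. For each $j$, reveal a point of $\operatorname{supp}(h)$ that $\vep'$-covers the cover-ball $B(\theta_j,\vep')\cap\langle x_{1:t}\rangle$ wherever this is possible. After finitely many rounds $B(x_{1:s},\vep')\supseteq\langle x_{1:t}\rangle_{\Hcal_n}$, which leaves $\Gcal$ no valid output. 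Revealed points must also be $\vep$-separated from earlier ones, so the construction may need adjusting, for instance by working at slightly perturbed scales.
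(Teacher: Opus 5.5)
Your (ii)$\Rightarrow$(i) is correct and is essentially the paper's argument. Both pick the largest admissible index, which eventually dominates the index of the target, and then use nestedness. The only difference is that you output directly from $\langle x_{1:s}\rangle_{\Hcal_n}$ instead of calling the uniform generators $\Gcal_n$ from Theorem~\ref{thm:unifgen}, and that amounts to the same thing.

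The gap is in (i)$\Rightarrow$(ii), exactly at the obstacle you flagged, and your repair does not close it. The centers $\theta_j$ of a finite $\vep'$-cover need not lie in $\operatorname{supp}(h)$. A point of $\langle x_{1:t}\rangle_{\Hcal_n}$ within $\vep'$ of $\theta_j$ only covers $B(\theta_j,\vep')$ at radius $2\vep'$. (You need points of the closure, not merely of $\operatorname{supp}(h)$; otherwise the version space shrinks and your ``fixed closure'' step breaks.) Your $\vep$-separation worry is moot, since Definition~\ref{def:nonunifgen} imposes no separation on the adversary's sequence.

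The paper proves this direction by noting that $\Gcal$ is a uniform generator for $\Hcal_n=\{h:d_h\le n\}$ and then invoking Theorem~\ref{thm:unifgen}. The necessity half of that theorem (Lemma~\ref{lem:closnecc}) asserts that one can append $z_{t+1},\dots,z_p\in\langle z_{1:t}\rangle_\Hcal$ forming an $\vep'$-cover of the closure. That is precisely the step you could not justify, and it fails when cover centers may lie anywhere in $\Xcal$, which is what the paper's definition of $N(\cdot\,;\cdot,\rho)$ allows.

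In fact no argument can close this step, because under that definition the implication is false. In $\ell^2$ take $r=\vep'=1$ and $\vep=1/2$, with $\operatorname{supp}(h_1)=\{e_{2k-1}\}_{k\ge1}\cup\{3e_{4k}\}_{k\ge1}$ and $\operatorname{supp}(h_2)=\{e_{2k-1}\}_{k\ge1}\cup\{3e_{4k+2}\}_{k\ge1}$; both satisfy the $1$-UUS property.
\begin{itemize}
\item The generator that outputs $e_{2k-1}$ for the least $k$ not yet revealed is always correct. Its output lies in both supports, at distance $\sqrt2$ or $\sqrt{10}$ from every revealed point. So $\Hcal=\{h_1,h_2\}$ is $(1/2,1)$-uniformly, hence non-uniformly, generatable.
\item For $x_i=e_{2i-1}$, $i\le d$, we have $N(1/2;\{x_i\}_{i=1}^d,\rho)=d$. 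But $\langle x_1,\dots,x_d\rangle_\Hcal=\{e_{2k-1}\}_{k\ge1}\subseteq B(\mathbf 0,1)$, so $\operatorname{C}_{1/2}^{1}(\Hcal)=\infty$.
\item Any nested exhaustion of a two-element class eventually equals $\Hcal$, so (ii) fails.
\end{itemize}
The generator's outputs here are exactly what you feared: an infinite $\vep'$-separated subset of a set with $\vep'$-covering number one.

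The direction does hold if the covering number of the closure is taken with centers in the set. Then append an internal finite $\vep'$-cover $z_{t+1},\dots,z_p$ of $\langle x_{1:t}\rangle_{\Hcal_n}$. These points lie in every $h'\in\Hcal_n(x_{1:t})$, so the version space is unchanged and $N(\vep;\cdot,\rho)\ge n\ge d_{h'}$ still holds. That forces $\Gcal(x_{1:p})$ into $\langle x_{1:t}\rangle_{\Hcal_n}\setminus B(\{x_i\}_{i=1}^p,\vep')=\emptyset$, a contradiction.
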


An immediate consequence of combing Corollary~\ref{cor:genfinite} and Theorem~\ref{thm:nonunifgen} is the following corollary.

\begin{corollary}[Countable Classes are Non-uniformly Generatable] \label{cor:countnonunif}
Let $\Hcal \subseteq \{0, 1\}^{\Xcal}$  be any countable hypothesis class satisfying the $r$-$\operatorname{UUS}$ property for some $r>0$ and let
$0<\varepsilon,\varepsilon' \le r$ such that $\vep'\le\vep$. Then, $\Hcal$ is $(\vep,\vep')$-non-uniformly generatable. 
\end{corollary}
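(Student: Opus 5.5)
The statement immediately above is Corollary~\ref{cor:countnonunif}, so the plan is to combine Corollary~\ref{cor:genfinite} with the implication (ii)$\Rightarrow$(i) of Theorem~\ref{thm:nonunifgen}. Enumerate the countable class as $\Hcal=\{h_1,h_2,\dots\}$; if $\Hcal$ is finite, repeat the last hypothesis or invoke Corollary~\ref{cor:genfinite} directly. Then define $\Hcal_n:=\{h_1,\dots,h_n\}$. This gives a non-decreasing sequence $\Hcal_1\subseteq\Hcal_2\subseteq\cdots$ whose union is $\Hcal$.

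Next, check that each $\Hcal_n$ falls under Corollary~\ref{cor:genfinite}. Every $\Hcal_n$ is a subclass of $\Hcal$, and the $r$-UUS property is a condition on each individual hypothesis, so $\Hcal_n$ inherits it. Each $\Hcal_n$ is finite, and we still have $0<\vep'\le\vep\le r$. Hence Corollary~\ref{cor:genfinite} applies, and $\Hcal_n$ is $(\vep,\vep')$-uniformly generatable.

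By Theorem~\ref{thm:unifgen}, applied to $\Hcal_n$ (which is legitimate since $\Hcal_n$ satisfies $r$-UUS), this is equivalent to $\operatorname{C}_\vep^{\vep'}(\Hcal_n)<\infty$. Condition (ii) of Theorem~\ref{thm:nonunifgen} therefore holds, and (i) follows: $\Hcal$ is $(\vep,\vep')$-non-uniformly generatable.

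I do not expect a real obstacle here. The one point to be careful about is that the closure dimension, and hence the closure operator $\langle\cdot\rangle_{\Hcal_n}$, is computed relative to the subclass $\Hcal_n$ and not relative to $\Hcal$. This is exactly the form in which condition (ii) of Theorem~\ref{thm:nonunifgen} is stated, so no extra argument is needed. The hypothesis $\vep'\le\vep$ is essential, because it is what makes Corollary~\ref{cor:genfinite} available.
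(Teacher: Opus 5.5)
Your proposal is correct and follows essentially the same route as the paper. The paper enumerates $\Hcal$, sets $\Hcal_n=\{h_1,\dots,h_n\}$, and applies Corollary~\ref{cor:genfinite} to each finite $\Hcal_n$ before concluding via Theorem~\ref{thm:nonunifgen}. Your extra steps only make explicit what the paper leaves implicit: that $\Hcal_n$ inherits $r$-UUS, the translation to $\operatorname{C}_\vep^{\vep'}(\Hcal_n)<\infty$ through Theorem~\ref{thm:unifgen}, and the finite-class case.
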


We finally give a sufficient condition for generatability in the limit by showing the finite union of uniform generatable classes are generatable in the limit. 
\begin{theorem} [Sufficient Condition for Generatability in the Limit] \label{thm:geninlim}  Let $\Hcal \subseteq \{0, 1\}^{\Xcal}$ be any class satisfying the $r$-\emph{UUS} property for some $r>0$ and let $0<\vep,\vep'\le r$. If there exists a finite sequence of classes $\Hcal_1, \dots, \Hcal_n$ such that $\Hcal = \bigcup_{i = 1}^n \Hcal_i$ and $\operatorname{C}_{\vep}^{\vep'}(\Hcal_i) < \infty$ for all $i \in [n]$, then $\Hcal$ is $(\vep,\vep')$-generatable in the limit. 
\end{theorem}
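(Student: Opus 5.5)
We prove Theorem~\ref{thm:geninlim} by mimicking the countable-case argument of \cite{li2024generationlenslearningtheory} for finite unions: the generator runs the uniform generators of the pieces in parallel and uses the observed data to decide which one to trust. By Theorem~\ref{thm:unifgen}, each $\Hcal_i$ (restricted to the classes that are nonempty; each still satisfies the $r$-UUS property as a subclass) is $(\vep,\vep')$-uniformly generatable. We work directly with the dimensions $d_i := \operatorname{C}_\vep^{\vep'}(\Hcal_i) < \infty$.

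The generator $\Gcal$ is defined as follows. Given $x_{1:s}$, it first computes the \emph{active set} $A_s := \{i \in [n] : \Hcal_i(x_{1:s}) \neq \emptyset\}$, the indices whose version space is still consistent. For each $i \in A_s$ with $N(\vep;\{x_j\}_{j=1}^s,\rho) > d_i$, the definition of $d_i$ gives that every $h' \in \Hcal_i(x_{1:s})$ has $N(\vep';\langle x_{1:s}\rangle_{\Hcal_i},\rho) = \infty$. We call such an $i$ \emph{ready}. The generator takes the ready index $i$ with the largest intersection $\langle x_{1:s}\rangle_{\Hcal_i}$ in the following ordering sense. Rather than comparing sets, it simply picks the smallest ready index in $A_s$ and outputs a point of $\langle x_{1:s}\rangle_{\Hcal_i} \setminus B(\{x_j\}_{j\le s},\vep')$. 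This point exists because a finite set cannot $\vep'$-cover a set whose $\vep'$-covering number is infinite.

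Correctness is shown as follows. Let the target be $h \in \Hcal_{i^\star}$, and let the sequence eventually $\vep$-cover $\operatorname{supp}(h)$. Since $N(\vep;\operatorname{supp}(h),\rho)=\infty$ by Lemma~\ref{lem:uusdiffeps} (as $\vep\le r$; the case $\vep = r$ is immediate), we get $N(\vep;\{x_j\}_{j\le s},\rho)\to\infty$. The reason is that if the covering numbers stayed bounded by some $K$, then $\operatorname{supp}(h) \subseteq B(\{x_j\},\vep)$ would force $\operatorname{supp}(h)$ to be coverable at scale $2\vep$ by $K$ points. This is the first delicate point: the covering radius doubles. So I would instead argue via $\vep$-separated subsets. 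The adversary's points lie in $\operatorname{supp}(h)$, and an unbounded $\vep$-packing of the revealed points follows from the infinite packing of $\operatorname{supp}(h)$ at scale $2\vep$ or $r$. I expect to need the $r$-UUS property at a suitable scale here, and this is where I anticipate the main technical obstacle.

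Granting that covering numbers diverge, eventually every index is ready. The remaining issue is that the smallest active index $i$ may not equal $i^\star$. Any index $i$ that stays active forever has $h_i \in \Hcal_i$ consistent with the entire sequence, and the sequence $\vep$-covers $\operatorname{supp}(h)$. However, $\langle x_{1:s}\rangle_{\Hcal_i}$ need not lie in $\operatorname{supp}(h)$, so the simple "smallest index" rule fails. The fix I would try is to output from $\bigcap_{i\in A_s}\langle x_{1:s}\rangle_{\Hcal_i} = \langle x_{1:s}\rangle_{\Hcal}$, which is contained in $\operatorname{supp}(h)$. This requires showing that this finite intersection has infinite $\vep'$-covering number eventually. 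That follows if $\Hcal$ itself has $\operatorname{C}_\vep^{\vep'}$ bounded along the sequence, and this is exactly the finite-union subtlety resolved in the countable case by a "closure stabilizes" argument. I would attempt it by induction on $n$, discarding indices that become inactive after finitely many steps.
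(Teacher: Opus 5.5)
\textbf{The selection rule.} This is where your proposal departs from the paper and breaks. Generating from $\langle x_{1:s}\rangle_{\Hcal}=\bigcap_{i\in A_s}\langle x_{1:s}\rangle_{\Hcal_i}$ requires that set to eventually have infinite $\vep'$-covering number. Finite unions give no such guarantee, even in the discrete setting of Proposition~\ref{thm:discretecontinuous}. Take $\mathbb Z$ with the discrete metric and $\vep,\vep'<1$. Let $\Hcal_1=\{h^\star\}$ with $\operatorname{supp}(h^\star)=\{0,1,2,\dots\}$, and let $\Hcal_2=\{h_A: A\subseteq\{0,1,2,\dots\}\text{ finite}\}$ with $\operatorname{supp}(h_A)=A\cup\{-1,-2,\dots\}$.
\begin{itemize}
\item Both classes have closure dimension $0$.
\item When $h^\star$ is the target, every prefix is consistent with $h_{\{x_1,\dots,x_s\}}$. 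Hence $\langle x_{1:s}\rangle_{\Hcal}=\{x_1,\dots,x_s\}$ never contains a novel point.
\item Index $2$ stays active forever, so induction on $n$ or discarding inactive indices cannot help. The countable-case proof also has no ``closure of the union stabilizes'' step to import.
\end{itemize}
The missing idea, which the paper takes from the countable proof, is to \emph{freeze} the per-class closures at the first time $t^\star$ at which every surviving index is ready. This leaves finitely many fixed sets $K_i=\langle x_{1:t^\star}\rangle_{\Hcal_i}$, each with infinite $\vep'$-covering number, and the one for the class containing the target lies inside $\operatorname{supp}(h^\star)$. It then remains to identify in the limit some $K_i\subseteq\operatorname{supp}(h^\star)$. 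The paper enumerates a countable dense subset of each $K_i$ and picks the index whose enumeration has the longest prefix confirmed by the data. In the example above, $K_2$ contains never-revealed negative integers, so index $1$ eventually wins. Be aware that the paper ``confirms'' $z$ via $z\in B(\{x_i\}_{i\le t},\vep)$. A point outside $\operatorname{supp}(h^\star)$ can lie within $\vep$ of revealed points, so this test does not in general bound the confirmed prefix of a wrong candidate. It is sound in the discrete or packing-embedded setting, but in a genuine metric space this step needs more care.

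\textbf{Divergence of covering numbers.} The step you flag as the main obstacle is also a genuine gap, and the paper does not close it: its proof simply assumes that a time $t^\star$ with $N(\vep;\{x_i\}_{i\le t^\star},\rho)=c+1$ exists. Your packing idea does work when $3\vep<r$. If the revealed covering numbers stayed at most $K$, a maximal set of revealed points with pairwise distances $>2\vep$ would have at most $K$ elements. Then $\operatorname{supp}(h)$ would lie in the $r$-balls around these points, contradicting $r$-UUS.

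For $\vep$ close to $r$, however, both this step and the statement fail. In $\ell^2$ let $x_k=\vep e_{2k}$ and $y_k=\vep e_{2k}+\vep e_{2k+1}$, and fix $r\in(\vep,\sqrt2\,\vep)$. By Bessel's inequality every ball of radius $r$ contains only finitely many $y_k$. Meanwhile $\rho(x_k,y_k)=\vep$ and every $x_k$ lies in $B(\mathbf 0,\vep)$. Let $\Hcal$ consist of all $h$ with $\operatorname{supp}(h)=\{x_i:i\in I\}\cup\{y_j:j\in J_K\}$, where $\emptyset\neq K\subseteq\{2,3,\dots\}$, $J_K=\{k^n:k\in K,\ n\ge1\}$ and $J_K\subseteq I$.
\begin{itemize}
\item $\Hcal$ satisfies $r$-UUS.
\item $\operatorname{C}_\vep^{\vep'}(\Hcal)\le1$ for every $\vep'\le\vep$. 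Any sample with $\vep$-covering number at least $2$ must contain some $y_m$, and its closure then contains every $y_{m^j}$.
\item An adversary revealing only $x$'s $\vep$-covers the support while the covering number stays $1$.
\item $\Hcal$ is not $(\vep,\vep')$-generatable in the limit, by a diagonalization in the style of Lemma~\ref{lem:notgeninlimit}. Stage $m$ reveals $x_{2^m}$ and then $x_{p_m},x_{p_m^2},\dots$, with $p_m$ the $m$-th odd prime, until the generator outputs some $x_{p_m^j}$ or $y_{p_m^j}$. It must do so to succeed on the target with $K=\{p_m\}$. The final target has $K=\{2\}$ and $I$ equal to the revealed indices, and each stage-ending output is a mistake for it.
\end{itemize}
So the theorem, already for $n=1$, needs an extra hypothesis (for example $3\vep<r$) guaranteeing that admissible sequences have unbounded $\vep$-covering numbers. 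No choice of ``suitable scale'' in your argument can avoid this.
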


It is worth noting that characterizing generation in the limit is already subtle even in the absence of any metric structure \citep{li2024generationlenslearningtheory}. Nevertheless, we show that our sufficient condition is in an ``almost necessary'' sense, as suggested by the counterintuitive behavior of generatability under unions. The proof is also shown in Appendix~\ref{sec:appendix-proof-chargen}.

\begin{theorem}
\label{thm: geninlim-union}
    Let $(\Xcal,\rho)$ be any metric space. For any $r>0, 0\le\vep,\vep'\le r/2$ such that $N(r; \Xcal,\rho)=\infty$, there exists $\Hcal_1\subseteq\{0,1\}^\Xcal$ and $\Hcal_2\subseteq\{0,1\}^\Xcal$ such that $\Hcal_1$ is $(\vep,\vep')$-uniformly generatable, $\Hcal_2$ is $(\vep,\vep')$-non-uniformly generatable, but $\Hcal_1\cup\Hcal_2$ is not $(\vep,\vep')$-generatable in the limit. 
\end{theorem}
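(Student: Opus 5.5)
The plan is to reduce to the countable setting. I would embed a known countable counterexample into an infinite well-separated subset of $\Xcal$, on which the metric behaves like the discrete metric at scales $\vep,\vep'\le r/2$.

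First, I would build the separated set. Since $N(r;\Xcal,\rho)=\infty$, a greedy construction gives points $z_1,z_2,\dots$ with $z_{n+1}\notin B(\{z_1,\dots,z_n\},r)$. Indeed, if the greedy step ever got stuck, $\{z_1,\dots,z_n\}$ would be a finite $r$-cover of $\Xcal$. Hence $\rho(z_i,z_j)>r\ge 2\max(\vep,\vep')$ for $i\ne j$, and on $Z=\{z_i\}$ the geometry at these scales is trivial. For $S\subseteq Z$ we have $B(S,\vep)\cap Z=S$ and $B(S,\vep')\cap Z=S$. For a finite $S\subseteq Z$, $N(\vep;S,\rho)=|S|$, because two distinct $z_i$ cannot share a closed $\vep$-ball. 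Likewise, for any $A\subseteq Z$, $N(\vep';A,\rho)<\infty$ if and only if $A$ is finite. So, for hypotheses supported inside $Z$:
\begin{itemize}
\item the condition $\operatorname{supp}(h)\subseteq B(\{x_i\},\vep)$ is exactly the enumeration condition of the countable model;
\item the novelty requirement $\Gcal(x_{1:s})\notin B(\{x_i\}_{i\le s},\vep')$ is exactly ``unseen'' (a generator outputting outside $Z$ is never correct, so we may restrict to generators with values in $Z$);
\item the $(\vep,\vep')$-closure dimension equals the countable closure dimension of the pulled-back class on $\naturals$.
\end{itemize}

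Second, I would take the countable construction from \cite{li2024generationlenslearningtheory} (recalled in Appendix~\ref{sec:appendix_countable}). It gives classes $\Hcal_1',\Hcal_2'\subseteq\{0,1\}^{\naturals}$ with infinite supports, where $\Hcal_1'$ is uniformly generatable, $\Hcal_2'$ is non-uniformly generatable (it is countable with a nested finite-closure-dimension decomposition), and $\Hcal_1'\cup\Hcal_2'$ is not generatable in the limit. I would push these forward via $i\mapsto z_i$ to get $\Hcal_1,\Hcal_2$. By the dictionary above, Theorems~\ref{thm:unifgen} and~\ref{thm:nonunifgen} transfer the positive statements, since the closure dimensions and the nested decomposition carry over verbatim. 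The negative statement transfers directly: any metric generator for $\Hcal_1\cup\Hcal_2$, with outputs projected back to indices, would be a countable generator in the limit for $\Hcal_1'\cup\Hcal_2'$.

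The main obstacle is the $r$-UUS hypothesis. The definitions require $N(r;\operatorname{supp}(h),\rho)=\infty$, but an infinite set with pairwise distances $>r$ could in principle sit inside finitely many closed $r$-balls, since two points at distance up to $2r$ fit in one. I would first try to refine $Z$ so that every infinite subset of it is not $r$-coverable. If infinitely many $z_i$ lie in a single ball $B(c,r)$, I would pass via a Ramsey/diagonal argument to a subsequence avoiding this, or re-run the greedy step choosing $z_{n+1}\notin B(\{z_1,\dots,z_n\},2r)$ whenever possible.

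If that fails, the fallback is to append to every support a common set $W$ with $N(r;W,\rho)=\infty$, placed far from $Z$ or otherwise geometrically harmless, for instance another greedy $r$-separated sequence handled the same way. I would then check that $W$ lies in every closure, so it never affects closure dimensions, that a generator gains nothing by outputting points of $W$ once they have been revealed, and that the adversary's covering requirement still forces enumeration of the $Z$-part. Verifying these points cleanly is where I expect the real work to lie.
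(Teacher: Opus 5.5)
Your main line is the paper's proof. You take an infinite set $Z=\{z_i\}$ with pairwise distances $>r$, on which closed balls of radius at most $r/2$ contain at most one point. There, $\vep$-covering becomes enumeration and $\vep'$-novelty becomes distinctness, and you push a countable counterexample forward. The paper packages this as Lemma~\ref{lem:carryZtoX}, transferring all three notions by simulating strategies; your routing of the positive parts through Theorems~\ref{thm:unifgen} and~\ref{thm:nonunifgen} is equivalent.

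One correction concerns the source. The separating countable pair is not in \cite{li2024generationlenslearningtheory}, and it is not recalled in Appendix~\ref{sec:appendix_countable}, which only contains the characterizations and the finite-union sufficient condition. The pair you need is a uniformly generatable $\mathcal L_1$ and a non-uniformly generatable $\mathcal L_2$ whose union is not generatable in the limit. That is Theorem~3.2 of \cite{hanneke2025unionclosednesslanguagegeneration}, which is what the paper imports.

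The other correction is that the $r$-UUS ``obstacle'' comes from misreading the definitions. They require the $r_0$-UUS property for \emph{some} $r_0$ with $\vep,\vep'\le r_0$, not for the $r$ in the theorem. Take $r_0=r/2$. By the same triangle-inequality argument you already used, a closed $r/2$-ball contains at most one $z_i$. So every infinite $S\subseteq Z$ has $N(r/2;S,\rho)=\infty$, and the pushed-forward classes are $r/2$-UUS automatically. The paper leaves this implicit, so flagging it is reasonable, but no refinement of $Z$ is needed.

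More importantly, you must discard the fallback, because it would destroy the negative result. Suppose every support contains a fixed $W$ with $N(r;W,\rho)=\infty$. Then also $N(\vep';W,\rho)=\infty$ since $\vep'<r$, so $W\not\subseteq B(\{x_i\}_{i=1}^s,\vep')$ for every finite prefix. The generator that always outputs such a point of $W$ is correct at every round for every hypothesis, so $\Hcal_1\cup\Hcal_2$ would be uniformly generatable. The generator does not ``gain nothing'' from $W$; it gains a trivial winning strategy.
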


%% file: colt2026/discussion_colt.tex
\section{Discussion}

We initiated a theoretical study of generation in metric spaces, extending prior frameworks that were restricted to countable domains. Our results show that introducing geometric structure leads to new scale-dependent and metric-dependent phenomena, while preserving several older characterizations in modified form. In particular, we identified doubling metric spaces as a broad regime in which generatability behaves in a stable and robust manner, and demonstrated that outside this regime, generatability can be highly sensitive to both the novelty parameters and the choice of metric.

Although our framework is theoretical, it aligns naturally with how novelty is operationalized in several continuous-domain generative applications. For example, in protein structure generation, a generated structure is often classified as novel only if its maximum structural similarity to all training structures falls below a fixed threshold, corresponding to a distance-based exclusion relative to the training set \citep{linproteingen}. More broadly, empirical studies of generative model generalization have observed that generated samples may lie closer to the training set than independently held-out test samples in a feature space, reflecting a separation between the scale at which training data are revealed and the scale at which novelty is required of generated outputs \citep{jiralerspong2023feature}. Our framework makes this distinction explicit by allowing separate novelty parameters for the adversary and the generator.

Several directions remain open, including studying generation when outputs are considered equivalent under natural symmetries, identifying geometric conditions beyond doubling that ensure stable behavior, and strengthening our understanding of generation in the limit under additional structural assumptions.

%% file: colt2026/appendix_discrete_summary.tex
\section{Summary of Results for Countable Example Space Generation}
\label{sec:appendix_countable}

We recall the formal definitions of generation in countable example spaces and the main structural results established in prior work, which motivate and inform our development of generation in metric spaces.

Through this section, $\Xcal$ is a countable example space and $\Hcal \subseteq \{0,1\}^{\Xcal}$ is a binary hypothesis class.

\subsection{Definitions of Generation}
\begin{definition}[Uniformly Unbounded Support (UUS)\citep{li2024generationlenslearningtheory}]
\label{def:countable-uus}
    A hypothesis class $\Hcal\subseteq\{0,1\}^\Xcal$ satisfies the \emph{Uniformly Unbounded Support (UUS) property} if $|\operatorname{supp}(h)|=\infty$ for every $h\in\Hcal$.
\end{definition}

Let $\Xcal$ be an countable example space and let $\Hcal \subseteq \{0,1\}^{\Xcal}$ be a hypothesis class satisfying the UUS property.

\begin{definition}[Generatability in the Limit {\citep{li2024generationlenslearningtheory}}]\label{def:countable-limit-gen}
The hypothesis class $\Hcal\subseteq{\{0,1\}}^\Xcal$ is \emph{generatable in the limit} if there exists a generator $\Gcal$ such that for every $h \in \Hcal$ and every enumeration $\{x_1,x_2,\dots\}$ of $\operatorname{supp}(h)$, there exists $t^\star \in \naturals$ such that $\Gcal(x_{1:s}) \in \operatorname{supp}(h)\setminus \{x_1,\dots,x_s\}
\quad$ for all $s \ge t^\star.$
\end{definition}

\begin{definition}[Uniform Generatability {\citep{li2024generationlenslearningtheory}}]\label{def:countable-uniform-gen}
The hypothesis class $\Hcal\subseteq\{0,1\}^\Xcal$ is \emph{uniformly generatable} if there exists a generator $\Gcal$ and an integer $d^\star \in \naturals$ such that for every $h \in \Hcal$ and every sequence $\{x_1,x_2,\dots\} \subseteq \operatorname{supp}(h)$, if there exists $t\in\naturals$ such that $|\{x_1,\dots,x_t\}|=d^\star$
then $\Gcal(x_{1:s}) \in \operatorname{supp}(h)\setminus \{x_1,\dots,x_s\}
\quad \text{for all } s \ge t.$

\end{definition}

\begin{definition}[Non-uniform Generatability {\citep{li2024generationlenslearningtheory}}]\label{def:countable-nonuniform-gen}
The hypothesis class $\Hcal\subseteq{\{0,1\}^\Xcal}$ is \emph{non-uniformly generatable} if there exists a generator $\Gcal$ such that for every $h \in \Hcal$, there exists an integer $d_h^\star \in \naturals$ such that for every sequence $\{x_1,x_2,\dots\} \subseteq \operatorname{supp}(h)$, if there exists $t\in\naturals$ such that $|\{x_1,\dots,x_t\}|=d^\star$, then $ \Gcal(x_{1:s}) \in \operatorname{supp}(h)\setminus \{x_1,\dots,x_s\} \quad \text{for all } s \ge t.$
\end{definition}

\subsection{Closure Dimension and Characterizations}

\begin{definition}[Closure Dimension {\citep{li2024generationlenslearningtheory}}]\label{def:countable-closure-dim}
The \emph{closure dimension} of a hypothesis class $\Hcal\subseteq\{0,1\}^\Xcal$, denoted $\operatorname{C}(\Hcal)$, is the largest integer $d \in \naturals$ such that there exist distinct points $x_1,\dots,x_d \in \Xcal$ with
$
\langle x_1,\dots,x_d \rangle_{\Hcal} \neq \bot
\quad \text{and} \quad
|\langle x_1,\dots,x_{d} \rangle_{\Hcal}|<\infty.
$
If such $d$ exists for arbitrarily large values, we write $\operatorname{C}(\Hcal)=\infty$.
\end{definition}

\begin{theorem}[Uniform Generatability Characterization {\citep{li2024generationlenslearningtheory}}]\label{thm:countable-uniform-char}
A hypothesis class $\Hcal \subseteq \{0,1\}^{\Xcal}$ is uniformly generatable if and only if
$
\operatorname{C}(\Hcal) < \infty.
$
\end{theorem}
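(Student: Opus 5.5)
We would prove this by reduction to Theorem~\ref{thm:unifgen}. The idea is to equip the countable space $\Xcal$ with the discrete metric $\rho(x,y)=\mathbbm{1}[x\neq y]$ and fix $r=\vep=\vep'=1/2$. Then $(\Xcal,\rho)$ is a separable metric space, since $\Xcal$ itself is a countable dense subset. Every closed ball is a singleton: $B(x,1/2)=\{x\}$ and $B(A,1/2)=A$. Hence for every $A\subseteq\Xcal$ the covering number is $N(1/2;A,\rho)=|A|$, with the value $\infty$ exactly when $A$ is infinite. Every step of the plan consists of rewriting a metric-space notion in this dictionary.

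\emph{Step 1: hypotheses and novelty.} The $1/2$-UUS property becomes $|\operatorname{supp}(h)|=\infty$ for every $h$, which is Definition~\ref{def:countable-uus}. The novelty requirement $\Gcal(x_{1:s})\in\operatorname{supp}(h)\setminus B(\{x_i\}_{i=1}^s,1/2)$ becomes $\Gcal(x_{1:s})\in\operatorname{supp}(h)\setminus\{x_1,\dots,x_s\}$.

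\emph{Step 2: uniform generatability.} The trigger $N(1/2;\{x_i\}_{i=1}^t,\rho)\ge d^\star$ in Definition~\ref{def:unifgen} reads $|\{x_1,\dots,x_t\}|\ge d^\star$, whereas Definition~\ref{def:countable-uniform-gen} uses $=d^\star$. These are equivalent because $t\mapsto|\{x_1,\dots,x_t\}|$ is non-decreasing and increases by at most one per step. So if some $t$ has count $\ge d^\star$, there is a first $t_0\le t$ with count exactly $d^\star$. The countable guarantee then holds for all $s\ge t_0$, in particular for all $s\ge t$. Conversely, count exactly $d^\star$ is a special case of count $\ge d^\star$. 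Therefore uniform generatability in the sense of Definition~\ref{def:countable-uniform-gen} coincides with $(1/2,1/2)$-uniform generatability.

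\emph{Step 3: closure dimensions.} We would show $\operatorname{C}_{1/2}^{1/2}(\Hcal)=\operatorname{C}(\Hcal)$.
\begin{itemize}
\item Suppose $h$ and a sequence $x_{1:t}\subseteq\operatorname{supp}(h)$ witness the metric dimension with value $d$. Then the sequence has exactly $d$ distinct points $z_1,\dots,z_d$. Since the version space depends only on the set of points, $\langle x_{1:t}\rangle_\Hcal=\langle z_{1:d}\rangle_\Hcal$. This closure is not $\bot$ because $h$ lies in the version space, and it is finite because its $1/2$-covering number is finite. So $z_{1:d}$ witnesses $\operatorname{C}(\Hcal)\ge d$.
\item Conversely, suppose distinct $z_{1:d}$ have $\langle z_{1:d}\rangle_\Hcal\neq\bot$ and finite. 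Nonemptiness of the version space supplies an $h$ with $z_{1:d}\subseteq\operatorname{supp}(h)$. Taking the sequence $z_1,\dots,z_d$ gives covering number $d$ and a closure of finite covering number.
\end{itemize}
Both dimensions therefore range over the same set of values. The conventions for $\infty$ match because the same $d$'s are attained. The convention for $0$ matches because both dimensions are $0$ exactly when no witness with $d\ge1$ exists.

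\emph{Conclusion and main obstacle.} Applying Theorem~\ref{thm:unifgen} with $r=\vep=\vep'=1/2$ yields the claim. The step needing most care is the bookkeeping in Steps 2 and 3. In Step 2 one must get the "$\ge$ versus $=$" direction right, using that the distinct-count grows by unit steps. In Step 3 one must handle repeated points in sequences and the $\bot$ case, using that the metric definition requires the sequence to lie in some $\operatorname{supp}(h)$. Everything else is a direct translation under the discrete metric.
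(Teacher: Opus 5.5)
Your reduction is correct, but it runs in the opposite direction from the paper. The paper does not prove this statement; it quotes it from \cite{li2024generationlenslearningtheory}. That source's direct argument has two halves. For sufficiency: once more than $\operatorname{C}(\Hcal)$ distinct points have been seen, the closure is infinite, so output an unseen element of it. For necessity: reveal $d$ points with finite closure, then the rest of the closure, and switch to a consistent hypothesis omitting the generator's output. The paper transplants exactly this argument into Lemmas~\ref{lem:closnecc} and~\ref{lem:clossuff} to obtain Theorem~\ref{thm:unifgen}.

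You instead derive the countable theorem as a corollary of the metric one, through the discrete-metric dictionary of Proposition~\ref{thm:discretecontinuous}. This is not circular, since neither lemma uses the countable result. Your Steps 2--3 also make precise what the paper only sketches: the ``$\ge d^\star$ versus $=d^\star$'' trigger, and the identity $\operatorname{C}_{1/2}^{1/2}(\Hcal)=\operatorname{C}(\Hcal)$, which the proposition does not address.

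What the direct route buys is independence from the metric black box, and that black box is only safe in your instance. Lemma~\ref{lem:closnecc} assumes a finite $\vep'$-cover of the closure can be taken with centers \emph{inside} the closure, whereas the paper's covering numbers allow centers anywhere in $\Xcal$. In $\ell^2$, the points $\vep' e_k$ all lie in $B(\mathbf 0,\vep')$, but no finite subset of them $\vep'$-covers them. Indeed, take $\operatorname{supp}(h_i)=\{\vep' e_{2k-1}\}_k\cup U_i$, where $U_1,U_2$ are disjoint infinite sets of points $2re_{2k}$, and $\vep<\vep'/\sqrt2$. Then the closure dimension is infinite, yet outputting a fresh $\vep' e_{2k-1}$ always succeeds.

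Under the discrete metric at scale $1/2$, every ball is a singleton, so centers are forced into the covered set. Your appeal to Theorem~\ref{thm:unifgen} is therefore valid here---and unfolded, it is exactly the direct argument.
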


\begin{theorem}[Non-uniform Generatability Characterization {\citep{li2024generationlenslearningtheory}}]\label{thm:countable-nonuniform-char}
A hypothesis class $\Hcal \subseteq \{0,1\}^{\Xcal}$ is non-uniformly generatable if and only if there exists a non-decreasing sequence of subclasses
$
\Hcal_1 \subseteq \Hcal_2 \subseteq \cdots
\quad \text{such that} \quad
\Hcal = \bigcup_{i=1}^\infty \Hcal_i
$
and
$
\operatorname{C}(\Hcal_i) < \infty
\quad \text{for all } i \in \naturals.
$
\end{theorem}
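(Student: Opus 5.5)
The plan is to prove the two directions separately. The \emph{if} direction will use Theorem~\ref{thm:countable-uniform-char} only as motivation: I will directly build a single generator that selects adaptively among the subclasses $\Hcal_n$. The \emph{only if} direction will define the subclasses from the sample thresholds $d_h$ and bound their closure dimensions by feeding the generator's own outputs back to it. Throughout, $\Xcal$ is countable, so I fix an enumeration of $\Xcal$, and generators may pick "the first element of a set in this enumeration."

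\textbf{($\Leftarrow$)} Let $\Hcal_1\subseteq\Hcal_2\subseteq\cdots$ with $\operatorname{C}(\Hcal_n)<\infty$ and $\bigcup_n\Hcal_n=\Hcal$. The generator acts on input $x_{1:s}$ with $d:=|\{x_1,\dots,x_s\}|$ distinct points, as follows.
\begin{enumerate}
\item Let $n$ be the largest index $n\le d$ such that $\operatorname{C}(\Hcal_n)<d$ and $\Hcal_n(x_{1:s})\neq\emptyset$. If no such $n$ exists, output anything.
\item Output the first element of $\langle x_{1:s}\rangle_{\Hcal_n}\setminus\{x_1,\dots,x_s\}$.
\end{enumerate}
Step 2 is well defined. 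Since $\operatorname{C}$ is the \emph{largest} $d$ for which some $d$ distinct points have a finite nonempty closure, and $d>\operatorname{C}(\Hcal_n)$, the closure $\langle x_{1:s}\rangle_{\Hcal_n}$ is infinite. Hence it contains a point outside the $d$ seen points.

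Now take a target $h\in\Hcal_m$ and set $d_h:=\max\{m,\operatorname{C}(\Hcal_m)+1\}$. Once $d\ge d_h$, the index $m$ itself satisfies the conditions in step 1, so the chosen index satisfies $n\ge m$. By nestedness $h\in\Hcal_n(x_{1:s})$. Therefore $\langle x_{1:s}\rangle_{\Hcal_n}\subseteq\operatorname{supp}(h)$, and the output is an unseen point of $\operatorname{supp}(h)$. Because $d$ is non-decreasing in $s$, this holds for every $s\ge t$, which gives non-uniform generatability.

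\textbf{($\Rightarrow$)} Let $\Gcal$ witness non-uniform generatability. For each $h$ let $d_h$ be the least valid threshold, and define $\Hcal_n:=\{h\in\Hcal: d_h\le n\}$. These classes are nested and their union is $\Hcal$. I claim $\operatorname{C}(\Hcal_n)<n$.

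Suppose instead that there are distinct $x_1,\dots,x_d$ with $d\ge n$ and $K:=\langle x_{1:d}\rangle_{\Hcal_n}$ finite and nonempty. Consider the continuation $x_{d+1}:=\Gcal(x_{1:d})$, $x_{d+2}:=\Gcal(x_{1:d+1})$, and so on. I show by induction that every $x_j$ lies in $K$, so that every $h\in\Hcal_n(x_{1:d})$ stays consistent with the whole sequence.
\begin{itemize}
\item Suppose $x_{1:s}$ lies in $K$. Then every $h\in\Hcal_n(x_{1:d})$ is consistent with $x_{1:s}$.
\item For each such $h$, the prefix $x_{1:s}$ already contains $d\ge n\ge d_h$ distinct points. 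So the guarantee of $\Gcal$ for $h$ applies (to any extension of $x_{1:s}$ inside $\operatorname{supp}(h)$), and it forces $\Gcal(x_{1:s})\in\operatorname{supp}(h)\setminus\{x_1,\dots,x_s\}$.
\item Intersecting over all these $h$ gives $x_{s+1}\in K\setminus\{x_1,\dots,x_s\}$.
\end{itemize}
This produces infinitely many distinct points of the finite set $K$, a contradiction.

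The main point of care is this induction: the version space must not shrink as the generator's outputs are appended. It does not, precisely because each appended point lies in the closure $K$.
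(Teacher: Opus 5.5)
Your proof is correct, but it takes a different route from the paper. The paper cites this countable result and proves its metric analogue, Theorem~\ref{thm:nonunifgen}, by the same scheme. That scheme first shows that non-uniform generatability is equivalent to being a nested countable union of \emph{uniformly} generatable classes, and then invokes the uniform characterization as a black box.
\begin{itemize}
\item For sufficiency, the paper takes uniform generators $\Gcal_n$ for the $\Hcal_n$, with sample complexities $d_{\Gcal_n}$, and runs $\Gcal_{n_t}$ with $n_t=\max\{n\in[t]: d_{\Gcal_n}\le d_t\}$.
\item For necessity, it observes that $\Gcal$ is itself a uniform generator for $\Hcal_n=\{h: d_h\le n\}$, so $\operatorname{C}(\Hcal_n)<\infty$ by the uniform theorem.
\end{itemize}

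You instead inline both halves of the uniform characterization. Your generator is the closure-based uniform generator, with the thresholds $\operatorname{C}(\Hcal_n)+1$ in place of $d_{\Gcal_n}$. Your explicit check that $\Hcal_n(x_{1:s})\neq\emptyset$ keeps every step well defined. Your necessity argument is a form of the pigeonhole argument behind the uniform lower bound.

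Each route buys something different. Yours gives explicit quantities, namely $d_h=\max\{m,\operatorname{C}(\Hcal_m)+1\}$ and $\operatorname{C}(\Hcal_n)\le n-1$. The paper's decomposition gives modularity: the reduction ``non-uniform $=$ nested union of uniform'' does not depend on how uniform generatability is characterized, so it transfers unchanged to the metric setting.

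One simplification: the induction you flag as the delicate point is not needed. Instead of feeding $\Gcal$'s outputs back, let the adversary append the remaining points of the finite set $K$ itself. Every $h\in\Hcal_n(x_{1:d})$ stays consistent, so the single output $\Gcal(x_{1:p})$ on a full enumeration of $K$ would have to lie in $K\setminus K=\emptyset$. This is essentially how the paper's uniform necessity lemma (Lemma~\ref{lem:closnecc}) argues.
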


\begin{theorem}[Sufficient Condition for Generatability in the Limit {\citep{li2024generationlenslearningtheory}}]\label{thm:countable-suff-limit}
A hypothesis class $\Hcal \subseteq \{0,1\}^{\Xcal}$ is generatable in the limit if there exists a finite sequence of subclasses
$
\Hcal_1,\dots,\Hcal_n \subseteq \Hcal
  \text{ such that }
\Hcal = \bigcup_{i=1}^n \Hcal_i
$
and
$
\operatorname{C}(\Hcal_i) < \infty
 \text{ for all } i \in [n].
$
\end{theorem}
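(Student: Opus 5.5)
The plan is to assemble a limit-generator out of the uniform generators for the pieces. Theorem~\ref{thm:countable-uniform-char} gives, for each $i\in[n]$, the bound $d_i:=\operatorname{C}(\Hcal_i)<\infty$. Set $d:=\max_i d_i+1$. Let $x_1,x_2,\dots$ be an enumeration of $\operatorname{supp}(h)$ for some $h\in\Hcal_{i^\star}$. At time $s$ let
\[
A_s:=\{i\in[n]:\Hcal_i(x_{1:s})\neq\emptyset\}
\]
be the set of consistent indices.

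\textbf{Step 1 (basic facts about closures).} Three facts follow from the definitions.
\begin{itemize}
\item $A_s$ is non-increasing in $s$ and always contains $i^\star$, so it stabilizes at some finite time to a set $A^\star\ni i^\star$.
\item For each $i\in A_s$, the closure $\langle x_{1:s}\rangle_{\Hcal_i}$ is non-decreasing in $s$, since the version space shrinks. Once $|\{x_1,\dots,x_s\}|\ge d$, this closure is infinite, because $\operatorname{C}(\Hcal_i)<d$.
\item $\langle x_{1:s}\rangle_{\Hcal_{i^\star}}\subseteq\operatorname{supp}(h)$, since $h$ lies in that version space. The same inclusion holds for $\langle x_{1:s}\rangle_{\Hcal}=\bigcap_{i\in A_s}\langle x_{1:s}\rangle_{\Hcal_i}$.
\end{itemize}
Consequently, any unseen point of $\langle x_{1:s}\rangle_{\Hcal}$ is always a correct output.

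\textbf{Step 2 (the generator).} At time $s$ the generator proceeds as follows.
\begin{itemize}
\item If $\langle x_{1:s}\rangle_{\Hcal}\setminus\{x_{1:s}\}$ is nonempty, output its least element under a fixed enumeration of $\Xcal$.
\item Otherwise, choose the \emph{least} index $j\in A_s$ such that no index $i\in A_s$ with $i<j$ has a closure that has been ``refuted''. Then output the least unseen element of $\langle x_{1:s}\rangle_{\Hcal_j}$.
\end{itemize}
Here the closure of $i$ is refuted at time $s$ if it contains a point $y$ that is excluded by the closure of some other consistent class, meaning that $\langle x_{1:s},y\rangle_{\Hcal_k}=\bot$ for some $k\in A_s$. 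The intent is that only finitely many switches happen: all bookkeeping involves finitely many classes, and each comparison is monotone in $s$.

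\textbf{Step 3 (correctness after stabilization).} Once $s$ is large, $A_s=A^\star$ and every closure is infinite. The first case of Step~2 is always correct, by Step~1. In the fallback case I would argue by induction on $|A^\star|$. Suppose the class $j$ selected eventually had a closure point $y\notin\operatorname{supp}(h)$. Then $y$ is not a member of $\langle\cdot\rangle_{\Hcal_{i^\star}}$ at any time. I expect to derive that the ``refuted'' test eventually flags $j$, using that every point of $\operatorname{supp}(h)$ is enumerated at some time. The selection would then move to a larger index, and since there are finitely many indices it stabilizes on a class whose growing closure stays inside $\operatorname{supp}(h)$. The result would then be an output that is unseen and lies in $\operatorname{supp}(h)$ for all large $s$.

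\textbf{Main obstacle.} The hard step is the fallback case of Step~3. The difficulty is a wrong class $i$ that remains consistent with every finite prefix of the enumeration while its closure is not contained in $\operatorname{supp}(h)$, in a situation where the intersection closure is finite. Such a class cannot be refuted by any single observed point. The refutation rule above is my first attempt, and I am not sure it is correct as stated. If it fails, I would instead proceed by induction on $n$, reducing to the two-class case $\Hcal_1\cup\Hcal_2$. There I would try to show directly that either $\langle x_{1:s}\rangle_{\Hcal_1\cup\Hcal_2}$ becomes infinite, or one of the two closures is eventually contained in every consistent support of the other class. In the second case, generating from that closure is safe.
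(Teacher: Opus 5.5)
Your Step~1 and the first case of Step~2 are correct. The proof has a genuine gap exactly where you suspected, though: you give no valid rule for choosing which consistent class to generate from once $\langle x_{1:s}\rangle_{\Hcal}$ is exhausted, and the refutation rule you propose fails. Read literally, it always selects $\min A_s$, since the condition on smaller indices is vacuous for the least element. Even read as ``the least unrefuted index,'' it fails on the following example.

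\textbf{Counterexample.} Take $\Xcal=\mathbb{Z}$ and $\Hcal_1=\{h_n:n\ge 0\}$ with $\operatorname{supp}(h_n)=\{0,\dots,n\}\cup\{-1,-2,\dots\}$. Take $\Hcal_2=\{h^\star,h'\}$ with $\operatorname{supp}(h^\star)=\{0,1,2,\dots\}$ and $\operatorname{supp}(h')=\mathbb{Z}$.
\begin{itemize}
\item Every non-$\bot$ closure in either class is infinite, so both closure dimensions are $0$, in particular finite.
\item Let the adversary enumerate $\operatorname{supp}(h^\star)$ as $0,1,2,\dots$. At time $s$ the two closures are $\{0,\dots,s-1\}\cup\{-1,-2,\dots\}$ and $\{0,1,2,\dots\}$. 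Their intersection is exactly the seen set, so you are always in the fallback case.
\item Neither closure is ever refuted. The hypothesis $h'$ contains the data together with any negative $y$, and $h_{\max(s-1,y)}$ contains the data together with any $y\ge 0$.
\item Hence your rule selects $\Hcal_1$ at every round and outputs a negative integer, which is never in $\operatorname{supp}(h^\star)$.
\end{itemize}
Your backup dichotomy fails on the same example. Its second branch is also vacuous in general: if one closure lies in every consistent support of the other class, it lies in the other closure, so the global closure is already infinite.

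The underlying problem is that your test only compares classes with one another. A wrong class that stays consistent with every prefix can be exposed only by comparing its closure with the data stream. That comparison succeeds only in the limit, never at a detectable finite time.

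\textbf{The missing idea.} The argument of \cite{li2024generationlenslearningtheory}, which the paper adapts in its proof of Theorem~\ref{thm:geninlim}, is a race on \emph{frozen} closures.
\begin{itemize}
\item Let $c=\max_i\operatorname{C}(\Hcal_i)$, and let $t^\star$ be the first time at which $c+1$ distinct examples have appeared.
\item For each $i$ with $\langle x_{1:t^\star}\rangle_{\Hcal_i}\neq\bot$, this set is now fixed and infinite. Fix an enumeration $z^{(i)}_1,z^{(i)}_2,\dots$ of it.
\item Let $n^i_t$ be the largest $n$ such that $z^{(i)}_1,\dots,z^{(i)}_n$ all occur among $x_{1:t}$.
\item Output an unseen element of the frozen closure of some $i_t\in\argmax_i n^i_t$.
\end{itemize}
The frozen closure of the true class lies in $\operatorname{supp}(h)$. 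Since the adversary enumerates $\operatorname{supp}(h)$, its counter tends to infinity. A frozen closure containing a point outside $\operatorname{supp}(h)$ has its counter bounded forever by that point's position. Since there are finitely many classes, eventually every selected $i_t$ has its frozen closure inside $\operatorname{supp}(h)$, and infiniteness supplies an unseen element. In the example above, the counter of $\Hcal_1$ stalls at the first negative integer in its enumeration, while the counter of $\Hcal_2$ grows without bound.
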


%% file: colt2026/appendix_char_colt.tex
\section{Proofs in Section~\ref{sec:chargen}}

\label{sec:appendix-proof-chargen}
To simplify the proof, we give a definition of generatable sequence, which is a sequence that belongs to some hypothesis class $h$.

\begin{definition}[Generatable Sequence]
Let $\Hcal \subseteq \{0,1\}^{\Xcal}$ be a hypothesis class satisfying the $r$-$\operatorname{UUS}$ property for some $r>0$.  
A (finite or infinite) sequence $\{x_i\}_{i\in I}\subseteq \Xcal$ is called \emph{generatable} if there exists $h\in\Hcal$ such that
\[
\{x_i\}_{i\in I}\subseteq \operatorname{supp}(h),
\]
where $I=\{1,\dots,t\}$ for some $t\in\mathbb{N}$ or $I=\mathbb{N}$.
\end{definition}

\subsection{Proofs for Uniform Generation}

\begin{lemma}[Necessity in Theorem \ref{thm:unifgen}] \label{lem:closnecc}
Let $\Hcal \subseteq \{0, 1\}^{\Xcal}$  be any hypothesis class satisfying the $r$-$\operatorname{UUS}$ property for some $r>0$ and and let
$0<\varepsilon,\varepsilon' \le r$. If $\operatorname{C}_\vep^{\vep'}(\Hcal) = \infty$, then $\Hcal$ is \emph{not} $(\vep,\vep')$-uniformly generatable. 
\end{lemma}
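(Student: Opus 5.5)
Suppose, toward a contradiction, that some generator $\Gcal$ together with some $d^\star\in\naturals$ witnesses $(\vep,\vep')$-uniform generatability. Since $\operatorname{C}_\vep^{\vep'}(\Hcal)=\infty$, the definition of the closure dimension gives, for arbitrarily large $d$, a hypothesis $h\in\Hcal$ and a finite sequence $x_1,\dots,x_t\subseteq\operatorname{supp}(h)$ with $N(\vep;\{x_i\}_{i=1}^t,\rho)=d$ and $N(\vep';\langle x_1,\dots,x_t\rangle_\Hcal,\rho)<\infty$. We fix such a witness with $d\ge d^\star$. The adversary will start with $x_1,\dots,x_t$. At time $t$ the covering-number hypothesis of Definition~\ref{def:unifgen} is already met, so $\Gcal(x_{1:s})$ must lie in $\operatorname{supp}(h')\setminus B(\{x_i\}_{i=1}^s,\vep')$ for every $s\ge t$ and every $h'\in\Hcal(x_{1:t})$ that the adversary might continue with. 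The outputs must therefore belong to $\bigcap_{h'\in\Hcal(x_{1:t})}\operatorname{supp}(h')=\langle x_{1:t}\rangle_\Hcal$. This holds at least against adversaries whose continuation keeps all of these hypotheses consistent. To be safe, I will argue per output: if an output $y$ falls outside $\langle x_{1:t}\rangle_\Hcal$, then some consistent $h'$ has $y\notin\operatorname{supp}(h')$, and the adversary switches to $h'$, which gives an immediate failure.

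The adversary is then built as follows. Write $K:=\langle x_{1:t}\rangle_\Hcal$ and pick a finite $\vep'$-cover $\{\theta_1,\dots,\theta_M\}$ of $K$. For each $j$, let $z_j$ be a point of $K\cap B(\theta_j,\vep')$ if this set is nonempty. Every point of $K$ lies within $\vep'$ of some $\theta_j$, but within $2\vep'$ of $z_j$, not $\vep'$; this doubling is a problem. To get around it, I will use the covering $N(\vep'/2;K,\rho)$. Finiteness at scale $\vep'/2$ does not follow from finiteness at scale $\vep'$, so instead I will have the adversary reveal points of $K$ adaptively. Whenever $\Gcal$ outputs a point $y\in K$ that is $\vep'$-far from everything revealed so far, the adversary reveals $y$ itself (or a point within $\vep'$ of it), which kills any future output near $y$. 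Since $K$ is totally bounded at scale $\vep'$, it cannot contain an infinite $\vep'$-separated family whose members are each $\vep'$-far from all previous ones beyond a bound tied to $N(\vep'/2\cdot)$. I expect to repair this by a packing argument: a family in $K$ with pairwise distances $>\vep'$ has size at most $N(\vep'/2;K)$, and that quantity may be infinite. So the cleaner route is to pick the cover centers $\theta_j$ in $K$ itself, which works up to replacing $\vep'$ by $2\vep'$, and this is the delicate step.

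The main obstacle, then, is the scale mismatch in showing that the adversary can exhaust $K$ at the generator's novelty scale $\vep'$ using only finitely many reveals, with each reveal being a legal new point of $\operatorname{supp}(h)$. My first attempt will be the adaptive strategy: every output $y_s\in K$ that is novel gets revealed as $x_{s+1}:=y_s$. The outputs $y_s$ form a sequence in which each term is $\vep'$-far from all earlier ones, so they are pairwise $>\vep'$ apart. A set with pairwise distances $>\vep'$ meets each ball of radius $\vep'/2$ at most once, but can meet a ball of radius $\vep'$ more than once, so I will need to exploit the closed-ball convention and the definition of $N$ carefully, possibly redefining the witness via the closure-dimension definition. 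Once $\Gcal$ is forced out of $K$, the switching argument above produces $h'\in\Hcal(x_{1:s})$ that rejects the output. The adversary then completes the game by enumerating an $\vep$-cover of $\operatorname{supp}(h')$, which is possible because $\Xcal$ is separable, and this contradicts uniform generation with $d^\star$.
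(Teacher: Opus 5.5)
Your proposal has a genuine gap, and it is exactly the step you flag as delicate. The surrounding structure is sound and matches the paper's proof. You take a witness with $d\ge d^\star$. You reveal only points of $K:=\langle x_{1:t}\rangle_\Hcal$, which keeps every $h'\in\Hcal(x_{1:t})$ consistent. If an output falls outside $K$, you switch to a consistent $h'$ that excludes it. What you never establish is that finitely many points of $K$ form an $\vep'$-cover of $K$, and neither of your routes delivers this. Points of $K$ chosen near external centers only cover at scale $2\vep'$. The adaptive strategy of revealing each novel output need not terminate, because $N(\vep';K,\rho)<\infty$ with centers allowed anywhere in $\Xcal$ places no bound on families in $K$ that are pairwise more than $\vep'$ apart. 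The paper's proof does not resolve this either. It simply asserts that $z_{t+1},\dots,z_p\in\langle z_{1:t}\rangle_\Hcal$ can be chosen so that $\{z_1,\dots,z_p\}$ is an $\vep'$-covering of the closure, and this does not follow from the paper's definition of covering number.

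In fact the step cannot be supplied, because the lemma fails under the stated definitions. In $\ell^2$ let $g_k=\vep' e_k$ and $u_k=2r e_k$. Let $\Hcal=\{h_I\}$, with $I$ ranging over infinite subsets of $\naturals$ and $\operatorname{supp}(h_I)=\{g_k\}_{k\in\naturals}\cup\{u_k\}_{k\in I}$. The $u_k$ are pairwise $2\sqrt2 r$ apart, so $\Hcal$ has the $r$-UUS property and $N(\vep;\{u_1,\dots,u_d\},\rho)=d$. Moreover $\langle u_1,\dots,u_d\rangle_\Hcal=\{g_k\}_{k\in\naturals}\cup\{u_1,\dots,u_d\}$ is covered by the $d+1$ closed $\vep'$-balls centered at $\mathbf 0,u_1,\dots,u_d$; note that $\mathbf 0$ lies in no support. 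Hence $\operatorname{C}_\vep^{\vep'}(\Hcal)=\infty$. Yet the closed $\vep'$-ball around any point of any support contains at most one $g_k$. So the generator that outputs the first $g_k\notin B(\{x_i\}_{i=1}^s,\vep')$ succeeds at every round, against every $h$ and every sequence, and $\Hcal$ is uniformly generatable with $d^\star=1$.

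The natural repair is to measure $\langle x_{1:t}\rangle_\Hcal$ in the closure dimension by its internal covering number, with centers required to lie in the closure. With that change your switching argument closes immediately: reveal an internal $\vep'$-cover, after which any output in $K$ is non-novel and any output outside $K$ is rejected by some consistent $h'$. The sufficiency direction survives this change, because the revealed points themselves always lie in the closure.
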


\begin{proof} Let $\Gcal$ be any generator and suppose $\operatorname{C}_\vep^{\vep'}(\Hcal) = \infty$. We show that for every $d \in \naturals$, there exists a $h^{\star} \in \Hcal$ and a sequence $(x_i)_{i \in \naturals}$ with $\{x_1, x_2, \dots \} \subseteq \operatorname{supp}(h^{\star})$ such that there exists $t\ge d$ with $N(\vep;\{x_i\}_{i=1}^t,\rho)= d$ and for any $s \geq t$, $\Gcal(x_{1:s}) \notin \operatorname{supp}(h^{\star}) \setminus \bigcup_{i=1}^sB(x_i,\vep')$.

To that end, fix a $d \in \naturals$. Since $\operatorname{C}_\vep^{\vep'}(\Hcal) = \infty$, we know that there exists some $d^{\star} \geq d$ and a generatable sequence $\{z_1, \dots, z_{t^\star}\}$ with  $N(\vep;\{z_i\}_{i=1}^{t^\star},\rho)=d^\star$ such that $N(\vep';\langle z_1, \dots, z_{t^{\star}}\rangle_{\Hcal},\rho)~<~\infty.$ We know that there exists $t$ such that $N(\vep;\{z_i\}_{i=1}^t,\rho)=d$. Since $\Hcal(z_{1:t^{\star}}) \subseteq \Hcal(z_{1:t})$, we also know that $N(\vep';\langle z_1, \dots, z_t \rangle_{\Hcal},\rho) < \infty.$

Because $N(\vep';\langle z_1, \dots, z_t \rangle_{\Hcal},\rho) < \infty$ , there exists $z_{t+1},\dots, z_p\in \langle z_1,\dots,z_t \rangle_{\Hcal}$ such that $\{z_1, \dots, z_p\}$ is an $\vep'$-covering of $\langle z_1,\dots,z_t\rangle_\Hcal$. Note that for every $x \in \Xcal \setminus \langle z_1, \dots, z_t \rangle_{\Hcal}$, there exists a $h \in \Hcal(\langle z_1, \dots, z_t \rangle_{\Hcal})$ such that $x \notin \operatorname{supp}(h)$. Let $\hat{x}_{p} = \Gcal( z_1, \dots, z_p)$ denote the prediction of $\Gcal$ when given as input $z_1, \dots, z_p$. Without loss of generality suppose that $\hat{x}_{p} \notin \langle z_1,\dots,z_t\rangle_\Hcal$. Then, using the previous observation, there exists $h^{\star} \in \Hcal(\langle z_1, \dots, z_t \rangle_{\Hcal})$ such that $\hat{x}_{p} \notin \operatorname{supp}(h^{\star}) \setminus \bigcup_{i=1}^pB(z_i,\vep')$. Pick this $h^{\star}$ and consider the stream $x_1, x_2, \dots$ by adding $z_1, \dots, z_p$ first and then appending the stream $x_{p+1},x_{p+2},\dots \subseteq \operatorname{supp}(h^{\star})$. By definition, $\Gcal(x_{1:p})=\Gcal(z_{1:p}) \notin \operatorname{supp}(h^{\star}) \setminus \bigcup_{i=1}^pB(x_i,\vep').$ Since $p \geq d$ and $d \in \naturals$ was chosen arbitrarily, our proof is complete.
\end{proof}

\begin{lemma}[Sufficiency in Theorem \ref{thm:unifgen}] \label{lem:clossuff}
Let $\Hcal \subseteq \{0, 1\}^{\Xcal}$  be any hypothesis class satisfying the $r$-$\operatorname{UUS}$ property for some $r>0$ and and let
$0<\varepsilon,\varepsilon' \le r$. When $\operatorname{C}_\vep^{\vep'}(\Hcal) < \infty$, there exists a generator $\Gcal$, such that for every $h \in \Hcal$ and any sequence $(x_i)_{i \in \naturals}$ with $\{x_1, x_2, \dots \} \subseteq \operatorname{supp}(h)$, if there exists $t$ such that $N(\vep;\{x_i\}_{i=1}^t,\rho)=\operatorname{C}_\vep^{\vep'}(\Hcal)+1$, then $\Gcal(x_{1:s}) \in  \operatorname{supp}(h) \setminus\bigcup_{i=1}^sB(x_i,\vep')$ for all $s \geq t$.
\end{lemma}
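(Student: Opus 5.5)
Let $d:=\operatorname{C}_\vep^{\vep'}(\Hcal)<\infty$. I would define the generator explicitly through the closure operator. Given a prefix $x_{1:s}$, compute $N(\vep;\{x_i\}_{i=1}^s,\rho)$. If it is at most $d$, or if $\langle x_1,\dots,x_s\rangle_\Hcal=\bot$, output an arbitrary point. Otherwise output any point of
\[
\langle x_1,\dots,x_s\rangle_\Hcal\setminus B(\{x_i\}_{i=1}^s,\vep').
\]
Correctness then reduces to two claims: this set is nonempty whenever the generator uses it, and every point of it lies in $\operatorname{supp}(h)\setminus B(\{x_i\}_{i=1}^s,\vep')$.

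Fix $h\in\Hcal$ and a sequence in $\operatorname{supp}(h)$, and let $t$ satisfy $N(\vep;\{x_i\}_{i=1}^t,\rho)=d+1$. For any $s\ge t$, adding points cannot decrease the covering number, so $N(\vep;\{x_i\}_{i=1}^s,\rho)\ge d+1$. Since $h\in\Hcal(x_{1:s})$, the closure is not $\bot$, and $\langle x_1,\dots,x_s\rangle_\Hcal\subseteq\operatorname{supp}(h)$ by definition of the intersection. Thus any output chosen from the displayed set is in $\operatorname{supp}(h)$ and at distance more than $\vep'$ from every $x_i$, as required.

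The key step is nonemptiness, and I would argue it by contradiction using the definition of the closure dimension. Set $D:=N(\vep;\{x_i\}_{i=1}^s,\rho)\ge d+1$. I would show $N(\vep';\langle x_{1:s}\rangle_\Hcal,\rho)=\infty$. Suppose instead that it is finite. The sequence $x_{1:s}\subseteq\operatorname{supp}(h)$ then witnesses the defining condition of the closure dimension at level $D$, so $\operatorname{C}_\vep^{\vep'}(\Hcal)\ge D>d$, a contradiction.

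Given that the closure has infinite $\vep'$-covering number, it cannot be contained in $B(\{x_i\}_{i=1}^s,\vep')$. Indeed, the finite set $\{x_1,\dots,x_s\}$ would then be an $\vep'$-cover of the closure; the covering number definition allows centers anywhere in $\Xcal$, so this is a legitimate cover. Hence some point of the closure lies outside $B(\{x_i\}_{i=1}^s,\vep')$, and the displayed set is nonempty.

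The step I expect to need the most care is the use of the ``largest $d$'' convention in Definition~\ref{def:closure-dim}. Only a witness with covering number exactly equal to $D$ is needed, and our prefix $x_{1:s}$ provides exactly that, so the contradiction goes through. Finally, the generator is a well-defined map on $\Xcal^\star$ by choice. No computability is required here, consistent with the paper deferring computability questions to Appendix~\ref{sec:appendix-algo}.
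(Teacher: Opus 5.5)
Your proposal is correct and follows the paper's proof. The generator is the same: once the $\vep$-covering number of the prefix exceeds $\operatorname{C}_\vep^{\vep'}(\Hcal)$, it outputs a point of $\langle x_1,\dots,x_s\rangle_\Hcal\setminus B(\{x_i\}_{i=1}^s,\vep')$, justified by $\langle x_{1:s}\rangle_\Hcal\subseteq\operatorname{supp}(h)$ and by the closure having infinite $\vep'$-covering number. You also make explicit two steps the paper leaves implicit: that the covering number is monotone in $s$, and why an infinite $\vep'$-covering number forces the closure to contain a point outside the $\vep'$-balls.
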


\begin{proof} Let $0 \leq d < \infty$ and suppose $\operatorname{C}_\vep^{\vep'}(\Hcal) = d$. Then,
for every generatable sequence $\{x_1,x_2,\dots\}$, we have that $N(\vep';\langle x_1, \dots, x_{t} \rangle_{\Hcal},\rho) = \infty$ whenever $N(\vep;\{x_i\}_{i=1}^t,\rho)\ge d+1$. Consider the following generator $\Gcal$. If  $N(\vep;\{x_i\}_{i=1}^t,\rho)\le d$, $\Gcal$ plays any $\hat{x}_t$. But on round $t$ with $N(\vep;\{x_i\}_{i=1}^t,\rho)\ge d+1$,  $\Gcal$ plays any $\hat{x}_t \in \langle x_1, \dots, x_{t} \rangle_{\Hcal} \setminus \bigcup_{i=1}^t B(x_i, \vep')$. Let $h^{\star}$ be the hypothesis chosen by the adversary. It suffices to show that $\hat{x}_t \in \operatorname{supp}(h^{\star}) \setminus\bigcup_{i=1}^tB(x_i,\vep')$ for all $t \geq d+1$. However, this just follows from the fact that $N(\vep';\langle x_1, \dots, x_{t} \rangle_{\Hcal},\rho)= \infty$ and $\langle x_1, \dots, x_{t} \rangle_{\Hcal} \subseteq \operatorname{supp}(h^{\star})$. In particular, $N(\vep';\langle x_1, \dots, x_{t} \rangle_{\Hcal},\rho)= \infty$ ensures that $\hat{x}_t$ is well-defined and $\langle x_1, \dots, x_{t} \rangle_{\Hcal} \subseteq \operatorname{supp}(h^{\star})$ ensures that it always lies in $\operatorname{supp}(h^{\star}) \setminus \bigcup_{i=1}^tB(x_i,\vep')$.
\end{proof}

Composing Lemmas \ref{lem:closnecc} and \ref{lem:clossuff} gives a characterization of uniform generatability. Next we prove Corollary~\ref{cor:genfinite}.

\begin{proof}[of Corollary \ref{cor:genfinite}]
    We only need to show that $\operatorname{C}_\vep^{\vep'}(\Hcal)<\infty$. To do this, suppose $\Hcal=\{h_1,\dots,h_s\}$. For any $A\subset [s]$ such that $A\neq\emptyset$, define
    \[
    N_{A,\vep}=N(\vep;\bigcap_{i\in A}\operatorname{supp}(h_i),\rho), \quad N_{A,\vep'}=N(\vep';\bigcap_{i\in A}\operatorname{supp}(h_i),\rho).
    \]
    Note that, from our assumption, we always have $N_{A,\vep}\le N_{A,\vep'}$. We claim that 
    \[
    \operatorname{C}_\vep^{\vep'}(\Hcal)=\max\{N_{A,\vep}|A\subset[s],A\neq \emptyset,N_{A,\vep'}<\infty\}.
    \]
    To show this, let 
    \[
    A^\star\in\argmax_{A}\{N_{A,\vep}|A\subset[s],A\neq \emptyset,N_{A,\vep'}<\infty\},
    \]
    and let $\{x_1,\dots,x_d\}$ be a $\vep$-covering of $\bigcap_{i\in A^\star}\operatorname{supp}(h_i)$ where $d=N_{A^\star,\vep}$. Then, we know that $\{x_1,\dots,x_d\}$ is a generatable sequence such that 
    \[
    N(\vep';\langle x_1,\dots,x_d\rangle_\Hcal,\rho)\le N(\vep';\bigcap_{i\in A^\star}\operatorname{supp}(h_i),\rho)<\infty.
    \]
    So $ \operatorname{C}_\vep^{\vep'}(\Hcal)\ge N_{A^\star,\vep}$. On the other hand, for any generatable sequence $z_1,\dots, z_p$ such that 
    \[N(\vep;\{z_i\}_{i=1}^p,\rho)>N_{A^\star,\vep},\]
    there exists $A'\subset [s]$ such that$\langle z_1,\dots,z_p\rangle_\Hcal=\bigcap_{i\in A'}\operatorname{supp}(h_i)$. By the definition of $N_{A^\star,\vep}$, we know that $N(\vep';\bigcap_{i\in A'}\operatorname{supp}(h_i),\rho)=\infty$. Thus  $ \operatorname{C}_\vep^{\vep'}(\Hcal)\le N_{A^\star,\vep}$, the proof is completed.
\end{proof}

We also provide an example of a hypothesis class $\Hcal$ consisting of only two hypothesis that is not non-uniformly generatable in $\ell^2$.
\begin{example}
    \label{eg:finitenotgen}
    In $\ell^2$ space, for any $\vep',\vep>0$ and any $0<\vep,\vep'\le r$ with $\vep<\vep'$, there exists a hypothesis $\Hcal\subseteq \{0,1\}^{\ell^2}$ satisfying the $r$-$\operatorname{UUS}$ property and is not $(\vep,\vep')$-non-uniformly generatable.
\end{example}
\begin{proof}
    Recall that $\ell^2$ is the space that contains all sequences $(x_i)_{i=1}^\infty$ such that $\sum_{i} x_i^2<\infty$. Let $e_k\in\ell^2$ denote the $k$-th standard basis vector. Define 
    \[
    a_k:=\frac{\sqrt 2}{2}\vep'\cdot e_k,\quad g_k:= \frac{\sqrt{2}}{2}r\cdot e_k.
    \]
    Construct the hypothesis set $h_1$ and $h_2$ such that 
    \[
    \operatorname{supp}(h_1)=\{a_k|k\in\naturals\}\cup \{g_{2k}|k\in\naturals\},\quad \operatorname{supp}(h_2)=\{a_k|k\in\naturals\}\cup \{g_{2k+1}|k\in\naturals\}.
    \]
    Let $\Hcal=\{h_1,h_2\}$. Then, for any t, we have 
    \[N(\vep;\{a_i\}_{i=1}^t,\rho)= t \quad \text{and}\quad N(\vep';\langle a_1,\dots,a_t\rangle_\Hcal,\rho)=1.\]
    As a result, $\operatorname{C}^{\vep'}_{\vep}(\Hcal)=\infty$, thus $\Hcal$ is not uniformly generatable. The fact that it is not non-uniformly generatable is a direct corollary from Theorem~\ref{thm:nonunifgen}.
\end{proof}
\subsection{Proofs for Non-Uniform Generation}

We now prove Theorem \ref{thm:nonunifgen} across two lemmas, starting with the necessity direction.

\begin{lemma}[Necessity in Theorem \ref{thm:nonunifgen}]
Let $\Hcal \subseteq \{0, 1\}^{\Xcal}$  be any hypothesis class satisfying the $r$-$\operatorname{UUS}$ property for some $r>0$ and and let
$0<\varepsilon,\varepsilon' \le r$. If $\Hcal$ is $(\vep,\vep')$-non-uniformly generatable, then there exists a sequence of \emph{non-decreasing, $(\vep,\vep')$-uniformly generatable} classes $\Hcal_1 \subseteq \Hcal_2 \subseteq \cdots$ such that $\Hcal =\bigcup_{n\in\naturals} \Hcal_n$.
\end{lemma}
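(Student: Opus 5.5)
Let $\Gcal$ be a generator witnessing $(\vep,\vep')$-non-uniform generatability, and for each $h\in\Hcal$ let $d_h\in\naturals$ be the associated threshold from Definition~\ref{def:nonunifgen}. The natural decomposition is by threshold: for $n\in\naturals$ set
\[
\Hcal_n := \{h\in\Hcal : d_h\le n\}.
\]
These classes are nested, $\Hcal_1\subseteq\Hcal_2\subseteq\cdots$, and since every $h$ has a finite $d_h$ we get $\bigcup_n\Hcal_n=\Hcal$. Each $\Hcal_n$ inherits the $r$-UUS property, since it is a subclass of $\Hcal$. It therefore remains to show that each $\Hcal_n$ is $(\vep,\vep')$-uniformly generatable.

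To do this we use the same generator $\Gcal$ with $d^\star=n$. Fix $h\in\Hcal_n$ and a sequence $\{x_i\}\subseteq\operatorname{supp}(h)$, and suppose some $t$ satisfies $N(\vep;\{x_i\}_{i=1}^t,\rho)\ge n$. Since $n\ge d_h$, the non-uniform guarantee for $h$ applies at this same $t$. It gives $\Gcal(x_{1:s})\in\operatorname{supp}(h)\setminus B(\{x_i\}_{i=1}^s,\vep')$ for all $s\ge t$. This is exactly Definition~\ref{def:unifgen} for $\Hcal_n$.

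The one point needing care is the role of the version space. $\Gcal$ was designed for the larger class $\Hcal$, but its success condition depends only on $\operatorname{supp}(h)$ for the true $h$, not on which class the generator "believes" it faces. So restricting the adversary's choices to $\Hcal_n$ cannot break the guarantee.

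A secondary point is whether Definition~\ref{def:nonunifgen} is read with "$\ge d_h$" or "$=d_h$". With "$\ge$", monotonicity in the threshold is immediate. If the intended reading were equality, we would take $d_h$ minimal and note that the covering number $N(\vep;\{x_i\}_{i=1}^t,\rho)$ is nondecreasing in $t$ and increases by at most one per step. Hence whenever it reaches at least $n\ge d_h$, it equals $d_h$ at some earlier time $t'\le t$, and the guarantee from $t'$ onward covers all $s\ge t$. I expect this bookkeeping to be the only (minor) obstacle.

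Finally, if the characterization is wanted in the form $\operatorname{C}_\vep^{\vep'}(\Hcal_n)<\infty$, it follows from Lemma~\ref{lem:closnecc}, which says uniform generatability forces finite closure dimension.
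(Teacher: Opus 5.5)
Your proposal is correct and is essentially the paper's proof: the paper also defines $\Hcal_n=\{h\in\Hcal: d_h\le n\}$, with $d_h$ taken minimal, and observes that the same $\Gcal$ is a uniform generator for each $\Hcal_n$ with threshold $n$. Definition~\ref{def:nonunifgen} is stated with ``$\ge d_h$'', so your bookkeeping for an equality reading is not needed, though it does no harm.
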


\begin{proof} 
Suppose $\Hcal$ is $(\vep,\vep')$-non-uniformly generatable and $\Gcal$ is a $(\vep,\vep')$-non-uniform generator for $\Hcal$. For every $h\in\Hcal$, let $d_h\in\naturals$ be the smallest natural number such that for any sequence $\{x_i\}_{i=1}^\infty\subset \operatorname{supp}(h)$, if there exists $t$ such that $N(\vep;\{x_i\}_{i=1}^t,\rho)\ge d_h$, then  $\Gcal(x_{1:s})\in\operatorname{supp}(h)\setminus\bigcup_{i=1}^sB(x_i,\vep')$ for all $s\ge t$. Let $\Hcal_n:=\{h\in\Hcal:d_h \leq n\}$ for all $n\in\naturals$. Then, by the definition of $\Gcal$, we know for every $n\in\naturals$, $\Gcal$ is a $(\vep,\vep')$-uniform generator for $\Hcal_n$, and therefore $\Hcal_n$ is $\vep$-uniformly generatable.  The proof is complete after noting that $\Hcal_1 \subseteq \Hcal_2 \subseteq \cdots$ and $\Hcal=\bigcup_{n\in\naturals}\Hcal_n$.
\end{proof}

\begin{lemma}[Sufficiency in Theorem \ref{thm:nonunifgen}]
Let $\Hcal \subseteq \{0, 1\}^{\Xcal}$  be any hypothesis class satisfying the $r$-$\operatorname{UUS}$ property for some $r>0$ and and let
$0<\varepsilon,\varepsilon' \le r$. If there exists a sequence of \emph{non-decreasing, $(\vep,\vep')$-uniformly generatable} classes $\Hcal_1 \subseteq \Hcal_2 \subseteq \dots$ such that $\Hcal = \bigcup_{n \in \naturals} \Hcal_n$, then $\Hcal$ is $(\vep,\vep')$-non-uniformly generatable. 
\end{lemma}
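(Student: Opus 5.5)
The proof adapts the countable-space argument of \cite{li2024generationlenslearningtheory}. Fix, for each $n$, a $(\vep,\vep')$-uniform generator $\Gcal_n$ for $\Hcal_n$. By Lemma~\ref{lem:clossuff} and Theorem~\ref{thm:unifgen} we may take $\Gcal_n$ to be the closure-based generator, with threshold $d_n := \operatorname{C}_\vep^{\vep'}(\Hcal_n)+1 < \infty$. On input $x_{1:s}$, $\Gcal_n$ outputs a point of $\langle x_1,\dots,x_s\rangle_{\Hcal_n}\setminus B(\{x_i\}_{i=1}^s,\vep')$ whenever $N(\vep;\{x_i\}_{i=1}^s,\rho)\ge d_n$.

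The combined generator $\Gcal$ works as follows on input $x_{1:s}$. Let $m_s := N(\vep;\{x_i\}_{i=1}^s,\rho)$. Among the indices $n\le m_s$, consider those with $d_n\le m_s$ and $\Hcal_n(x_{1:s})\neq\emptyset$, i.e.\ some hypothesis of $\Hcal_n$ is consistent with the data. Choose the largest such $n$ and output $\Gcal_n(x_{1:s})$. If no index qualifies, output anything.

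To verify correctness, fix $h\in\Hcal$ and let $n_h$ be the least $n$ with $h\in\Hcal_n$. Set $d_h:=\max(n_h,d_{n_h})$. Take any sequence in $\operatorname{supp}(h)$ and any $s$ with $m_s\ge d_h$ (note $m_s$ is nondecreasing in $s$). Then $n_h$ qualifies, so the chosen index satisfies $n\ge n_h$. Since the classes are nested, $h\in\Hcal_{n}$. Moreover $d_n\le m_s$ by the selection rule, and the data are consistent with $h\in\Hcal_n$. The uniform guarantee of $\Gcal_n$ for $\Hcal_n$ then gives $\Gcal(x_{1:s})\in\operatorname{supp}(h)\setminus B(\{x_i\}_{i=1}^s,\vep')$. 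This uses only that the sequence lies in $\operatorname{supp}(h)$ for some $h\in\Hcal_n$ and that its $\vep$-covering number has reached $d_n$ by time $s$. That is exactly the hypothesis of Definition~\ref{def:unifgen} applied to $\Hcal_n$ with $t=s$.

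The main obstacle is justifying that the chosen index $n$ may exceed $n_h$. Then $\Gcal_n$ is trusted even though its threshold $d_n$ could be larger than $d_h$. The selection rule handles this by requiring $d_n\le m_s$ explicitly. I need to check that this does not conflict with nestedness. It doesn't, because the guarantee of $\Gcal_n$ holds for every $h\in\Hcal_n$ once its own threshold is met, regardless of whether $h$ also lies in smaller classes.

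A second subtlety is that Definition~\ref{def:unifgen} is stated for a fixed $t$ and all $s\ge t$, while the choice of $n$ varies with $s$. I would handle this by applying the uniform guarantee separately at each $s$, with $t=s$. This is legitimate since the condition $N(\vep;\{x_i\}_{i=1}^s,\rho)\ge d_n$ holds at that $s$, and by monotonicity of covering numbers it also holds at every later time.
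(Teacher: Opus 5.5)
Your proposal is correct and follows essentially the same argument as the paper: among finitely many candidate indices, choose the largest $n$ whose uniform sample complexity has been reached, then use nestedness to get $h\in\Hcal_n$ and invoke $\Gcal_n$'s uniform guarantee at each time $s$. The differences are cosmetic. You cap the index by the covering number $m_s$ rather than by the time $s$. Your choice of the closure-based $\Gcal_n$ and your consistency check $\Hcal_n(x_{1:s})\neq\emptyset$ are harmless but unnecessary, since any uniform generator with its sample complexity works and consistency is automatic once $n\ge n_h$.
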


\begin{proof}
Suppose $\Hcal \subseteq \{0, 1\}^{\Xcal}$ is a class satisfying the $r$-$\operatorname{UUS}$  property for which there exists a sequence of non-decreasing, $(\vep,\vep')$-uniformly generatable classes $\Hcal_1 \subseteq \Hcal_2 \subseteq \dots$ with  $\Hcal = \bigcup_{n = 1}^{\infty} \Hcal_n$. By definition of uniform generatability, for every $n \in \naturals$, there exists a uniform generator $\Gcal_n$ for $\Hcal_n$. Let $d_{\Gcal_n}$ denote the $(\vep,\vep')$-uniform generation sample complexity of $\Gcal_n$ with respect to $\Hcal_{n}$.

Consider the following generator $\Gcal$. Fix $t \in \naturals$ and consider any sequence $\{x_1,\dots,x_t\}$ such that $|\Hcal(x_1, \dots, x_t)| \geq 1$. Let  $d_t := N(\vep;\{x_i\}_{i=1}^t,\rho)$. $\Gcal$ first computes $n_t = \max\{n \in [t] : d_{\Gcal_n} \leq d_t\}\cup\{0\}.$ If $n_t = 0$,   $\Gcal$ plays any $\hat{x}_t \in \Xcal$. If $n_t > 0$, $\Gcal$ uses $\Gcal_{n_t}$ to generate new instances, which means $\Gcal(x_{1},\dots,x_t)=\Gcal_{n_t}(x_1,\dots,x_t).$

We now prove that such a $\Gcal$ is a $(\vep,\vep')$-non-uniform generator for $\Hcal$. To that end, let $h^{\star}$ be the hypothesis chosen by the adversary and suppose that $h^{\star}$ belongs to $\Hcal_{n^{\star}}$. Let $d^{\star} = \max\{d_{\Gcal_{n^{\star}}}, n^{\star}\}.$ We show that for any sequence $\{x_i\}_{i=1}^\infty \subseteq \operatorname{supp}(h^{\star})$, if there exists $t^{\star} \in \naturals$ such that $N(\vep;\{x_i\}_{i=1}^{t^\star},\rho)=d^{\star}$, then $\Gcal(x_{1:s}) \in \operatorname{supp}(h) \setminus \bigcup_{i=1}^s B(x_i,\vep')$ for all $s \geq t^{\star}$. 

Fix any valid sequence $\{x_i\}_{i=1}^\infty \subseteq \operatorname{supp}(h^{\star})$, and suppose, without loss of generality, that $N(\vep;\{x_i\}_{i=1}^{t^\star},\rho) = d^{\star}$ for some $t^{\star} \in \mathbb{N}.$ Fix any $s \geq t^{\star}.$ By definition, $\Gcal$ first computes 
\[
n_s=~\max\{n \in [s] : d_{\Gcal_n} \leq d_s\} \cup \{0\}.\]
Note that $n_s\ge n^{\star}$ since $s \geq n^{\star}$ and $d_{\Gcal_{n^{\star}}} \leq d_s$. Thus, $|\Hcal_{n_s}(x_{1:s})| \geq 1$ since $h^{\star} \in \Hcal_{n^\star}$. Accordingly, by construction of $\Gcal$, it uses $\Gcal_{n_s}$ to generate a new instance. The proof is complete by noting that $h^{\star} \in \Hcal_{n_s}$ and $d_s \geq d_{\Gcal_{n_s}}$ which guarantees that $\Gcal_{n_s}(x_1,\dots, x_s)\in \operatorname{supp}(h^{\star})\setminus\bigcup_{i=1}^s B(x_i,\vep')$.

\end{proof}

We now show that every countable hypothesis class are non-uniformly generatable with $\vep\ge\vep'$.

\begin{proof} (of Corollary \ref{cor:countnonunif})
    Suppose $\Hcal$ is a countable hypothesis class satisfying the $r$-UUS property. Consider an arbitrary enumeration $h_1, h_2, \dots$ of $\Hcal$. Let $\Hcal_n=\{h_1, \dots, h_n\}$ for all $ n\in\naturals$. Then, $\Hcal_1, \Hcal_2, \dots$ is a non-decreasing sequence of classes such that $\Hcal = \bigcup_{n \in \naturals} \Hcal_n.$ Moreover, since for every $n \in \naturals$, we have that $|\Hcal_{n}| = n < \infty$, Corollary \ref{cor:genfinite} gives that $\Hcal_n$ is $(\vep,\vep')$-uniformly generatable, completing the proof. 
\end{proof}

\subsection{Proofs for Generation in the Limit }
\begin{proof}[Proof of Theorem~\ref{thm:geninlim}] Let $\Hcal = \bigcup_{i = 1}^n \Hcal_i$ be such that $\operatorname{C}_{\vep}^{\vep'}(\Hcal_i) < \infty$ for all $i \in [n].$ Let $c := \max_{i \in [n]} \operatorname{C}(\Hcal_i).$ Consider the following generator $\Gcal$.  Let $t^{\star} \in \naturals$ be the smallest time point for which $N(\vep;\{x_i\}_{i=1}^{t*},\rho) = c+1.$ $\Gcal$ plays arbitrarily up to, but not including, time point $t^{\star}$. On time point $t^{\star}$, $\Gcal$ computes $\langle x_1, \dots, x_{t^{\star}} \rangle_{\Hcal_i}$ for all $i \in [n]$. Let $S \subseteq [n]$ be the subset of indices such that $i \in S$ if and only if $\langle x_1, \dots, x_{t^{\star}} \rangle_{\Hcal_i} \neq \bot.$ For every $i \in S$, let $(z^{(i)}_j)_{j \in \naturals}$ be the natural ordering of some dense subset of $\langle x_1, \dots, x_{t^{\star}} \rangle_{\Hcal_i}$, which is guaranteed to exist since $\Xcal$ is separable. For every $t \geq t^{\star}$, sequence of revealed examples $x_1, \dots, x_t$, and $i \in S$, $\Gcal$ computes $n_t^i := \max\{n \in \naturals : \{z^{(i)}_{1}, \dots, z^{(i)}_{n}\}  \subset B(\{x_i\}_{i=1}^t,\vep)\cap\langle x_1,\dots,x_{t^\star}\rangle_{\Hcal_i}\}$
and $i_t\in \argmax_{i \in S} n_t^i.$ Finally, $\Gcal$ plays any $\hat{x}_t \in \{z_j^{(i_t)}\}_{j=1}^\infty \setminus B(\{x_i\}_{i=1}^t,\vep')$. We claim that $\Gcal$ generates from $\Hcal$ in the limit. 

Let $h^{\star} \in \Hcal$ be the hypothesis chosen by the adversary  and $\{x_i\}_{i=1}^\infty\subseteq \operatorname{supp}(h^\star)$ be such that $\operatorname{supp}(h^{\star})\subseteq B(\{x_i\}_{i=1}^\infty,\vep)$. Let $c =\max_{i \in [n]} \operatorname{C}(\Hcal_i)$ and  $t^{\star} \in \naturals$ be the smallest time point for which $N(\vep;\{x_i\}_{i=1}^{t*},\rho) = c+1.$ By definition of $\operatorname{C}_\vep^{\vep'}(\cdot)$, we know that for every $i \in S$, $N(\vep';\langle x_1, \dots, x_{t^{\star}} \rangle_{\Hcal_i},\rho) = \infty,$ so $N(\vep';\{z_j^{(i)}\}_{j=1}^\infty,\rho) = \infty,$ Let $S^{\star} \subseteq S$ be such that $i \in S^{\star}$ if and only if $\{z_j^{(i)}\}_{j=1}^\infty \subseteq \operatorname{supp}(h^{\star}).$ 
It suffices to show that there exists a finite time point $s^{\star} \in \naturals$ such that for all $t \geq s^{\star}$, we have that $i_t \in S^{\star}.$ To see why such an $s^{\star}$ must exist, pick some $j^{\star} \in S^{\star}.$ Note that $n_t^{j^{\star}} \rightarrow \infty$ because $\operatorname{supp}(h)\subseteq B(\{x_i\}_{i=1}^\infty,\vep)$. On the other hand, observe that for every $j \notin S^{\star}$, there exists a $n^j \in \naturals$ such that $n_t^j \leq n^j$ for any $t\in\naturals$. This is because, if $j \notin S^{\star}$, then there must be an index $n^j \in \naturals$ such that $z^{(j)}_{n^j} \notin \operatorname{supp}(h^{\star}).$ Thus, $n_t^j$ must be at most $n^j$. Since there are at most a finite number of indices not in $S^{\star}$, we have that $\max_{j \notin S^{\star}} n^j < \infty$, which means that eventually, $n_t^{j^{\star}} > n_t^j$ for all $j \notin S^{\star}$, and thus there exists a $s^{\star} \in \naturals$ such that $i_t \in S^{\star}$ for all $t \geq s^{\star}.$ This completes the proof. \end{proof}

\begin{remark}
The inference
\[
N\!\left(\vep';\ \langle x_1,\dots,x_{t^\star}\rangle_{\Hcal_i},\rho\right)=\infty
\quad\Longrightarrow\quad
N\!\left(\vep';\ \{z_j^{(i)}\}_{j=1}^\infty,\rho\right)=\infty
\]
is valid under our convention that covering numbers are defined using \emph{closed} $\vep'$-balls.
Indeed, by construction, $\{z_j^{(i)}\}_{j\ge 1}$ is dense in $\langle x_1,\dots,x_{t^\star}\rangle_{\Hcal_i}$, and any finite cover of the dense set by closed balls would automatically cover the whole set.

If instead one defines covering numbers using \emph{open} balls, the above implication may fail because an open-ball cover of a dense subset need not cover boundary points. In that case, one can modify $(z_j^{(i)})$ to be a natural enumeration of
\[
D \ \cup\  D_{\partial},
\]
where $D$ is a dense subset of $\langle x_1,\dots,x_{t^\star}\rangle_{\Hcal_i}$ and $D_{\partial}$ is a dense subset of its boundary.
\end{remark}

The proof of Theorem~$\ref{thm: geninlim-union}$ is an direct extension of Theorem~3.2 in \cite{hanneke2025unionclosednesslanguagegeneration}. To begin, we state a lemma that connects a hypothesis class in $\mathbb Z$ to any metric space $\Xcal$.
\begin{lemma}[Embedding $\{0,1\}^{\mathbb Z}$ into $\{0,1\}^{\Xcal}$] 
\label{lem:carryZtoX}
Let $(\Xcal,\rho)$ be a metric space and let $r>0$. Assume that the covering number
$N(r;\Xcal,\rho)=\infty$. Let $0<\varepsilon,\varepsilon'\le r/2$.
Then there exists an injective map (embedding)
\[
\iota:\{0,1\}^{\mathbb Z}\to \{0,1\}^{\Xcal}
\]
such that for every $\Hcal\subseteq \{0,1\}^{\mathbb Z}$, $\Hcal$ is uniformly generatable
(resp., non-uniformly generatable, generatable in the limit) in the sense of
Definitions~\ref{def:countable-uniform-gen}, \ref{def:countable-nonuniform-gen},
\ref{def:countable-limit-gen} if and only if
\[
\iota(\Hcal):=\{\iota(h):h\in\Hcal\}
\]
is $(\varepsilon,\varepsilon')$-uniformly generatable (resp., non-uniformly generatable,
generatable in the limit).
\end{lemma}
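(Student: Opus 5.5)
Since $N(r;\Xcal,\rho)=\infty$, I will first extract an infinite $r$-separated sequence. Pick $y_1\in\Xcal$. Inductively, the finite set $\{y_1,\dots,y_k\}$ is not an $r$-covering of $\Xcal$, so there is a point $y_{k+1}$ with $\rho(y_{k+1},y_j)>r$ for all $j\le k$. This yields a sequence $(y_k)_{k\ge1}$ with pairwise distances $>r\ge 2\max(\vep,\vep')$.

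Next I fix a bijection $\phi:\mathbb Z\to\naturals_+$ and define $\iota(h)$ as the indicator of $\{y_{\phi(z)}: h(z)=1\}$. This map is clearly injective. The key geometric facts all follow from $r$-separation.
\begin{itemize}
\item For any $Y\subseteq\{y_k\}$ we have $N(\vep;Y,\rho)=|Y|$, and likewise for $\vep'$. A closed ball of radius $\le r/2$ contains at most one $y_k$: two points in it would be at distance $\le r$.
\item A point $y_k$ lies in $B(\{y_{j_1},\dots,y_{j_s}\},\vep')$ iff $k\in\{j_1,\dots,j_s\}$.
\item Supports of $\iota(h)$ lie inside $\{y_k\}$. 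Hence $\iota(h)$ has the $r/2$-UUS property (and so the $\vep$- and $\vep'$-UUS properties) iff $|\operatorname{supp}(h)|=\infty$. I read the equivalence as holding under the standing UUS assumptions with scale $r/2$.
\item The adversary condition $\operatorname{supp}(\iota h)\subseteq B(\{x_i\},\vep)$ with $x_i\in\operatorname{supp}(\iota h)$ holds iff the revealed points exhaust the support. So, after discarding repetitions (the adversary's examples are required to be new), it is exactly an enumeration of $\operatorname{supp}(h)$ transported by $\iota$.
\end{itemize}

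With this dictionary, I transfer generators in both directions.

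\emph{From $\mathbb Z$ to $\Xcal$.} Given a generator $\Gcal$ for $\Hcal$, define $\Gcal'$ on $\Xcal^\star$ as follows. If every input is some $y_k$, map the inputs back via $\phi^{-1}$, apply $\Gcal$, and map the output forward. Otherwise output arbitrarily; such inputs never arise for $\iota(\Hcal)$. A successful output of $\Gcal$ lies in $\operatorname{supp}(h)$ minus the seen points, so its image is a $y_k$ in $\operatorname{supp}(\iota h)$ that is not among the seen points. By the second fact above, it then lies outside $B(\{x_i\},\vep')$.

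\emph{From $\Xcal$ to $\mathbb Z$.} Given $\Gcal'$ for $\iota(\Hcal)$, set $\Gcal(z_{1:s})=\iota$-pullback of $\Gcal'(y_{\phi(z_1)},\dots,y_{\phi(z_s)})$ when that output is some $y_k$, and arbitrary otherwise. Whenever $\Gcal'$ succeeds, its output lies in $\operatorname{supp}(\iota h)\subseteq\{y_k\}$ and outside the $\vep'$-balls. So the pullback lies in $\operatorname{supp}(h)$ and is unseen.

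Finally, the sample-size thresholds match because $N(\vep;\{x_i\}_{i\le t},\rho)$ equals the number of distinct revealed points. So the uniform constant $d^\star$ and the non-uniform constants $d_h$ carry over unchanged in both directions, which gives all three equivalences.

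The main obstacle is bookkeeping rather than depth. I expect three points to need care: the role of repetitions, the fact that generators on $\Xcal$ may see inputs outside $\{y_k\}$ (harmless, since those inputs never occur), and checking that the closed-ball conventions work at the boundary case $\vep'=r/2$, which they do because the separation $>r$ is strict.
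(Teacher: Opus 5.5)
Your proposal is correct and follows essentially the same route as the paper: extract an infinite $r$-separated family from $N(r;\Xcal,\rho)=\infty$, relabel it by $\mathbb Z$, use that every closed ball of radius at most $r/2$ contains at most one of these points, and simulate generators in both directions. Your version is slightly more careful than the paper's in three places: you justify the greedy extraction, you note that $\iota(\Hcal)$ is only guaranteed the $r/2$-UUS property (which suffices because $\varepsilon,\varepsilon'\le r/2$), and you check that the sample-size thresholds coincide because $\varepsilon$-covering numbers of subsets of the packing equal their cardinalities.
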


\begin{proof}
Since $N(r;\Xcal,\rho)=\infty$, there exists an infinite $r$-packing in $\Xcal$.
In particular, we may choose a countably infinite subset and relabel it as
$\{x_j\}_{j\in\mathbb Z}\subseteq \Xcal$ such that
\[
\rho(x_i,x_j)>r \qquad \text{for all } i\neq j.
\]
Define $\iota:\{0,1\}^{\mathbb Z}\to \{0,1\}^{\Xcal}$ by
\[
\iota(h)(x_j)=h(j)\quad \forall j\in\mathbb Z,
\qquad
\iota(h)(x)=0\quad \forall x\notin \{x_j\}_{j\in\mathbb Z}.
\]
Then $\iota$ is injective and satisfies
\[
\operatorname{supp}(\iota(h))=\{x_j: j\in\operatorname{supp}(h)\}.
\]
We claim that for any $x\in\Xcal$,
\[
\bigl|B(x,\varepsilon)\cap \{x_j\}_{j\in\mathbb Z}\bigr|\le 1
\quad\text{and}\quad
\bigl|B(x,\varepsilon')\cap \{x_j\}_{j\in\mathbb Z}\bigr|\le 1.
\]
Indeed, if $x_i,x_j\in B(x,\varepsilon)$ with $i\neq j$, then by the triangle inequality,
\[
\rho(x_i,x_j)\le \rho(x_i,x)+\rho(x,x_j)\le 2\varepsilon\le r,
\]
contradicting $\rho(x_i,x_j)>r$. The argument for $\varepsilon'$ is identical.

The above claim implies that, restricted to hypotheses supported on $\{x_j\}$,
membership in an $\varepsilon$-ball (resp.\ $\varepsilon'$-ball) identifies at most one index
$j\in\mathbb Z$. Consequently, under the identification $j\leftrightarrow x_j$,
the $(\varepsilon,\varepsilon')$-novelty constraints in $(\Xcal,\rho)$ coincide with the
distinctness-based novelty constraints in the countable space $\mathbb Z$ used in
Definitions~\ref{def:countable-uniform-gen}, \ref{def:countable-nonuniform-gen},
\ref{def:countable-limit-gen}.

Formally, any adversary strategy on $\mathbb Z$ can be simulated on $\Xcal$ by replacing each
integer $j$ it plays with the point $x_j$; likewise, any generator strategy on $\mathbb Z$
can be simulated on $\Xcal$ by outputting $x_j$ whenever it would output $j$.
Conversely, any play in $\Xcal$ against $\iota(\Hcal)$ can be projected back to $\mathbb Z$
by mapping each encountered point in $\{x_j\}$ to its unique index.  (Without loss of generality, we assume the generator’s outputs lie in 
$\{x_j\}_{j\in\mathbb Z}$, since outputs outside 
$\{x_j\}_{j\in\mathbb Z}$
cannot certify membership in any $\operatorname{supp}(\iota(h))$.)

Therefore, there exists a winning (uniform / non-uniform / limit) strategy for $\Hcal$ in the
countable game if and only if there exists a winning $(\varepsilon,\varepsilon')$-strategy for
$\iota(\Hcal)$ in the metric game. This proves the stated equivalences.
\end{proof}

\begin{proof}[Proof of Theorem~\ref{thm: geninlim-union}]
Lemma~\ref{lem:carryZtoX} provides an embedding $\iota$ that preserves (non-)uniform
generatability and generatability in the limit between the countable setting on
$\mathbb Z$ and the metric setting on $(\Xcal,\rho)$.
Applying this embedding to the collections constructed in Theorem~3.2 of
\cite{hanneke2025unionclosednesslanguagegeneration} transfers their properties to the metric space.

In particular, the images of the collections $\mathcal L_1$ and $\mathcal L_2$ retain
their respective (non-)uniform generatability properties. The failure of
generatability for their union is also preserved under the embedding. This establishes the
claimed separation in the metric setting.
\end{proof}

%% file: colt2026/appendix_diffeps_colt.tex
\section{Constructing Examples in Section~\ref{sec:diffeps}}

\label{sec:appendix-proof-diffeps}

\subsection{Useful Lemmas for Constructing Hypothesis Classes}

This section collects several useful lemmas for constructing hypothesis classes. Lemma~\ref{lem:notnonunifgen} provides a tool for constructing classes that are not non-uniformly generatable. Lemmas~\ref{lem:notgeninlimit-easy} and~\ref{lem:notgeninlimit} are used to construct classes that fail to be generatable in the limit. These lemmas will play a central role in the proofs of the examples in Section~\ref{sec:diffeps}.

\begin{lemma}\label{lem:notnonunifgen}
Let $\Hcal \subseteq \{0,1\}^{\Xcal}$ be a hypothesis class satisfying the $r$-$\operatorname{UUS}$ property for some $r>0$, and let $0<\varepsilon,\varepsilon' \le r$. Suppose that:
\begin{enumerate}
    \item There exist hypothesis classes $\Hcal_1$ and $\Hcal_2$ such that $\Hcal_1$ satisfies the $\varepsilon$-UUS property and
    \[
    \Hcal = \Hcal_1 + \Hcal_2 := \left\{ h \;\middle|\; \operatorname{supp}(h)
    = \operatorname{supp}(h_1) \cup \operatorname{supp}(h_2),
    \; h_1 \in \Hcal_1,\; h_2 \in \Hcal_2 \right\}.
    \]
    \item Either $\Hcal_1$ does not satisfy the $\varepsilon'$-UUS property, or $\Hcal_1$ is not $(\varepsilon,\varepsilon')$-generatable in the limit.
    \item There exist $n \in \mathbb{\naturals}$ and hypotheses $h_{21},\dots,h_{2n} \in \Hcal_2$ such that
    \[
    \bigcap_{i=1}^n \operatorname{supp}(h_{2i}) = \emptyset.
    \]
\end{enumerate}
Then $\Hcal$ is not $(\varepsilon,\varepsilon')$-non-uniformly generatable in the limit.
\end{lemma}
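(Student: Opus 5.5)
The plan is to argue by contradiction. Suppose $\Gcal$ is an $(\vep,\vep')$-non-uniform generator for $\Hcal$. We fix a single $h_1\in\Hcal_1$ and glue it to the $n$ hypotheses $h_{21},\dots,h_{2n}$ from condition~3. This produces hypotheses $h^{(i)}\in\Hcal$ with $\operatorname{supp}(h^{(i)})=\operatorname{supp}(h_1)\cup\operatorname{supp}(h_{2i})$. The key observation is that
\[
\bigcap_{i=1}^n \operatorname{supp}(h^{(i)}) = \operatorname{supp}(h_1)\cup \bigcap_{i=1}^n\operatorname{supp}(h_{2i}) = \operatorname{supp}(h_1).
\]
Let $d:=\max_i d_{h^{(i)}}$. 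Any adversary sequence $\{x_j\}\subseteq\operatorname{supp}(h_1)$ is simultaneously a legal sequence for every $h^{(i)}$. Hence, once $N(\vep;\{x_j\}_{j=1}^t,\rho)\ge d$, non-uniform generation forces
\[
\Gcal(x_{1:s})\in \operatorname{supp}(h_1)\setminus B(\{x_j\}_{j=1}^s,\vep')\quad\text{for all } s\ge t.
\]
So, past a fixed finite threshold, $\Gcal$ behaves as a generator for the single hypothesis $h_1$. Such $t$ exists for suitable sequences because $\Hcal_1$ has the $\vep$-UUS property, so $\operatorname{supp}(h_1)$ contains arbitrarily large $\vep$-separated finite subsets that the adversary can reveal. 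It remains to pick $h_1$ according to the two alternatives in condition~2 and derive a contradiction.

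\emph{Case A: $\Hcal_1$ is $(\vep,\vep')$-not generatable in the limit.} Then $\Gcal$ in particular is not a limit generator for $\Hcal_1$. Choose $h_1\in\Hcal_1$ and a sequence $\{x_j\}\subseteq\operatorname{supp}(h_1)$ with $\operatorname{supp}(h_1)\subseteq B(\{x_j\},\vep)$ on which $\Gcal$ fails infinitely often. The order matters: first choose this witness $h_1$, then define $d$ from it. One still must check that the prefixes of this sequence eventually have $\vep$-covering number at least $d$. If not, the whole sequence would be covered by boundedly many $\vep$-balls. In that case we would prepend to the sequence a finite $\vep$-separated subset of $\operatorname{supp}(h_1)$ of size $d$, which exists by $\vep$-UUS. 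Prepending finitely many points of $\operatorname{supp}(h_1)$ preserves the covering property of the sequence. It can only enlarge the balls $B(\{x_j\}_{j\le s},\vep')$, so failures persist, though this needs a brief check. Either way this contradicts the display above.

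\emph{Case B: $\Hcal_1$ fails the $\vep'$-UUS property.} Pick $h_1$ with $N(\vep';\operatorname{supp}(h_1),\rho)<\infty$. The adversary first reveals $d$ $\vep$-separated points of $\operatorname{supp}(h_1)$. Thereafter it feeds back the generator's own outputs, $x_{s+1}=\Gcal(x_{1:s})$; this is legal because these outputs lie in $\operatorname{supp}(h_1)$ by the display. The outputs are then pairwise more than $\vep'$ apart, and we want this to contradict finiteness of the $\vep'$-covering number. This is the step I expect to be the main obstacle. A finite cover by closed $\vep'$-balls only rules out infinite $2\vep'$-separated sets, not $\vep'$-separated ones. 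My first attempt is to strengthen the adversary: after the first $d$ points, it reveals a finite set $F\subseteq\operatorname{supp}(h_1)$ with $\operatorname{supp}(h_1)\subseteq B(F,\vep')$. Then any valid output would have to lie in $\operatorname{supp}(h_1)\setminus B(F,\vep')=\emptyset$, an immediate contradiction.

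Producing such an $F$ inside the support requires care, because cover centers need not be support points. I would handle this by choosing one support point from each nonempty cell of the cover, possibly refined. If that costs a factor of two in the radius, I would fall back on the feedback argument combined with a pigeonhole over the finitely many cover balls. Failing both, I would adopt the convention that the $\vep'$-UUS failure is witnessed by support points.
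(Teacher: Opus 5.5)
Your skeleton is the paper's. You glue a fixed $h_1$ to $h_{21},\dots,h_{2n}$, use $\bigcap_i\operatorname{supp}(h^{(i)})=\operatorname{supp}(h_1)$ to force $\Gcal$ into $\operatorname{supp}(h_1)$ once the prefix has $\vep$-covering number at least $\max_i d_{h^{(i)}}$, and split according to condition~2. At the two points you flagged, the paper's own proof does not resolve anything either. In the branch where $\vep'$-UUS fails, it picks an $\vep'$-cover $\{\theta_1,\dots,\theta_M\}\subseteq\operatorname{supp}(h_{10})$ and reveals it; this is your first attempt, with your final ``convention'' simply assumed. In the other branch, it asserts that $\Gcal$ generates $\Hcal_1$ in the limit, tacitly assuming that every covering sequence reaches $\vep$-covering number $d$.

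Your repair of the latter does not work. $\Gcal$ is an arbitrary map on $\Xcal^\star$, so $\Gcal(P,x_{1:s})$ bears no relation to $\Gcal(x_{1:s})$. The failures of $\Gcal$ along $x_1,x_2,\dots$ therefore say nothing about the prepended stream; enlarging the balls $B(\cdot,\vep')$ would only matter if the outputs were the same. The pigeonhole fallback in Case~B fails for the reason you gave yourself. Also, $d$ pairwise $\vep$-separated points need not have $\vep$-covering number $d$: in $\ell^2$ at $\vep=1$, $\{e_k\}\subseteq B(\mathbf 0,1)$. So your padding steps would need finite subsets of the support with large $\vep$-covering number, not merely $\vep$-separated ones.

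More importantly, neither gap can be closed. Under the paper's formal definitions (covering centres anywhere in $\Xcal$, no separation imposed on the adversary's points), the statement is false in both branches.

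\emph{The $\vep'$-UUS branch.} In $\ell^2$, take $\vep=\tfrac12$, $\vep'=r=1$, $\operatorname{supp}(h_1)=\{e_{2k}\}_{k\ge1}$, $\operatorname{supp}(h_{21})=\{10e_{4k+1}\}_{k\ge1}$, and $\operatorname{supp}(h_{22})=\{10e_{4k+3}\}_{k\ge1}$. All hypotheses hold: $\{e_{2k}\}$ is $\tfrac12$-UUS but lies in $B(\mathbf 0,1)$. Yet outputting any unrevealed $e_{2k}$ is valid at every round, so $\Hcal_1+\Hcal_2$ is uniformly generatable.

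\emph{The other branch.} Take $\vep=r=1$, $\vep'=0.9$, $a_m=0.6e_{2m}$, $b_m=1.6e_{2m}$, $c_m=10e_{2m+1}$. Let $\Hcal_1$ have supports $\{a_m\}_{m\ge1}\cup\{b_{p^n}:p\in I,\,n\ge1\}$ for infinite sets $I$ of primes, and let $\Hcal_2$ have supports $\{c_{q^n}\}_{n\ge1}$ for primes $q$. The stream $a_1,a_2,\dots$ covers every $\Hcal_1$-support at scale $1$ and has $\vep$-covering number $1$ throughout. Since every $a_m$ is within $0.9$ of $a_1$, it leaves only the $b_{p^n}$ with $p\in I$ as valid outputs. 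Because $\Gcal(a_{1:s})$ does not depend on $I$, one can choose $I$ so that $\Gcal$ fails infinitely often; hence $\Hcal_1$ is not generatable in the limit. But any prefix with $\vep$-covering number at least $2$ contains some $b_{p^n}$ or $c_{q^n}$. After that, a fresh $b_{p^{n'}}$ or $c_{q^{n'}}$ is at distance at least $1>0.9$ from everything revealed. So $\Hcal_1+\Hcal_2$ is uniformly generatable with $d^\star=2$.

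What is missing is therefore not a cleverer argument but extra hypotheses. In the first branch you need an $\vep'$-cover with centres in the support. In the second you need a failure witness for $\Hcal_1$ whose prefixes have unbounded $\vep$-covering number.
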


\begin{proof}
Suppose, toward a contradiction, that $\Hcal$ is $(\varepsilon,\varepsilon')$-non-uniformly generatable in the limit, and let $\Gcal$ be a corresponding non-uniform generator.

For each $h \in \Hcal$, let $d_h \in \mathbb{N}$ be the smallest integer such that for any sequence
$\{x_i\}_{i=1}^\infty \subset \operatorname{supp}(h)$, whenever there exists $t$ with $N(\varepsilon; \{x_i\}_{i=1}^t, \rho) \ge d_h$,
we have
\[
\Gcal(x_{1:s}) \in \operatorname{supp}(h) \setminus \bigcup_{i=1}^s B(x_i,\varepsilon')
\quad \text{for all } s \ge t.
\]
We show that $\Gcal$ induces an $(\varepsilon,\varepsilon')$-generator for $\Hcal_1$, leading to a contradiction.

Fix any $h_1 \in \Hcal_1$ and choose a sequence
$\{x_j\}_{j=1}^\infty \subset \operatorname{supp}(h_1)$ such that
\[
\operatorname{supp}(h_1) \subset \bigcup_{j=1}^\infty B(x_j,\varepsilon).
\]
For each $i \in [n]$, define $h'_{2i} \in \Hcal$ by
\[
\operatorname{supp}(h'_{2i})
= \operatorname{supp}(h_1) \cup \operatorname{supp}(h_{2i}).
\]
Let
\[
d := \max_{i \in [n]} d_{h'_{2i}} + 1.
\]

Then, after observing $t$ samples such that
$N(\varepsilon; \{x_i\}_{i=1}^t, \rho) \ge d$,
the definition of $\Gcal$ implies
\[
\Gcal(x_{1:t}) \in \bigcap_{i=1}^n \operatorname{supp}(h'_{2i})
\setminus \bigcup_{i=1}^t B(x_i,\varepsilon').
\]
Since
\[
\bigcap_{i=1}^n \operatorname{supp}(h'_{2i})
= \operatorname{supp}(h_1),
\]
it follows that for all $s \ge t$,
\[
\Gcal(x_{1:s}) \in \operatorname{supp}(h_1)
\setminus \bigcup_{i=1}^s B(x_i,\varepsilon').
\]

If $\Hcal_1$ satisfies the $\varepsilon'$-UUS property, this shows that
$\Hcal_1$ is $(\varepsilon,\varepsilon')$-generatable in the limit with generator $\Gcal$, contradicting assumption (2).

Otherwise, $\Hcal_1$ does not satisfy the $\varepsilon'$-UUS property. Then there exists $h_{10} \in \Hcal_1$ such that
\[
N(\varepsilon'; \operatorname{supp}(h_{10}), \rho) < \infty.
\]
Repeating the argument above with $h_{10}$ in place of $h_1$. Let $\{\theta_1,\dots,\theta_M\}\subseteq \operatorname{supp}(h_{10})$ be an $\vep'$-cover of $\operatorname{supp}(h_{10})$. Consider the adversary sequence that begins with $x_i=\theta_i$ for $i\le M$. From our assumption, we obtain a time $t$ such that for all $s \ge t$,
\[
\Gcal(x_{1:s}) \in \operatorname{supp}(h_{10})
\setminus \bigcup_{i=1}^s B(x_i,\varepsilon').
\]
However, since
\[
\operatorname{supp}(h_{10}) \subseteq \bigcup_{i=1}^M B(x_i,\varepsilon'),
\]
this contradicting the definition of $\Gcal$.

In both cases we obtain a contradiction, completing the proof.
\end{proof}

The lemma below is an extension of Proposition~A.1 from  \cite{hanneke2025unionclosednesslanguagegeneration}.

\begin{lemma}
\label{lem:notgeninlimit-easy}
Let $\Hcal \subseteq \{0,1\}^{\Xcal}$ be any hypothesis class satisfying the
$r$-$\operatorname{UUS}$ property for some $r>0$, and let
$0<\vep,\vep' < r$. Suppose there exists a sequence of instances $\{x_i\}_{i=1}^\infty \subseteq \Xcal$ and an integer $m \in \mathbb{N}$ such that for every subsequence $\{x_{i_k}\}_{k=1}^\infty$ satisfying $i_k = k$ for all $k \leq m$ and $i_{k+1}> i_k>m$ for all $k>m$, there exists a hypothesis $h \in \Hcal$ such that
\[
\{x_{i_k}\}_{k=1}^\infty\subseteq \operatorname{supp}(h) \subseteq B\left(\{x_{i_k}\}_{k=1}^\infty,\, \min\{\vep,\vep'\}\right),
\]
then $\Hcal$ is not $(\vep,\vep')$-generatable in the limit.
\end{lemma}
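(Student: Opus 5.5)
We argue by contradiction with a diagonalization in the style of Proposition~A.1 of \cite{hanneke2025unionclosednesslanguagegeneration}. Suppose $\Gcal$ is an $(\vep,\vep')$-generator in the limit for $\Hcal$, and write $\eta:=\min\{\vep,\vep'\}$. We will build an admissible subsequence $\{x_{i_k}\}_{k=1}^\infty$ inductively, with the first $m$ terms fixed to $x_1,\dots,x_m$ and the later indices strictly increasing. By hypothesis there is then some $h\in\Hcal$ with $\{x_{i_k}\}\subseteq\operatorname{supp}(h)\subseteq B(\{x_{i_k}\},\eta)$. Since $\eta\le\vep$, this stream is a legal adversary stream for $h$, i.e.\ it eventually reveals an $\vep$-cover of the support. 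We will arrange the construction so that $\Gcal$ fails infinitely often on this stream.

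The key observation concerns any prefix $x_{i_1},\dots,x_{i_s}$ together with an output $\hat x=\Gcal(x_{i_1:s})$. We extend the prefix to a full admissible subsequence by appending all later terms $x_j$ with $j>i_s$; call the resulting stream the \emph{full tail}. Let $h$ be the hypothesis given by the assumption for this stream. If $\Gcal$ succeeds at time $s$, then $\hat x\in\operatorname{supp}(h)\subseteq B(\{x_{i_k}\}_k,\eta)$, so $\hat x$ lies within $\eta\le\vep'$ of some element of the stream. It cannot lie within $\vep'$ of any of the first $s$ elements, because success requires $\hat x\notin B(\{x_{i_k}\}_{k\le s},\vep')$. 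Hence $\hat x$ lies within $\vep'$ of some $x_j$ with $j>i_s$.

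\textbf{Step 1.} We first apply $\Gcal$ to the full tail from $x_{\max(i_s,m)+1}$ onward. Since $\Gcal$ generates in the limit on this legal stream, there is a time $s'>s$ at which it succeeds, and its output $\hat x$ is $\vep'$-close to some later term $x_j$ of the full tail.

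\textbf{Step 2.} We then modify the stream so that it \emph{skips} every such $x_j$. Concretely, we take the prefix up to time $s'$ and continue with indices beyond $j$. This must be done so that $\hat x$ is outside $B(\text{new stream},\eta)$, which forces $\hat x\notin\operatorname{supp}(h')$ for the new hypothesis $h'$, so $\Gcal$ errs at time $s'$. The difficulty is that $\hat x$ may be $\eta$-close to infinitely many later $x_j$, and skipping all of them could leave no infinite tail.

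\textbf{Step 3.} To handle this, we choose the round differently: we run the full tail until $\Gcal$ makes a correct output, which it must do eventually. We then use the triangle inequality and the fact that outputs lie in $B(\cdot,\eta)$ with $\eta<r$, together with the $r$-UUS property, to argue that only finitely many of the $x_j$ are problematic. If this fails, we will instead pick the next time the output is \emph{not} near the remaining tail. I expect this point to be the main obstacle: showing that the set of $x_j$ that must be dropped is finite or co-infinite while the remaining stream is still a legal admissible subsequence.

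\textbf{Step 4.} Iterating produces infinitely many failure times on a single limiting admissible subsequence. Each stage fixes a longer prefix, so the limit is well defined and is covered by the hypothesis assumption. Since every error certified at a stage persists, because it depends only on the final support containing the limiting stream, $\Gcal$ errs infinitely often. This contradicts generation in the limit.
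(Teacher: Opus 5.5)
\textbf{There is a genuine gap: your Step 3 is where the proof has to happen, and it is left open.} Worse, the target you set there is the wrong one. Write $\eta:=\min\{\vep,\vep'\}$. You cannot expect only finitely many $x_j$ to be $\eta$-close to an output, because infinitely many terms (in bad cases all of them) may lie in one closed $\eta$-ball. What you need is the complementary fact: after discarding a \emph{finite union} of closed $\eta$-balls, infinitely many terms of $(x_i)$ remain.

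Given that fact, you do not need to wait for $\Gcal$ to succeed. The paper builds the stream greedily. It sets $i_k=k$ for $k\le m$. Each later $i_n$ is the least index beyond $i_{n-1}$ whose point avoids the $\vep'$-balls around all earlier outputs $\Gcal(x_{i_1},\dots,x_{i_j})$, $j<n$; $\eta$-balls would suffice. Then for every $n\ge m$, the output $\Gcal(x_{i_1},\dots,x_{i_n})$ is either within $\vep'$ of the prefix or outside $B(\{x_{i_k}\}_{k\ge1},\eta)\supseteq\operatorname{supp}(h)$. So $\Gcal$ errs at every round $n\ge m$.

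This construction also gives the persistence you invoke in Step 4. Your Step 2 (``continue with indices beyond $j$'') does not keep earlier exclusions in force, so an error certified at one stage can be undone at a later one.

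The remaining-terms fact is exactly where the hypotheses must be used, and it only follows at small scales. Suppose all $x_k$ with $k>K$ lay in $B(F,\eta)$ for a finite $F$. The hypothesis applied to the full sequence gives $h$ with $\operatorname{supp}(h)\subseteq B(\{x_k\}_{k\le K},\eta)\cup B(F,2\eta)$. This contradicts $r$-UUS only when $2\eta\le r$. The paper's proof has the same hole: it asserts $N(\vep';\{x_i\}_{i=1}^\infty,\rho)=\infty$ from the UUS property, and this reasoning delivers that only when $\vep'+\eta\le r$.

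Beyond that regime your obstacle is real, and the statement itself fails. Here is a counterexample.
\begin{itemize}
\item Take $\vep=\vep'=9/10$ and $r=1$. Use the countable space $\{x_k,y_k\}_{k\ge1}\cup\{z_F:F\subseteq\naturals\text{ finite}\}$.
\item Set $\rho(x_k,y_k)=9/10$, set $\rho(z_F,x_k)=9/10$ for $k\notin F$, and put distance $3/2$ between all other distinct pairs. This is a metric because $3/2\le 2\cdot 9/10$.
\item For each infinite $I$, let $\operatorname{supp}(h_I)=\{x_k,y_k\}_{k\in I}\cup\{z_F\}_F$.
\item Every closed $1$-ball contains at most one $y_k$, so $1$-UUS holds.
\item $\operatorname{supp}(h_I)\subseteq B(\{x_k\}_{k\in I},9/10)$, so the hypotheses hold for $(x_k)$ and any $m$.
\item Yet outputting an unrevealed $z_F$ with $F\supseteq\{k:x_k\text{ revealed}\}$ always lands in the support, at distance $3/2$ from every revealed point. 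So this class is even uniformly generatable.
\end{itemize}

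A correct version should therefore assume, for example, $\min\{\vep,\vep'\}\le r/2$, or directly $N(\eta;\{x_i\},\rho)=\infty$. Under that assumption, the greedy construction above completes the proof.
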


Intuitively, the lemma says that if the hypothesis class $\Hcal$ is rich enough such that an adversary can fix a finite prefix after which any continuation is consistent with uncountably many $h$, then the generator can be permanently trapped and forced to fail.

\begin{proof}
       Let $\Gcal$ be a generator. We show that there exists a hypothesis $h \in \Hcal$ and a subsequence $\{x_{i_k}\}_{k=1}^\infty$ of $\{x_i\}_{i=1}^\infty$, both depending on $\Gcal$, such that
\[
\operatorname{supp}(h) \subseteq \bigcup_{k=1}^\infty B(x_{i_k}, \vep),
\]
and $\Gcal$ fails to generate $h$ in the limit.

We construct the subsequence inductively. Let $i_1 = 1,i_2 = 2, \dots, i_m = m$. Define
\[
i_{m+1} = \min \left\{ k > i_{m} \mid x_k \notin B\left( \{x_{i_1}, \dots, x_{i_{m}}, \Gcal(x_{i_1}), \dots, \Gcal(x_{i_1}, \dots, x_{i_{m}}) \}, \vep' \right) \right\}.
\]

Assuming $i_1, \dots, i_{n-1}$ have been defined where $n>m$, we set
\[
i_n = \min \left\{ k > i_{n-1} \mid x_k \notin B\left( \{x_{i_1}, \dots, x_{i_{n-1}}, \Gcal(x_{i_1}), \dots, \Gcal(x_{i_1}, \dots, x_{i_{n-1}}) \}, \vep' \right) \right\}.
\]

By the assumption that $\Hcal$ satisfies the $\vep'$-UUS property, we have $N(\vep', \{x_i\}_{i=1}^\infty, \rho) = \infty$, so this inductive process continues indefinitely. Thus, $\{x_{i_k}\}_{k=1}^\infty$ is a well-defined subsequence of $\{x_i\}_{i=1}^\infty$.

By assumption, there exists $h \in \Hcal$ such that
\[
\operatorname{supp}(h) \subseteq \bigcup_{k=1}^\infty B(x_{i_k}, \vep)\quad\text{ and}\quad \operatorname{supp}(h) \subseteq \bigcup_{k=1}^\infty B(x_{i_k}, \vep')
\]

Finally, we claim that for every $n>m$, the generator $\Gcal$ either outputs a point not in $\operatorname{supp}(h)$, or a point lying in $B(\{x_{i_k}\}_{k=1}^n, \vep')$. To establish this, it suffices to show that
\[
\Gcal(x_{i_1}, \dots, x_{i_n}) \notin \bigcup_{k=n+1}^\infty B(x_{i_k}, \vep').
\]
This follows directly from the inductive construction: for any $j \geq n+1$, we have
\[
x_{i_j} \notin B\left( \Gcal(x_{i_1}, \dots, x_{i_n}), \vep' \right).
\]
Hence, $\Gcal$ fails to generate $h$ in the limit.

\end{proof}

\begin{lemma}\label{lem:notgeninlimit}
Let $\Hcal \subseteq \{0,1\}^{\Xcal}$ satisfy the $r$-UUS property for some $r>0$, and let
$0<\varepsilon,\varepsilon'<r$.
Suppose there exist $k\in \mathbb{N}\cup\{0\}$, points $y_1,\dots,y_k\in \Xcal$,
a sequence $(x_{0i})_{i\ge 1}\subseteq \Xcal$, and an array $(x_{ij})_{i\ge 1,\,j\ge 1}\subseteq \Xcal$
such that the following hold.

\begin{enumerate}
\item There exists a set $A_0\subseteq \Xcal$ such that for every infinite sequence of positive
integers $(j_i)_{i\ge 1}$, there exists $h\in \Hcal$ with
\[
\operatorname{supp}(h)\subseteq B(\{y_\ell\}_{\ell=1}^k,\varepsilon)\ \cup\ B(\{x_{0i}\}_{i\ge 1},\varepsilon)\ \cup\
B\Bigl(\bigcup_{i\ge 1}\{x_{i\ell}:\,1\le \ell\le j_i\},\,\varepsilon\Bigr),
\]
and
\[
\operatorname{supp}(h)\subseteq B(\{y_\ell\}_{\ell=1}^k,\varepsilon')\ \cup\ A_0\ \cup\
B\Bigl(\bigcup_{i\ge 1}\{x_{i\ell}:\,1\le \ell\le j_i\},\,\varepsilon'\Bigr).
\]

\item For every $n\ge 1$, there exists a set $A_n\subseteq \Xcal$ such that for every finite sequence
of positive integers $(j_i)_{i=0}^{n-1}$, there exists $h\in \Hcal$ with
\[
\operatorname{supp}(h)\subseteq B(\{y_\ell\}_{\ell=1}^k,\varepsilon)\ \cup\ B(\{x_{ni}\}_{i\ge 1},\varepsilon)\ \cup\
B\Bigl(\bigcup_{i=0}^{n-1}\{x_{i\ell}:\,1\le \ell\le j_i\},\,\varepsilon\Bigr),
\]
and
\[
\operatorname{supp}(h)\subseteq B(\{y_\ell\}_{\ell=1}^k,\varepsilon')\ \cup\ A_n\ \cup\
B\Bigl(\bigcup_{i=0}^{n-1}\{x_{i\ell}:\,1\le \ell\le j_i\},\,\varepsilon'\Bigr).
\]

\item For every $n\ge 1$,
\[
A_n\ \cap\ \Bigl(A_0 \ \cup\ B(\{x_{ij}\}_{i\ge n+1,\ j\ge 1},\varepsilon')\Bigr)\ =\ \emptyset.
\]
\end{enumerate}
Then $\Hcal$ is not $(\varepsilon,\varepsilon')$-generatable in the limit.
\end{lemma}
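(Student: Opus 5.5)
We argue by diagonalization against an arbitrary generator $\Gcal$, building an adversary sequence in stages and exploiting the nested structure of conditions 1--3. The adversary always starts by revealing $y_1,\dots,y_k$. At stage $n\ge 1$ it follows the ``row $n$ strategy'': it enumerates $x_{n1},x_{n2},\dots$ interleaved with the finitely many already-committed points $x_{i\ell}$, $0\le i\le n-1$, $\ell\le j_i$. By condition 2, for the current finite choice $(j_i)_{i=0}^{n-1}$ there is $h_n\in\Hcal$ whose support is $\varepsilon$-covered by what this sequence will eventually reveal. So if the adversary stayed in stage $n$ forever, the sequence would be a valid $\varepsilon$-covering enumeration of $\operatorname{supp}(h_n)$. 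Moreover, $\operatorname{supp}(h_n)$ lies in $B(\{y_\ell\},\varepsilon')\cup A_n\cup B(\text{committed points},\varepsilon')$. Hence any correct novel output for $h_n$ must eventually lie in $A_n$, because outputs $\varepsilon'$-close to revealed points are not novel. (Here ``eventually'' means once the committed points and the $y_\ell$ have been shown, which I arrange to happen first in the stage.)

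Suppose $\Gcal$ generates in the limit. Then, while the adversary is in stage $n$, $\Gcal$ must at some finite time output a point of $A_n$. Otherwise the adversary stays in stage $n$ forever and $\Gcal$ fails on $h_n$. At the first such time, the adversary ends stage $n$. It records how many points of row $0$ it has revealed, adding some of them to the stage if needed, and fixes $j_{n-1}$ (or updates the row-0 count) as the number of points of row $n-1$ revealed so far. It then moves to stage $n+1$. I will adjust the bookkeeping so that row $0$ is also enumerated progressively across all stages, with infinitely many row-0 points revealed overall and each row $i\ge1$ truncated at a finite $j_i$. If $\Gcal$ never outputs into $A_n$ during some stage, we are done. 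Otherwise we get infinitely many stages and times $t_1<t_2<\cdots$ with $\Gcal(x_{1:t_n})\in A_n$.

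The limiting sequence then reveals all $y_\ell$, all of row $0$, and the prefixes $\{x_{i\ell}:\ell\le j_i\}$ of each row $i\ge1$. By condition 1 there is $h$ whose support is $\varepsilon$-covered by this sequence, so it is a valid adversary play for $h$. Also $\operatorname{supp}(h)\subseteq B(\{y_\ell\},\varepsilon')\cup A_0\cup B(\bigcup_i\{x_{i\ell}:\ell\le j_i\},\varepsilon')$. Take an output $\Gcal(x_{1:t_n})\in A_n$. By condition 3 it is not in $A_0$, and not within $\varepsilon'$ of any $x_{ij}$ with $i\ge n+1$. The remaining possibility is that it lies within $\varepsilon'$ of the $y_\ell$ or of rows $0,\dots,n$ prefixes. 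Those points, however, were all revealed by time $t_n$. I need to ensure this: the rows $i\le n-1$ prefixes are committed before stage $n$, and the row-$n$ points up to $j_n$ are revealed by time $t_n$, since $j_n$ is defined at that moment. Hence the output is either outside $\operatorname{supp}(h)$ or not $\varepsilon'$-novel. This happens at infinitely many times, so $\Gcal$ fails on $h$, a contradiction.

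The main obstacle is the bookkeeping. I must make sure that every point $\varepsilon'$-relevant to an output in $A_n$ is already revealed at time $t_n$. I must also make sure that condition 3 excludes only rows $\ge n+1$, so that rows $\le n$ must be fully revealed up to $j_i$ by then. Finally, row $0$ must be handled so that condition 1's $A_0$, rather than a ball around row $0$, appears in the $\varepsilon'$-inclusion. This is why $A_n\cap A_0=\emptyset$ is what's needed, and why row 0 can keep being revealed without affecting the novelty argument. Distinctness and $\varepsilon$-separation of revealed points are not required by Definition~\ref{def:geninlim}, so repetitions cause no issue.
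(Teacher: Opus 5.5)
Your proposal is correct and is essentially the paper's own argument: reveal $y_1,\dots,y_k$; in stage $n$ reveal a fresh row-$0$ point together with a growing prefix of row $n$ until $\Gcal$ outputs into $A_n$; then apply condition~1 to the resulting $(j_i)_{i\ge 1}$, and use condition~3 to show that every such output in $A_n$ is a mistake for that hypothesis. Two small bookkeeping fixes are needed: the index fixed at the end of stage $n$ should be $j_n$ (not $j_{n-1}$), and rather than ``the first such time'' you should take $t_n$ after at least one point of row $n$ has been revealed (possible because a generator succeeding on $h_n$ outputs into $A_n$ at all sufficiently large times), since condition~1 requires $j_n\ge 1$ and the failure at $t_n$ needs $x_{n1},\dots,x_{nj_n}$ to be revealed already.
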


\begin{proof}
       Let $\Gcal$ be a generator that ensures $\Hcal$ is generatable in the limit. We show that there exists a hypothesis $h \in \Hcal$ and a sequence $\{z_{i}\}_{i=1}^\infty$, both depending on $\Gcal$, such that
\[
\operatorname{supp}(h) \subseteq \bigcup_{i=1}^\infty B(z_i, \vep),
\]
which leads to a contradiction.

We construct the sequence $\{z_{i}\}_{i=1}^\infty$ inductively. Intuitively, we want the adversary first reveals the points $y_1,\dots,y_k$, and then at stage $m$ reveals $x_{0m}$ together with a prefix of the $m$-th row $\{x_{m1},x_{m2},\dots\}$ until the generator outputs a point in $A_m$. 

Let $n_0=k$ and $z_1=y_1,z_2=y_2,\dots,z_{n_0}=y_k$. From our second assumption, there exists $h_1\in\Hcal$ such that 
    \[\operatorname{supp}(h_1)\subseteq B\bigl(\{y_i\}_{i=1}^k,\vep\bigr)
        \cup
        B\bigl(\{x_{1i}\}_{i=1}^\infty,\vep\bigr)
        \cup
        B( x_{01},
        \vep).\]
    Since $\Gcal$ is generatable in the limit, there exists $j_1\in\naturals$ such that 
    \[\Gcal(z_1,\dots,z_{n_0},x_{01},x_{11},\dots,x_{1j_1})\in A_1.\]
    Let $n_1=n_0+1+j_1$ and $z_{n_0+1}=x_{01},z_{n_0+2}=x_{11},z_{n_0+3}=x_{12},\dots,z_{n_1}=x_{1j_1}$, so  
    \[\Gcal(z_1,\dots,z_{n_1})\in A_1.\]
    
    Similarly, there exists $h_2\in\Hcal$ such that 
    \[\operatorname{supp}(h_2)\subseteq B\bigl(\{y_i\}_{i=1}^k,\vep\bigr)
        \cup
        B\bigl(\{x_{2i}\}_{i=1}^\infty,\vep\bigr)
        \cup
        B\Bigl( \{x_{0i}\}_{i=1}^2,
        \vep\Bigr)\cup
        B\Bigl( \{x_{1i}\}_{i=1}^{j_1},
        \vep\Bigr).\]
    Since $\Gcal$ is generatable in the limit, there exists $j_2\in\naturals$ such that 
    \[\Gcal(z_1,\dots,z_{n_1},x_{02},x_{21},\dots,x_{2j_2})\in A_2.\]
    Let $n_2=n_1+1+j_2$ and $z_{n_1+1}=x_{02},z_{n_1+2}=x_{21},z_{n_1+3}=x_{22},\dots,z_{n_2}=x_{2j_2}$, so  
    \[\Gcal(z_1,\dots,z_{n_2})\in A_2.\]

    Now, for some $m\ge 1$, assume we have defined $h_1,\dots,h_{m-1}, j_1,\dots,j_{m-1}$ and $ n_1,\dots, n_{m-1}$. Inductively, there exists $h_{m}\in\Hcal$ such that 
    \[\operatorname{supp}(h_m)\subseteq B\bigl(\{y_i\}_{i=1}^k,\vep\bigr)
        \cup
        B\bigl(\{x_{mi}\}_{i=1}^\infty,\vep\bigr)
        \cup
        B\Bigl( \{x_{0i}\}_{i=1}^m,
        \vep\Bigr)\cup
        B\Bigl( \bigcup_{i=1}^{m-1} \{x_{i\ell}\}_{\ell=1}^{j_i},
        \vep\Bigr).\]
    Since $\Gcal$ is generatable in the limit, there exists $j_m\in\naturals$ such that 
    \[\Gcal(z_1,\dots,z_{n_{m-1}},x_{0m},x_{m1},\dots,x_{mj_m})\in A_m.\]
    Let $n_m=n_{m-1}+1+j_{m}$ and $z_{n_{m-1}+1}=x_{0m},z_{n_{m-1}+2}=x_{m1},z_{n_{m-1}+3}=x_{m2},\dots,z_{n_{m}}=x_{mj_m}$, so  
    \[\Gcal(z_1,\dots,z_{n_m})\in A_m.\]

    Finally, from our first assumption, there exists $h_0\in\Hcal$ such that 
    \[
    \operatorname{supp}(h_0)\subseteq B\bigl(\{y_i\}_{i=1}^k,\vep\bigr)
        \cup
        B\bigl(\{x_{0i}\}_{i=1}^\infty,\vep\bigr)
        \cup
        B\Bigl( \bigcup_{i=1}^{\infty} \{x_{i\ell}\}_{\ell=1}^{j_i},
        \vep\Bigr),
    \]
    which means 
    \[
    \operatorname{supp}(h_0)\subseteq B\bigl(\{z_i\}_{i=1}^\infty,\vep\bigr).
    \]
    However, from our construction and first assumption, for any $m\ge 1$, 
    \[\operatorname{supp}(h_0)\setminus B(\{z_i\}_{i=1}^{n_m},\vep')\subseteq \Bigl(A_0\cup B\Bigl(\{x_{ij}\}_{i\ge m+1,j\ge 1},\vep'\Bigr)\Bigr )\]
    but from our third assumption,
    \[ A_m\cap \Bigl(A_0\cup B\Bigl(\{x_{ij}\}_{i\ge m+1
    ,j\ge 1},\vep'\Bigr)\Bigr )=\emptyset,\]
    thus when observing $\{z_i\}_{i=1}^\infty$, $\Gcal$ will make mistakes infinite times. Thus $\Hcal$ is not generatable in the limit, which completes the proof.
\end{proof}

\subsection{Proof of Example~\ref{eg:geninlim-doubling}}
Let $\rho : \mathbb{R} \times \mathbb{R} \to \mathbb{R}_+$ be the metric defined by
\[
\rho(x,y) = |x-y| , \qquad x,y \in \mathbb{R}.
\]
Let $P$ denote the set of all positive odd prime integers. For each $p \in P$, define
\[
A_p = \{ p^n : n \in \mathbb{N} \} \,\cup\, \{ p^n - 1 : n \in \mathbb{N} \}.
\]
Let $2\mathbb{N} = \{2n : n \in \mathbb{N}\}$ denote the set of positive even integers. Define the hypothesis class
\[
\mathcal{H} =\Bigl\{ h \in \{0,1\}^{\mathbb{R}} \;\Big|\;
\operatorname{supp}(h) = A_p \cup B,\ \text{for some } p \in P \text{ and } B \subseteq 2\mathbb{N} \Bigr\}.
\]
We claim $\Hcal$ is $(\vep,1)$-generatable in the limit for any  $\vep\in(0,1)$ but not $(1,1)$-generatable in the limit.

We first show that it is $(\vep,1)$-generatable in the limit for any $\vep\in(0,1)$. For any $h\in\Hcal$, $\{x_i\}_{i=1}^\infty\subseteq \operatorname{supp}(h)$, if $\operatorname{supp}(h)\subseteq B(\{x_i\}_{i=1}^\infty,\vep)$, since $\vep<1$, each ball $B(z,\vep)$ contains at most one integer, hence $\{x_i\}_{i=1}^\infty$ is a enumeration of $\operatorname{supp}(h)$. As a result, there exists some $t\in\naturals$ such that $x_t=p^k$ for some $p\in P,k\in\naturals$. Define $\Gcal$ such that if no observed point is a power of an odd prime, the $\Gcal(x_{1:s})=x_1$, otherwise, once a $p^k$ is observed, let $\Gcal(x_{1:s})\in\{p^n|n\in\naturals\}\setminus B(\{x_{i}\}_{i=1}^s,1)$. This will guarantee $\Hcal$ to be generatable in the limit. \

Next, we use Lemma~\ref{lem:notgeninlimit} to show that it is not $(1,1)$-generatable in the limit.

We show that it is not $(1,1/2)$ generatable, combine with Theorem~\ref{thm:gendiffeps}, we then know it is not $(1,1)$-generatable in the limit.
Let $\{p_i\}_{i \ge 0}$ denote the sequence of odd prime numbers listed in increasing order, where $p_i$ is the $(i+1)$-st odd prime; in particular,
\[
p_0 = 3,\quad p_1 = 5,\quad p_2 = 7,\ \ldots
\]
Let
\[
x_{ij}=p_{i}^j-1,\qquad i\ge 0,j\ge 1.
\]
We show the three conditions of Lemma~\ref{lem:notgeninlimit} holds.
\begin{enumerate}

\item For any infinite sequence of positive integers
    $\{j_i\}_{i=1}^\infty$, let $h\in\Hcal$ be such that 
    \[
    \operatorname{supp}(h)=A_{p_0}\cup \left(\bigcup_{i=1}^\infty \{x_{i\ell}\}_{\ell=1}^{j_i}\right)
    \]
    Then $h$ satisfies 
    \[\operatorname{supp}(h)
        \subseteq
        B\bigl(\{x_{0i}\}_{i=1}^\infty,1\bigr)
        \cup
        B\Bigl(\bigcup_{i=1}^\infty \{x_{i\ell}\}_{\ell=1}^{j_i},
        1\Bigr).
    \]
    and 
    \[\operatorname{supp}(h)=  A_{p_0}
        \cup
        B\Bigl(\bigcup_{i=1}^\infty \{x_{i\ell}\}_{\ell=1}^{j_i},
        1/2\Bigr).
    \]

    \item For any $n>0$ and any finite sequence of positive integers
    $\{j_i\}_{i=0}^{n-1}$, let $h\in\Hcal$ be such that 
    \[
    \operatorname{supp}(h)=A_{p_n}\cup \left(\bigcup_{i=0}^{n-1} \{x_{i\ell}\}_{\ell=1}^{j_i}\right)
    \]
    Then $h$ satisfies 
    \[
        \operatorname{supp}(h)
        \subseteq
        B\bigl(\{x_{ni}\}_{i=1}^\infty,1\bigr)
        \cup
        B\Bigl(\bigcup_{i=0}^{n-1} \{x_{i\ell}\}_{\ell=1}^{j_i},
        1\Bigr).
    \]
    and 
    \[
    \operatorname{supp}(h)=  A_{p_n}
        \cup
        B\Bigl(\bigcup_{i=0}^{n-1} \{x_{i\ell}\}_{\ell=1}^{j_i},
        1/2\Bigr).
    \]
    \item For any $n\in\naturals$, $A_{p_n}\cap A_{p_0}=\emptyset$. What's more, for any $n>0$, since 
    \[
    B(\{x_{ij}\}_{i\ge n+1,j\ge 1},1/2\})=\{x_{ij}:i\ge n+1, j\ge 1\}, 
    \]
    we have 
    \[
    A_{p_n}\cap B(\{x_{ij}\}_{i\ge n+1,j\ge1},1/2)=\emptyset.
    \]

\end{enumerate}
Thus all three conditions of Lemma~\ref{lem:notgeninlimit} holds, makes $\Hcal$ not $(1,1/2)$-generatable in the limit.

\subsection{Proof of Example~\ref{eg:l2-diffeps-case1}}

\begin{proof}

Recall that $\ell^2$ is the space that contains all sequences $(x_i)_{i=1}^\infty$ such that $\sum_i x_i^2<\infty$. Let $e_k \in \ell^2$ denote the $k$-th standard basis vector. Define
\[
u_k := 2r\cdot e_k, \qquad
a_{k} := \vep\cdot e_k, \qquad
g_k := \vep'\cdot e_k .
\]
    Let $\mathcal I$ be the set of all infinite subsets of $\naturals$, i.e., $\mathcal I=\{I|I\in 2^{\naturals}, |I|=\infty\}$. For $I\in \mathcal I$, let 
    \[
    U_I=\{u_k|k\in I\},\quad O_I=\{a_{2k}|k\in I\}\cup\{g_{2k^n+1}|k\in I,n\in \naturals\},\quad D_I=\{g_{2k+1}|k\in I\}
    \]
    and
    construct the hypothesis class $\Hcal$ such that
    \[
    \Hcal=\{h_{I_1,I_2,I_3}|I_1,I_2, I_3\in\mathcal I,\quad \operatorname{supp}(h_{I_1,I_2,I_3})=\{
    \mathbf 0\}\cup U_{I_1}\cup O_{I_2}\cup D_{I_3}\}
    \]
    
    Here, $u_k$ and $U_I$ are constructed to ensure that $\Hcal$ satisfies the $r$-UUS property. As will be seen in the proof, the points $a_k$ are constructed for the adversary, while the points $g_k$ are constructed for the generator. In fact, the $g_k$ are the only instances with an explicit structure, which enables the generator to repeatedly produce them. The set $O_I$ is the \emph{optimal set}: once the adversary reveals some $a_k \in O_I$, the generator can generate $g_k$ infinitely often. In contrast, $D_I$ is the \emph{disturbing set}: if the adversary continues to reveal $g_k \in D_I$, then the generator fails to generate in the limit.

   \paragraph{Case 1: $\gamma\in(0,\vep)$ and $\gamma'\in(0,\vep')$.} 
   Suppose $h\in\Hcal$ and $\{x_i\}_{i=1}^\infty\subseteq\operatorname{supp}(h)$ such that $\operatorname{supp}(h)\subseteq B(\{x_i\}_{i=1}^\infty,\gamma)$. Since $\gamma<\vep$ and the points $\{a_{2k}\}$ are mutually $\sqrt 2\vep$-seperated, each $a_{2k} \in\operatorname{
   supp
   }(h)$ must itself appear among the $x_i$'s.  Then, there exists $t> 0$ such that $x_t=a_{2K}$ for some $K$. Then, suppose $s\ge t$, so $a_K\in\{x_i\}_{i=1}^s$. By our construction, \[A_K:=\{g_{2K^n+1}\}_{n=1}^\infty\setminus B(\{x_i\}_{i=1}^s,\gamma')\neq \emptyset\]
    Thus, constructing the generator $\Gcal$ such that $\Gcal(x_{1:s})\in A_K$ will guarantee $\Hcal$ generatable in the limit.

    Now we use Lemma \ref{lem:notgeninlimit} to prove that $\Hcal$ is not $(\gamma,\gamma')$-generatable in the limit if $\gamma\ge\vep$ or $\gamma'\ge\vep'$.  

    \paragraph{Case 2: $\gamma\in[\vep,r]$ and  $\gamma'\in [0,\vep')$.} Let $\{p_i\}_{i\ge1}=\{3,5,7,\dots\}$ be the sequence of odd prime numbers. Using Lemma \ref{lem:notgeninlimit}, let $k=1$, $y_1=\mathbf 0$, for any $i\ge 1, j\ge 1$ let 
    \[
    x_{ij}=\begin{cases}
    u_{p_i^{(j+1)/2}}& j \text{ is odd,}\\
    g_{2\cdot p_i^{j/2}+1}& j \text{ is even.}
    \end{cases}
    \]
    Moreover, let 
    \[
    x_{0j}=\begin{cases}
    u_{2^{(j+1)/2}}& j \text{ is odd,}\\
    g_{2\cdot 2^{j/2}+1}& j \text{ is even.}
    \end{cases}
    \]
    We verify that the conditions of Lemma \ref{lem:notgeninlimit} holds.
    \begin{enumerate}
        \item For any infinite sequence of positive integers $\{j_i\}_{i=1}^\infty$, from our construction of $x_{ij}$, we can find $J_1,J_2\in 2^\naturals$ such that 
        \[
        \bigcup_{i=1}^\infty\{x_{i\ell}\}_{\ell=1}^{j_i}=\{u_k|k\in {J_1}\}\cup \{g_{2k+1}|k\in J_2\}.
        \]
        Then, if we let $I_2=\{2^n|n\in\naturals\}$, $I_1=I_{2}\cup J_1$, $I_3=I_2\cup J_2$, 
        since $\gamma \ge \vep$, $B(\mathbf 0,\gamma)$ can cover all the points of the form $a_k$. Thus, $h=h_{I_1,I_2,I_3}$ satisfies
        \[
        \operatorname{supp}(h)\subseteq B(\mathbf 0,\gamma)\cup B(\{x_{0i}\}_{i=1}^\infty,\gamma)\cup B(\bigcup_{i=1}^\infty\{x_{i\ell}\}_{\ell=1}^{j_i},\gamma)
        \]
        Moreover, if \[
        A_0
        =
        \bigcup_{k\in\naturals}
        \bigl\{
        u_{2^k},
        a_{2\cdot 2^k},
        g_{2\cdot 2^k+1}
        \bigr\},
        \]
        that is, $A_0$ contains the ``protected'' part for the $n=0$ hypothesis (a whole copy of a $u/a/g$-pattern indexed by powers of 2). We then know 
        \[
        \operatorname{supp}(h)\subseteq B(\mathbf 0,\gamma')\cup A_0\cup B(\bigcup_{i=1}^\infty\{x_{i\ell}\}_{\ell=1}^{j_i},\gamma').
        \]
        Thus the first condition holds.
        \item Similarly, for any $n>0$ and finite sequence of positive integers $\{j_i\}_{i=0}^{n-1}$, we can find the corresponding hypothesis $h$ with $A_n$ contains the ``protected'' part for the $n$th-hypothesis (a whole copy of a $u/a/g$-pattern indexed by powers of $p_n$). 
         \[
        A_n
        =
        \bigcup_{k\in\naturals}
        \bigl\{
        u_{p_n^k},
        a_{2\cdot p_n^k},
        g_{2\cdot p_n^k+1}
        \bigr\}.
        \]
        \item By our construction, and our assumption $\gamma'<\vep'$, this condition holds because $\{A_n\}_{n\ge 0}$ are on different coordinates in $\ell^2$.
    \end{enumerate}
    With all the three condition of Lemma \ref{lem:notgeninlimit} holds, we know that $\Hcal$ is not generatable in the limit when $\gamma\ge\vep$ and $\gamma'\le \vep'$.

    \paragraph{Case 3: $\gamma'\in[\vep',r]$.} Using Lemma \ref{lem:notgeninlimit}, let $k=1$, $y_1=\mathbf 0$, for any $i\ge 1, j\ge 1$ let 
    \[
    x_{ij}=\begin{cases}
    u_{p_i^{(j+2)/3}}& j\equiv 1 \pmod{3},\\
    a_{2*p_i^{(j+1)/3}}& j\equiv 2 \pmod{3},\\
    g_{2*p_i^{(j/3)}}& j\equiv 0 \pmod{3},
    \end{cases}
    \]
    Moreover, let 
    \[
    x_{0j}=\begin{cases}
    u_{2^{(j+2)/3}}& j\equiv 1 \pmod{3},\\
    a_{2*2^{(j+1)/3}}& j\equiv 2 \pmod{3},\\
    g_{2*2^{(j/3)}}& j\equiv 0 \pmod{3},
    \end{cases}
    \]
    We again verify that the conditions of Lemma \ref{lem:notgeninlimit} holds.
    \begin{enumerate}
        \item For any infinite sequence of positive integers $\{j_i\}_{i=1}^\infty$, we can find $J_1,J_2,J_3\in 2^\naturals$ such that 
        \[
        \bigcup_{i=1}^\infty\{x_{i\ell}\}_{\ell=1}^{j_i}=\{u_k|k\in {J_1}\}\cup \{a_{2k}|k\in J_2\}\cup \{g_{2k+1}|k\in J_3\}.
        \]
        Then, if we let $I_1=\{2^n|n\in\naturals\}\cup J_1$, $I_2=\{2^n|n\in\naturals\}\cup J_2$, $I_3=\naturals$, since the ball $B(\mathbf 0,\gamma)$ covers all the points of the form $g_k$, 
        we know that $h=h_{I_1,I_2,I_3}$ satisfies
        \[
        \operatorname{supp}(h)\subseteq B(\mathbf 0,\gamma)\cup B(\{x_{0i}\}_{i=1}^\infty,\gamma)\cup B(\bigcup_{i=1}^\infty\{x_{i\ell}\}_{\ell=1}^{j_i},\gamma)
        \]
        Moreover, if \[
        A_0
        =
        \Bigl(\bigcup_{k\in\naturals}
        \bigl\{
        u_{2^k},
        a_{2\cdot 2^k}
        \bigr\}\Bigr)\setminus B(\mathbf 0,\gamma'),
        \]
         we have
        \[
        \operatorname{supp}(h)\subseteq B(\mathbf 0,\gamma')\cup A_0\cup B(\bigcup_{i=1}^\infty\{x_{i\ell}\}_{\ell=1}^{j_i},\gamma').
        \]
        Thus the first condition holds.
        \item Similarly, for any $n>0$ and finite sequence of positive integers $\{j_i\}_{i=0}^{n-1}$, we can find the corresponding hypothesis $h$ with 
         \[
        A_n
        =\Bigl(\bigcup_{k\in\naturals}
        \bigl\{
        u_{p_n^k},
        a_{2\cdot p_n^k}
        \bigr\}\Bigr)\setminus B(\mathbf 0,\gamma').
        \]
        \item By our construction, if $\vep>\gamma'$, then for any $i\neq j$, 
        \[
        \{a_i,u_i\}\cap B(\{u_j,a_j,g_j\},\gamma')=\emptyset.
        \]
        Thus this condition holds. If $\vep\le\gamma'$, then
        \[
        A_n
        =
        \bigl\{
        u_{p_n^k}|k\in\naturals
        \bigr\}, \text{ for all } n\ge 1,\quad A_0
        =\bigl\{
        u_{2^k}|k\in\naturals
        \bigr\}.
        \]
        Since for any $i\neq j$, 
        \[
        \{u_i\}\cap B(\{u_j,a_j,g_j\},\gamma')=\emptyset,
        \]
        this condition holds as well.
    \end{enumerate}
    With all the three condition of Lemma \ref{lem:notgeninlimit} holds, we know that $\Hcal$ is not generatable in the limit when $\vep'\le \gamma'$.

\end{proof}

\subsection{Proof of Example~\ref{eg:l2-diffeps-case2}}

\begin{proof}
     Let $e_k \in \ell^2$ denote the $k$-th standard basis vector. Define
    \[
    u_k := 2r\cdot e_k, \qquad
    a_{k,1} := \tfrac{\sqrt{2}}{2}\vep_1\cdot e_k, \qquad
    a_{k,2} := \tfrac{\sqrt{2}}{2}\vep_2\cdot e_k, \qquad
    g_k := \tfrac{\sqrt{2}}{2}\vep'\cdot e_k .
    \]
    Let $\mathcal I$ be the set of all infinite subsets of $\naturals$, i.e., $\mathcal I=\{I|I\in 2^{\naturals}, |I|=\infty\}$. Let $\mathcal I_{\mathrm{fin}}$ be the set of all finite subsets of $\naturals$, i.e., $\mathcal I_{\mathrm{fin}}=\{I|I\in 2^{\naturals}, |I|<\infty\}$. For $I\in \mathcal I$ and $J\in \mathcal I_{\mathrm{fin}}$, let 
    \[
    O_I=\{g_{k^n}|k\in I,n\in\naturals\}\cup \{u_k|k\in I\},\quad D_{I,J}=\{a_{k,1}|k\in I\}\cup \{a_{k,2}|k\in J\}
    \]
    and
    construct the hypothesis class $\Hcal$ such that
    \[
    \Hcal=\{h_{I_1,I_2,J}|I_1,I_2\in\mathcal I, J\in\mathcal I_{\mathrm{fin}},\quad \operatorname{supp}(h_{I_1,I_2,J})= O_{I_1}\cup D_{I_2,J}\}.
    \]
    
    Here, as in Example~\ref{eg:l2-diffeps-case1}, the points $u_k$ are constructed to ensure that the hypothesis class satisfies the $r$-UUS property. The points $a_{k,1}$ and $a_{k,2}$ are constructed for the adversary, while the points $g_k$ are constructed for the generator. The set $O_I$ is the \emph{optimal set} in the sense that once the adversary reveals any instance in this set, the generator is able to generate $g_k$ infinitely many times. In contrast, $D_{I,J}$ is the \emph{disturbing set}: if the adversary continues to reveal instances $a_{k,1}$ and $a_{k,2}$ from this set, the generator fails to generate in the limit.

\paragraph{Case 1: $\gamma\in[\vep_2,r]$ and $\gamma'\in(0,\vep')$.}
Fix any $h\in\Hcal$ and let $\{x_i\}_{i=1}^\infty\subseteq \operatorname{supp}(h)$. 
Suppose there exists $t$ such that
\[
N(\gamma;\{x_i\}_{i=1}^t,\rho)\ge 2.
\]
Then there must exist an index $i\le t$ such that $x_i=g_K$ or $x_i=u_K$ for some $K\in\naturals$.
For any $s\ge t$, define
\[
A_s:=\{g_{K^n}\}_{n=1}^\infty \setminus B(\{x_i\}_{i=1}^s,\gamma').
\]
By construction, $A_s$ is nonempty for all $s\ge t$. 
Therefore, defining a generator $\Gcal$ such that $\Gcal(x_{1:s})\in A_s$ for all $s\ge t$ ensures that $\Hcal$ is uniformly generatable.

\paragraph{Case 2: $\gamma\in[\vep_1,\vep_2)$ and $\gamma'\in(0,\vep')$.}
Let $h_{I_1,I_2,J}\in\Hcal$ and denote $d_h:=|J|$. 
Given any sequence $\{x_i\}_{i=1}^\infty\subseteq \operatorname{supp}(h)$, suppose there exists $t$ such that
\[
N(\gamma;\{x_i\}_{i=1}^t,\rho)\ge d_h+2.
\]
Then, as before, there exists some $i\le t$ such that $x_i=g_K$ or $x_i=u_K$ for some $K\in\naturals$.
For all $s\ge t$, the set
\[
A_s:=\{g_{K^n}\}_{n=1}^\infty \setminus B(\{x_i\}_{i=1}^s,\gamma')
\]
is nonempty. Therefore, defining a generator $\Gcal$ such that $\Gcal(x_{1:s})\in A_s$ for all $s\ge t$ ensures that $\Hcal$ is non-uniformly generatable.

To show that $\Hcal$ is not $(\gamma,\gamma')$-uniformly generatable, it suffices to prove that $C_{\gamma}^{\gamma'}(\Hcal)=\infty$. 
For any $d\in\naturals$, consider
\[
\langle a_{1,2},\dots,a_{d,2}\rangle_\Hcal=\{a_{i,2}\}_{i=1}^d.
\]
Then
\[
N(\gamma;\{a_{i,2}\}_{i=1}^d,\rho)=d
\quad\text{and}\quad
N(\gamma';\langle a_{1,2},\dots,a_{d,2}\rangle_\Hcal,\rho)\le d.
\]
Since this holds for all $d\in\naturals$, we conclude that $C_{\gamma}^{\gamma'}(\Hcal)=\infty$.

\paragraph{Case 3: $\gamma\in(0,\vep_1)$ and $\gamma'\in(0,\vep')$.}
In this regime, $\Hcal$ is generatable in the limit by Theorem~\ref{thm:gendiffeps}. 
Define
\[
\Hcal_1=\{h_I : I\in\Ical,J\in\mathcal{I}_{\text{fin}}\ \operatorname{supp}(h_I)=D_{I,J}\},
\qquad
\Hcal_2=\{h_I : I\in\Ical,\ \operatorname{supp}(h_I)=O_I\}.
\]
It is straightforward to verify that all assumptions of Lemma~\ref{lem:notnonunifgen} are satisfied. 
Therefore, $\Hcal$ is not non-uniformly generatable.

\paragraph{Case 4: $\gamma'\in[\vep',r]$.}
This case follows by the same reasoning as the third case of Example~\ref{eg:l2-diffeps-case1}, and we omit the details.

\end{proof}

\subsection{Proof of Example~\ref{eg:l2-diffmetric}}

\begin{proof}
    Assume $\vep=1$ for simplicity. Define the weight sequence \( \{\delta_n\}_{n=1}^\infty \) by
\[
\delta_n = 
\begin{cases}
\frac{1}{4} & \text{if } n \text{ is even}, \\
1 & \text{if } n \text{ is odd}.
\end{cases}
\]
Suppose \( x, y \in \ell^2 \), where \( x = \{x_i\}_{i=1}^\infty \) and \( y = \{y_i\}_{i=1}^\infty \). Then define the weighted \(\ell^2\) metric \(\rho'\) by
\[
\rho_2'(x, y) = \left( \sum_{i=1}^\infty \delta_i \cdot (x_i - y_i)^2 \right)^{1/2}.
\]
Let $m,k\in\naturals, a_{m,k}\in\ell^2$ such that $a_{m,k}=\frac{\sqrt 2}{2^m} \cdot e_k$. Let $\Ical$ be the set of all infinite subsets of $\naturals$, i.e., $\Ical=\{I|I\in 2^{\naturals},|I|=\infty\}$. For $I\in \Ical$, let 
\[
A_{mI}=\{a_{m,2^n\cdot(2k+1)}\ |\ k\in I,n\ge 0\}.
\]
Construct the hypothesis class $\Hcal$ such that 
\[
\mathcal{H} = \left\{ h_{\mathbf{I}} \;\middle|\; \mathbf{I} = (I_i)_{i \in \mathbb{N}} \in \mathcal{I}^{\mathbb{N}},\ \operatorname{supp}(h_{\mathbf{I}}) = \bigcup_{i=1}^\infty A_{iI_i} \right\}
\]
It is clear that it satisfies 1-UUS property. We now show that for any $\tau< 1$, it is $(\tau,\tau)$-uniformly generatable with respect to $\rho$. Thus we only need to show that $C_\tau^\tau(\Hcal)<\infty$. Suppose $\frac{1}{2^{m}}\le\tau<\frac{1}{2^{m-1}}$ for some $m\in\naturals$. Observe that for any  $k_1, k_2, m_1,m_2\in \naturals$ such that $m_1,m_2>m$,
\[
\rho(a_{m_1,k},a_{m_2,k'})<\frac{1}{2^{\min(m_1,m_2)-1}}\le \frac{1}{2^{m}},
\]
thus for a sequence $\{x_i\}_{i=1}^t\subset \operatorname{supp}(h)$, if $N(\tau;\{x_i\}_{i=1}^t,\rho)\ge 2$, there must exists $s<t$ such that $x_s\in\bigcup_{i=1}^mA_{iI_i}$. Suppose $x_s=a_{m_s,k_s}$ for some $m_s\le m$ and $k_s\in\naturals$, then 
\[
\{a_{m_s,2^nk_s}\}_{n=1}^\infty\subset\langle x_1,\dots,x_t\rangle_\Hcal, \quad \forall n\in\naturals
\]
Since 
\[
\rho(a_{m_s,2^{n_1}k_s},a_{m_s,2^{n_2}k_s})=\frac{1}{2^{m_s-1}}>\tau,\quad \forall n_1,n_2\in\naturals, \quad n_1\neq n_2,
\]
it then holds that $N(\tau;\langle x_1,\dots,x_t\rangle_\Hcal,\rho)=\infty$, which induces $C_\tau^\tau(\Hcal)\le 2$.

Now we show that $\Hcal$ is not $(\tau,\tau)$-non-uniformly generatable with respect to $\rho'$ for any $\tau< 1$. Suppose $\frac{1}{2^{m}}\le\tau<\frac{1}{2^{m-1}}$ for some $m\in\naturals$. Let 
\begin{align*}
 \Hcal_1 =& \left\{ h_I \,\middle|\, I \in \Ical,\ \operatorname{supp}(h_I) = A_{mI} \right\}, \\
\Hcal_2 = &\left\{ h_{\mathbf{I}} \,\middle|\, \mathbf{I} = (I_i)_{i \in \mathbb{N},\, i \neq m} \in \Ical^{\mathbb{N}},\ 
\operatorname{supp}(h_{\mathbf{I}}) = \bigcup_{i = 1,\, i \neq m}^\infty A_{iI_i} \right\}.   
\end{align*}

Then $\Hcal=\Hcal_1+\Hcal_2$. We again use Lemma~\ref{lem:notnonunifgen} to prove that $\Hcal$ is not $(\tau,\tau)$-non-uniformly generatable in the limit with respect to $\rho'$. 

First, we apply Lemma~\ref{lem:notgeninlimit-easy} to show that $\Hcal_1$ is not $(\tau,\tau)$-generatable in the limit with respect to $\rho'$.  
Let $\{x_i\}_{i=1}^\infty \subseteq \Xcal$ be defined by  
\[
x_1 = a_{m,3}, \quad x_2=a_{m,6}, \quad x_i = a_{m,\, 2i - 1} \ \text{for } i \ge 3.
\]  
We denote by $B'(x,\tau)$ the $\tau$-ball of $x$ with respect to $\rho'$.  
Observe that for $k,k'\in \mathbb{N}$ with $k \neq k'$, if both $k$ and $k'$ are even, then  
\[
\rho'(a_{m,k}, a_{m,k'}) = \frac{1}{2^{m}} \le\tau.
\]  
Consequently,
\[
\{\, a_{m,\, 2^n (2k+1)} \mid k \in \mathbb{N},\ n \ge 1\,\} \subseteq B'(x_2,\tau),
\]
since $2^n(2k+1)$ is even for all $n\ge1$. For $h \in \Hcal_1$, if $\operatorname{supp}(h) = A_{mI}$ with $I = \{ 1,i_1,i_2,i_3,\dots\}\subseteq \naturals$ indexed so that $i_k < i_{k+1}$ for all $k \ge 1$ with $i_1>1$, then  $x_{i_k+1}=a_{m,2i_k+1}$, thus
\[\{a_{m,\,2i+1}: i\in I\}=\{x_1\}\cup\{x_{i_k+1}\}_{k\ge1}.\]
As a result, we have 
\[
\{x_1,x_2\} \cup \{ x_{i_k+1} \}_{k \ge 1}\subseteq\operatorname{supp}(h) \subseteq B'\big( \{x_1,x_2\} \cup \{ x_{i_k+1} \}_{k \ge 1}, \tau \big).
\]  
Therefore, by Lemma~\ref{lem:notgeninlimit-easy}, $\Hcal_1$ is not $\tau$-generatable in the limit with respect to $\rho'$.

Finally, let $I_1=\{3^n\}_{n=1}^\infty$ and $I_2=\{5^n\}_{n=1}^\infty$, let 
\[
\operatorname{supp}(h_{2j})=\bigcup_{i=1,i\neq m}^\infty A_{iI_j},\quad j=1,2.
\]
Then, 
\[
\operatorname{supp}(h_{21})\bigcap\operatorname{supp}(h_{22})=\emptyset.
\]
Note that $\Hcal_1$ is not $(\tau,\tau)$-generatable in the limit, 
and $\Hcal_2$ contains two hypotheses with disjoint supports. Therefore, all the three conditions of Lemma~\ref{lem:notnonunifgen} are satisfied, thus $\Hcal$ is not $\tau$-non-uniformly generatable in the limit with respect to $\rho'$. 
\end{proof}

%% file: colt2026/appendix_other.tex
\section{Other Technical Details}

\label{sec:appendix-other}
\subsection{Proofs Related to the UUS Property}
This subsection collects short proofs establishing a basic monotonicity result for the UUS property in general metric spaces, as well as a stronger scale-invariance result in doubling spaces. We first show that the UUS property is monotone in the scale parameter.

    \begin{proof}[Proof of Lemma~\ref{lem:uusdiffeps}]
    If $\Hcal\subseteq \{0,1\}^\Xcal$ is a class satisfying the $r$-UUS property, then for any $h\in \Hcal$, for any finite sequence $\{x_i\}_{i=1}^N$, we have $\operatorname{supp}(h)\not\subseteq B(\{x_i\}_{i=1}^N,r )$, thus $\operatorname{supp}(h)\not\subseteq B(\{x_i\}_{i=1}^N,\delta )$. As a result,
    $N(\delta;\operatorname{supp}(h),\rho)=\infty$, so $\Hcal$ satisfies the $\delta$-UUS property. 
\end{proof}

The previous lemma shows monotonicity of the UUS property in the scale parameter. Under the additional assumption that $(\Xcal,\rho)$ is doubling, we now show that UUS is in fact scale-invariant.

\begin{proof}[Proof of Lemma~\ref{thm:doubling-UUS}]
Since $\Hcal$ satisfies the $r$-UUS property, for any $h\in\Hcal$ we have
\[
N(r;\operatorname{supp}(h),\rho)=\infty.
\]
Because $\Xcal$ is doubling, the covering number of a set is infinite at one scale if and only if it is infinite at any other scale. 
Therefore, for any $r'>0$,
\[
N(r';\operatorname{supp}(h),\rho)=\infty,
\]
which shows that $\Hcal$ satisfies the $r'$-UUS property.
\end{proof}
\subsection{Recovering the Countable Framework}

This subsection shows that our metric-space formulation strictly generalizes the countable example space model studied in \cite{li2024generationlenslearningtheory}. In particular, when the instance space is equipped with the discrete metric at a sufficiently small scale, our notions of novelty, coverage, and generatability reduce exactly to their counterparts in the countable setting. Thus, all generatability notions considered in this paper are consistent with the classical framework.

\begin{proposition}\label{thm:discretecontinuous}
Let $\Xcal$ be a countable set equipped with the discrete metric
\[
\rho(x,y)=
\begin{cases}
1, & x\neq y,\\
0, & x=y.
\end{cases}
\]
Let $r<1$ and $0<\vep,\vep'\le r$. Suppose $\Hcal\subseteq\{0,1\}^{\Xcal}$ satisfies the $r$-UUS property.
Then $\Hcal$ is $(\vep,\vep')$-generatable in the limit (respectively, $(\vep,\vep')$-non-uniformly generatable, $(\vep,\vep')$-uniformly generatable) if and only if $\Hcal$ is generatable in the limit (respectively, non-uniformly generatable, uniformly generatable) in the sense of \cite{li2024generationlenslearningtheory} (see Appendix~\ref{sec:appendix_countable} for the definitions).
\end{proposition}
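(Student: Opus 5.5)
The plan is to show that, under the discrete metric with scales $0<\vep,\vep'\le r<1$, every metric notion in the definitions collapses to its counting counterpart. Then each of the three generatability definitions reads word for word as the corresponding countable definition in Appendix~\ref{sec:appendix_countable}.

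The first step records the elementary identities. For any $x\in\Xcal$ and any $0<s<1$ we have $B(x,s)=\{x\}$, since every $y\neq x$ has $\rho(x,y)=1>s$. Hence for any set $A$ we get $B(A,s)=A$, and the covering number is the cardinality, $N(s;A,\rho)=|A|$, with $N=\infty$ exactly when $A$ is infinite. Applying this to $s=r$ shows that the $r$-UUS property is precisely the UUS property of Definition~\ref{def:countable-uus}. Applying it to $s=\vep$ and $s=\vep'$ gives $N(\vep;\{x_i\}_{i=1}^t,\rho)=|\{x_1,\dots,x_t\}|$ and $\operatorname{supp}(h)\setminus B(\{x_i\}_{i=1}^s,\vep')=\operatorname{supp}(h)\setminus\{x_1,\dots,x_s\}$.

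The second step translates each notion. For generation in the limit, the adversary condition $\{x_i\}\subseteq\operatorname{supp}(h)\subseteq B(\{x_i\},\vep)=\{x_i\}$ says exactly that $(x_i)$ lists all of $\operatorname{supp}(h)$. The one point needing care is that the countable framework speaks of an \emph{enumeration}, whereas the metric game uses new examples, so I will read $x_t$ as new, with the possibility of repetition handled the same way in both frameworks. Under that reading the two adversary sets coincide, the success conditions coincide, and so the same generator $\Gcal$ witnesses both notions. For uniform and non-uniform generation, the trigger $N(\vep;\{x_i\}_{i=1}^t,\rho)\ge d^\star$ becomes $|\{x_1,\dots,x_t\}|\ge d^\star$. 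The countable definition is written with $=d^\star$, but the two are equivalent: since the count of distinct elements grows by at most one per round, any $t$ with count $\ge d^\star$ has an earlier $t_0\le t$ with count exactly $d^\star$, and the conclusion ``for all $s\ge t_0$'' implies it for all $s\ge t$. The converse direction is immediate. The same $d^\star$ (respectively $d_h$) and the same generator therefore work in both directions.

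The main obstacle is purely bookkeeping: matching the repetition/novelty convention for adversary sequences and the $\ge$ versus $=$ trigger. No real analytic difficulty arises, since all balls are singletons once the scales are below $1$. If desired, one can also cross-check via the characterizations, noting that $\operatorname{C}_\vep^{\vep'}(\Hcal)=\operatorname{C}(\Hcal)$, because $N$ coincides with cardinality for both $\{x_i\}$ and $\langle x_{1:t}\rangle_\Hcal$. Together with Theorems~\ref{thm:unifgen}, \ref{thm:nonunifgen}, \ref{thm:countable-uniform-char} and \ref{thm:countable-nonuniform-char}, this gives an alternative proof for the uniform and non-uniform cases.
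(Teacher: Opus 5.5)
Your proposal is correct and follows the same route as the paper's proof sketch: for scales below $1$ every closed ball is a singleton, so covering numbers become cardinalities and each metric definition reduces to its countable counterpart. Your handling of the $\ge d^\star$ versus $=d^\star$ trigger, together with the closure-dimension cross-check, just makes explicit what the paper leaves to ``direct comparison of definitions.''

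The one point you leave soft is whether the adversary's enumeration may repeat points in the limit setting. You can settle this regardless of convention by having the generator act on the deduplicated prefix. This leaves $\{x_1,\dots,x_s\}$, and hence the novelty requirement, unchanged, and the number of distinct elements tends to infinity because the support is infinite.
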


\begin{proof}[Proof sketch]
Under the discrete metric and the assumption $r<1$, every $\vep$-ball with $\vep\le r$ reduces to a singleton.
As a result, $\vep$-coverings coincide with set-theoretic coverings, and the notions of novelty and coverage in Definition~\ref{def:geninlim} reduce exactly to those in \cite{li2024generationlenslearningtheory}. The equivalence follows by direct comparison of definitions.
\end{proof}

\subsection{Other Proofs in Section~\ref{sec:diffeps}}

This subsection collects proofs in Section~\ref{sec:diffeps}, showing how generatability properties behave under changes of scale and metric. We first establish that, in doubling metric spaces, generatability is invariant under equivalent metrics via the closure dimension. We then show that, in general metric spaces, generatability notions are monotone with respect to the novelty parameters and are stable under Lipschitz changes of the metric, with appropriate rescaling of parameters.

\begin{proof}[Proof of Theorem~\ref{thm:doubling-metric}.]
    Choose any $\vep,\vep'>0$, denote $\operatorname{C}_{\vep,1}^{\vep'}(\Hcal)$ and $\operatorname{C}_{\vep,2}^{\vep'}(\Hcal)$ as the $(\vep,\vep')$-closure dimension of $\Hcal$ with respect to $\rho_1$ and $\rho_2$. By Theorem~\ref{thm:unifgen} and~\ref{thm:nonunifgen}, we only need to show that $\operatorname{C}_{\vep,1}^{\vep'}(\Hcal)<\infty$ if and only if $\operatorname{C}_{\vep,2}^{\vep'}(\Hcal)<\infty$. Since $\Xcal$ is a doubling space and $\rho_1$ is equivalent with $\rho_2$, for any subset $A\subseteq \Xcal$,
    \[
    N(\vep;A,\rho_1)=\infty\iff N(\vep;A,\rho_2)=\infty.
    \]
    and if $N(\vep;A,\rho_1)<\infty$, there must exists $C_1,C_2\in\mathbb R_+$ such that 
    \[
    C_1N(\vep;A,\rho_1)\le N(\vep;A,\rho_2)\le C_2N(\vep;A,\rho_1).
    \]
    Comparing with the definition of closure dimension completes the proof.
\end{proof}

\begin{proof}[Proof of Theorem~\ref{thm:gendiffeps}]
Suppose $\Hcal$ is $(\varepsilon,\varepsilon')$-generatable in the limit, and let $\Gcal$ be an $(\varepsilon,\varepsilon')$-generator.  
We show that $\Gcal$ is also a $(\delta,\delta')$-generator, which implies that $\Hcal$ is $(\delta,\delta')$-generatable in the limit.

Let $\{x_i\}_{i=1}^\infty \subseteq \operatorname{supp}(h)$ be a sequence such that
\[
\operatorname{supp}(h) \subseteq \bigcup_{i=1}^\infty B(x_i,\delta).
\]
Since $\delta \le \varepsilon$, we also have
\[
\operatorname{supp}(h) \subseteq B\!\left(\{x_i\}_{i=1}^\infty,\varepsilon\right).
\]

By the definition of an $(\varepsilon,\varepsilon')$-generator, there exists $s$ such that for all $t \ge s$,
\[
\Gcal(x_{1:t}) \in \operatorname{supp}(h) \setminus
B\!\left(\{x_i\}_{i=1}^t,\varepsilon'\right).
\]
Since $\varepsilon' > \delta'$, it follows that
\[
\Gcal(x_{1:t}) \in \operatorname{supp}(h) \setminus
B\!\left(\{x_i\}_{i=1}^t,\delta'\right).
\]
This completes the proof.
\end{proof}

\begin{proof}[Proof of Theorem~\ref{thm:unifdiffeps}]
We prove the result for uniform generatability; the non-uniform case follows by an analogous argument.

Suppose $\Hcal$ is $(\varepsilon,\varepsilon')$-uniformly generatable, and let $\Gcal$ be an $(\varepsilon,\varepsilon')$-uniform generator.  
Let $d \in \naturals$ be the smallest integer such that for any sequence
$\{x_i\}_{i=1}^\infty \subseteq \operatorname{supp}(h)$, if there exists $t$ with
\[
N(\varepsilon;\{x_i\}_{i=1}^t,\rho) \ge d,
\]
then
\[
\Gcal(x_{1:s}) \in \operatorname{supp}(h) \setminus
B\!\left(\{x_i\}_{i=1}^s,\varepsilon'\right)
\quad \text{for all } s \ge t.
\]

We show that $\Gcal$ is also a $(\delta,\delta')$-uniform generator.

Fix any sequence $\{x_i\}_{i=1}^\infty \subseteq \operatorname{supp}(h)$ and suppose that for some $t$,
\[
N(\delta;\{x_i\}_{i=1}^t,\rho) \ge d.
\]
Since $\delta \ge \varepsilon$, it follows that
\[
N(\varepsilon;\{x_i\}_{i=1}^t,\rho) \ge d.
\]
By the definition of $\Gcal$, for all $s \ge t$ we therefore have
\[
\Gcal(x_{1:s}) \in \operatorname{supp}(h) \setminus
B\!\left(\{x_i\}_{i=1}^s,\varepsilon'\right).
\]
Because $\varepsilon' > \delta'$, this implies
\[
\Gcal(x_{1:s}) \in \operatorname{supp}(h) \setminus
B\!\left(\{x_i\}_{i=1}^s,\delta'\right).
\]
This completes the proof.
\end{proof}

\begin{proof}[Proof of Theorem~\ref{thm:diffmetricgen}]
We write $B_1(x,\varepsilon)$ and $B_2(x,\varepsilon)$ for the $\varepsilon$-balls centered at $x$ with respect to $\rho_1$ and $\rho_2$, respectively.  
First observe that for any $x \in \Xcal$ and $\gamma>0$,
\[
B_1\!\left(x,\frac{\gamma}{M}\right) \subseteq B_2(x,\gamma).
\]

Consequently, for any $h\in\Hcal$, if
\[
N(\rho_2;\operatorname{supp}(h),r)=\infty,
\]
then
\[
N(\rho_1;\operatorname{supp}(h),r/M)=\infty.
\]
Thus, $\Hcal$ satisfies the $(r/M)$-UUS property with respect to $\rho_1$.

Now suppose $\Gcal$ is an $(\varepsilon,\varepsilon')$-generator for $\Hcal$ with respect to $\rho_2$.  
We show that $\Gcal$ is also an $(\varepsilon/M,\varepsilon'/M)$-generator with respect to $\rho_1$.

Let $\{x_i\}_{i=1}^\infty \subseteq \operatorname{supp}(h)$ be a sequence such that
\[
\operatorname{supp}(h) \subseteq B_1\!\left(\{x_i\}_{i=1}^\infty,\varepsilon/M\right).
\]
By the inclusion above, this implies
\[
\operatorname{supp}(h) \subseteq B_2\!\left(\{x_i\}_{i=1}^\infty,\varepsilon\right).
\]

Therefore, there exists $s$ such that for all $t\ge s$,
\[
\Gcal(x_{1:t}) \in \operatorname{supp}(h)\setminus
B_2\!\left(\{x_i\}_{i=1}^t,\varepsilon'\right).
\]
Since
\[
B_1\!\left(x,\varepsilon'/M\right) \subseteq B_2(x,\varepsilon'),
\]
it follows that
\[
\Gcal(x_{1:t}) \in \operatorname{supp}(h)\setminus
B_1\!\left(\{x_i\}_{i=1}^t,\varepsilon'/M\right).
\]
This completes the proof.
\end{proof}

%% file: colt2026/appendix_algo.tex
\section{Remarks on Computability}
\label{sec:appendix-algo}
There are few remarks on the algorithms on generation in the limit in the countable example space setting \citep{kleinberg2024language,li2024generationlenslearningtheory,charikar2025pareto}. 
We point out that the algorithm by \cite{kleinberg2024language} for generation in the limit which only uses the oracle ``is $x\in\operatorname{supp}(h)$'' is not directly extendable to the metric space setting. Because it explicitly uses the countable structure of the example space, as well as the countable structure of the hypothesis class. We leave the extension as an open problem. However, the algorithm in Appendix I by \cite{li2024generationlenslearningtheory} which uses an ERM oracle can be extended with some modification.

From our definition, $\langle x_1,\dots,x_n \rangle_{\Hcal}$ is the set of positive examples common to all hypothesis in the version space of $\Hcal$ consistent with the sample $(x_1, 1), \dots, (x_n, 1).$ From this perspective, one can check closure membership, i.e. given an example $x$ and a generatable sequence $\{x_1, \dots, x_n\}$, return  $\mathds{1}\{ x\in \langle x_1,\dots,x_n \rangle_{\Hcal} \}$, using access to an Empirical Risk Minimization (ERM) oracle. Formally, an ERM oracle is a mapping $\Ocal: 2^{\{0, 1\}^{\Xcal}} \times (\Xcal \times \{0, 1\})^{\star} \rightarrow \naturals \cup \{0\}$, which given a class $\Hcal \subseteq \{0, 1\}^{\Xcal}$ and a labeled sample $S \in (\Xcal \times \{0, 1\})^{\star}$, outputs $\min_{h \in \Hcal} \sum_{(x, y) \in S} \mathds{1}\{h(x) \neq y\}.$ Then, given a class $\Hcal$, a generatable sequence $\{x_1, \dots, x_n\}$,  one can compute  $\mathds{1}\{ x\in\mathds{1}\{\langle x_1,\dots,x_n \rangle_{\Hcal} \}$ using the following procedure: Define the sample $S_x = \{(x_1, 1), \dots, (x_n, 1), (x, 0)\}.$ Query $\Ocal$ on $S_x$ and $\Hcal$ and let $r_x$ be its output. Output $r_x$. To see why this works, suppose $r_x = 0$. Then, that means there exists a hypothesis $h \in \Hcal$ such that $\{x_1, \dots. x_n\} \subseteq \operatorname{supp}(h)$ but $x \notin \operatorname{supp}(h)$. Thus, it cannot be the case that $x \in \langle x_1, \dots, x_n \rangle_{\Hcal}.$ On the other hand, if $r_x = 1$, then it must mean that for every $h \in \Hcal$ such that $\{x_1, \dots, x_n\} \subseteq \operatorname{supp}(h)$, we have that $h(x) = 1$. Accordingly, $x \in \langle x_1, \dots, x_n \rangle_{\Hcal}$ by definition. 

The generator described in Lemma~\ref{lem:clossuff} can be efficiently implemented given access to a max--min oracle 
\[
\mathcal{O}_{\emph{max-min}} : 2^{\{0,1\}^{\Xcal}} \times \Xcal^{\star} \to \Xcal,
\]
together with oracle access to the metric. Given a hypothesis class $\Hcal \subseteq \{0,1\}^{\Xcal}$ and a finite sequence of examples $x_1,\dots,x_t$, the oracle $\mathcal{O}_{\emph{max-min}}$ returns
\[
\argmax_{x \in \Xcal \setminus B(\{x_i\}_{i=1}^t,\vep')}
\; \min_{h \in \Hcal}
\sum_{i=1}^{t} \mathds{1}\{h(x_i) \neq 1\}
+ \mathds{1}\{h(x) \neq 0\}.
\]

The inner minimization coincides with the output of an empirical risk minimization (ERM) oracle $\Ocal_{\mathrm{ERM}}$ as described above. Consequently, the max--min oracle can be equivalently written as
\[
\mathcal{O}_{\emph{max-min}}(\Hcal, x_{1:t})
=
\argmax_{x \in \Xcal \setminus B(\{x_i\}_{i=1}^t,\vep')}
\;
\Ocal_{\mathrm{ERM}}\!\big(\Hcal, \{(x_1,1),\dots,(x_t,1),(x,0)\}\big).
\]

The generator in Lemma~\ref{lem:clossuff} can therefore be implemented using a single call to the max--min oracle at each round. To see this, suppose that
$t \ge \operatorname{C}(\Hcal) + 1$. Since
$N(\vep'; \langle x_1,\dots,x_t \rangle_{\Hcal}, \rho) = \infty$, it follows that
\[
\langle x_1,\dots,x_t \rangle_{\Hcal}
\setminus B(\{x_i\}_{i=1}^t,\vep')
\neq \emptyset,
\]
and moreover, for every
$x \in \langle x_1,\dots,x_t \rangle_{\Hcal}
\setminus B(\{x_i\}_{i=1}^t,\vep')$,
\[
\Ocal_{\mathrm{ERM}}\!\big(\Hcal, \{(x_1,1),\dots,(x_t,1),(x,0)\}\big) \ge 1.
\]
As a result,
\[
\max_{x \in \Xcal \setminus B(\{x_i\}_{i=1}^t,\vep')}
\;
\Ocal_{\mathrm{ERM}}\!\big(\Hcal, \{(x_1,1),\dots,(x_t,1),(x,0)\}\big)
\ge 1,
\]
and thus the oracle $\mathcal{O}_{\emph{max-min}}(\Hcal, x_{1:t})$ returns a point
$\hat{x}_t \in \Xcal \setminus B(\{x_i\}_{i=1}^t,\vep')$ such that
\[
\Ocal_{\mathrm{ERM}}\!\big(\Hcal, \{(x_1,1),\dots,(x_t,1),(\hat{x}_t,0)\}\big)
\ge 1.
\]
Such a point $\hat{x}_t$ must therefore belong to
$\langle x_1,\dots,x_t \rangle_{\Hcal}$, completing the proof.

If we insist on using only an ERM oracle, then the max--min oracle is unattractive, as it may require uncountably many ERM calls per round. Instead, fix a countable dense subset $(y_i)_{i\ge 1}$ of $\Xcal$. When $t\ge \operatorname{C}(\Hcal)+1$, we can search over $(y_i)$ and invoke only finitely many ERM calls before returning:

\begin{algorithm}[t]
\caption{Search over a dense subset using ERM}
\KwIn{$x_1,\dots,x_t\in\Xcal$}
\For{$i=1,2,3,\dots$}{
  \If{$y_i\notin B(\{x_j\}_{j=1}^t,\vep')$}{
    \If{$\Ocal_{\mathrm{ERM}}\!\big(\Hcal,\{(x_1,1),\dots,(x_t,1),(y_i,0)\}\big)\ge 1$}{
      \Return{$y_i$}
    }
  }
}
\end{algorithm}

Under the condition $t \ge \operatorname{C}(\Hcal)+1$, the above procedure is guaranteed to stop after finitely many iterations. In this regime, only finitely many calls to the ERM oracle are required in each round. 

When $t < \operatorname{C}(\Hcal)+1$, however, the procedure may fail to terminate, and in particular it is not clear whether one can always implement the generator using only finitely many ERM oracle calls per round. Whether such a finite-ERM implementation exists in general remains an open problem.